\newtheorem*{remark}{Remark}
\newtheorem{theorem}{Theorem}[section]
\newtheorem{assumption}{Assumption}
\newtheorem{corollary}{Corollary}[theorem]
\newtheorem{notation}{Notation}
\begin{document}

\title{Quantized Adam with Error Feedback}

\author{Congliang Chen}
\email{chcoli2007@163.com}
\affiliation{%
  \institution{The Chinese University of Hong Kong, Shenzhen}
  \city{Shenzhen}
  \state{Guangdong}
  \country{China}
  \postcode{518000}
}

\author{Li Shen}
\authornote{Li Shen is the corresponding author.}
\affiliation{%
  \institution{JD Explore Academy}
  \city{Beijing}
  \country{China}}
\email{mathshenli@gmail.com}

\author{Haozhi Huang}
\affiliation{%
  \institution{Tencent AI Lab}
  \city{Shenzhen}
  \state{Guangdong}
  \country{China}}
\email{matthzhuang@tencent.com}

\author{Wei Liu}
\affiliation{%
  \institution{Tencent}
  \city{Shenzhen}
  \state{Guangdong}
  \country{China}}
\email{wl2223@columbia.edu}

\renewcommand{\shortauthors}{Chen and Shen, et al.}

\begin{abstract}
In this paper, we present a distributed variant of adaptive stochastic gradient method for training deep neural networks in the parameter-server model. To reduce the communication cost among the workers and server, we incorporate two types of quantization schemes, i.e., gradient quantization and weight quantization, into the proposed distributed Adam. Besides, to reduce the bias introduced by quantization operations, we propose an error-feedback technique to compensate for the quantized gradient. Theoretically, in the stochastic nonconvex setting, we show that the distributed adaptive gradient method with gradient quantization and error-feedback converges to the first-order stationary point, and that the distributed adaptive gradient method with weight quantization and error-feedback converges to the point related to the quantized level under both the single-worker and multi-worker modes. At last, we apply the proposed distributed adaptive gradient methods to train deep neural networks. Experimental results demonstrate the efficacy of our methods.
\end{abstract}

\keywords{Adam, Quantized Communication, Error Feedback}

\maketitle

\section{Introduction}

Recently, deep neural networks \cite{lecun2015deep,goodfellow2016deep} achieve high performances in many applications, such as computer vision \cite{krizhevsky2012imagenet,he2016deep}, natural language processing \cite{devlin2018bert}, speech recognition \cite{amodei2016deep}, reinforcement learning \cite{mnih2015human,silver2016mastering}, etc. However, a huge deep neural network contains millions of parameters, so its training procedure requires a large amount of training data \cite{deng2009imagenet,wu2019tencent}, which may not be stored in a single machine. In addition, due to some privacy issues \cite{kairouz2019advances,yang2019federated}, all the training data cannot be sent to a single machine but can be stored in different devices. Therefore, how to accelerate the training process by using multiple machines over large-scale data or distributed data has already been a hot topic in both industrial and academic communities \cite{kraska2013mlbase,li2014scaling,xing2015petuum,liu2017distributed}.

An efficient approach to tackle this problem is to develop distributed training algorithms for the huge neural networks \cite{dean2012large}. Most of the distributed algorithms can be summarized into two categories: one is the parameter-server \cite{smola2010architecture} model (or called centralized model) shown in Fig.~\ref{cen}, and the other is the decentralized model \cite{lian2017can} shown in Fig.~\ref{decen}.  For the centralized model in Fig.~\ref{cen}, there are one parameter server and multiple workers. In an update iteration, all workers report the update vectors to the parameter server. After gathering all the update vectors, the parameter server will update the parameters and send the parameters to all workers. While for the decentralized model in Fig.~\ref{decen}, there are $n$ nodes working simultaneously. In each update iteration, each worker computes its update vector respectively and communicates with its neighbors, and then updates its own parameters. When we use a distributed training algorithm such as distributed stochastic gradient descent \cite{li2014scaling} in either the centralized model or the decentralized model, plenty of update vectors have to be communicated among different devices. Then, a communication issue emerges for huge networks.

To accelerate the distributed training process of huge deep learning models, we propose a new distributed adaptive stochastic gradient method with gradient quantization, weight quantization, and error-feedback in the parameter server model, as shown in Fig.~\ref{cen}. In what follows, we elaborate on each component used in the proposed method:

\begin{figure}
\begin{minipage}{0.45\columnwidth}

\centerline{\includegraphics[width=\columnwidth]{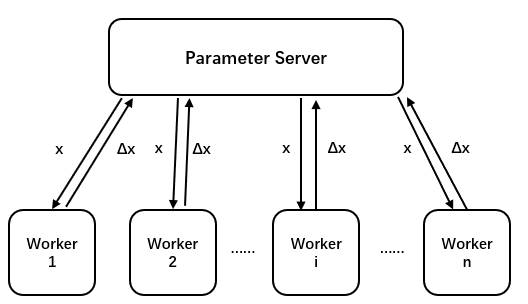}}
\vspace{-0.4cm}
\caption{The Centralized distributed model.}
\label{cen}
\end{minipage}
\begin{minipage}{0.4\columnwidth}
\centerline{\includegraphics[width=0.94\columnwidth]{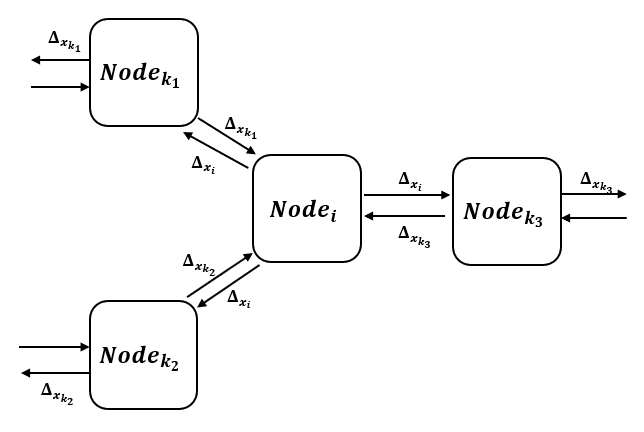}}
\vspace{-0.4cm}
\caption{The Decentralized distributed model.}
\label{decen}
\end{minipage}
\vspace{-0.6cm}
\end{figure}

{\bf Quantization}. Note that both gradient quantization and weight quantization are introduced in the proposed method to reduce the communication cost among the workers and the parameter server. Specifically, weight quantization is performed on the parameter server and the quantized weights are then broadcast to all the workers. Weight quantization is introduced because of the consideration of limited storage in edge devices. Meanwhile, gradient quantization is performed on each worker and then the quantized gradients are reported to the server. Thanks to the double quantization schemes, the communication cost can be largely reduced. In addition, for some resource-limited devices, storage is another issue. Weight quantization can also be used to reduce the deep neural network model size efficiently \cite{han2015deep,DBLP:journals/corr/ZhouNZWWZ16,rastegari2016xnor}. Especially, in federated learning, a distributed device may be smartphones or Internet of things devices, which may encounter both the storage issue and the communication issue. Thus, the weight quantization and gradient quantization schemes can jointly solve these two issues.

{\bf Adaptive learning rate.} To ease the labor of tuning learning rate, we also adopt the adaptive learning rate as \cite{kingma2014adam,hinton2012neural,duchi2011adaptive,zou2018weighted,reddi2019convergence,chen2018convergence} in the proposed method. Here, the adaptive learning rate is calculated by a similar definition to those in RMSProp \cite{hinton2012neural} and Adam \cite{kingma2014adam}, except that the noisy gradients are estimated with quantized weights. Moreover, to guarantee the convergence of the proposed method, we set the exponential moving average parameter in estimating the adaptive learning rate the same as that used in Zou et al.\cite{zou2019sufficient}.

{\bf Error-feedback.} In the proposed method, an error-feedback technique is leveraged to reduce the bias introduced by gradient quantization. The error-feedback technique is also performed on each worker. Actually, the error-feedback technique is motivated by Karimireddy et al. \cite{KarimireddyError} by introducing an additional term as the compensation term for the quantized gradient. However, due to the introduced adaptive learning rate and momentum, the compensation term is slightly different from that in Karimireddy et al. \cite{KarimireddyError}. To the best of our knowledge, this is the first work that simultaneously employs the adaptive learning rate and the error-feedback technique.

Besides, we establish the convergence rate of the proposed algorithm. In the stochastic nonconvex setting, we show that the distributed adaptive stochastic gradient method with gradient quantization and error-feedback converges to the first-order stationary point, and that the distributed adaptive stochastic gradient method with weight quantization and error-feedback converges to the point related to the quantized level under both the single-worker and multi-worker modes.
At last, we apply the proposed distributed adaptive method to train deep learning models, such as LeNet \cite{lecun1998gradient} on the MNIST dataset \cite{lecun1998gradient} and ResNet-101 \cite{he2016deep} on the CIFAR100 dataset \cite{krizhevsky2009learning}, respectively. The experimental results demonstrate the effectiveness of weight quantization, gradient quantization, and the error-feedback technique working in concert with distributed adaptive stochastic gradient method. Here, we summarize our contributions in three-fold:
\begin{itemize}
    \item We propose a distributed variant of the adaptive stochastic gradient method to train deep learning models. The proposed approach exploits gradient quantization, weight quantization, and the error-feedback technique to accelerate the training process.

    \item We establish the convergence rates of the proposed distributed adaptive stochastic gradient algorithms with weight quantization, gradient quantization, and error-feedback in the nonconvex stochastic setting under the single-worker and multi-worker environments, which are far different from the stochastic gradient setting because adaptive learning rate is introduced into the algorithms.

    \item We apply the proposed algorithms to train deep learning models including LeNet and ResNet-101. The experiments demonstrate the efficacy of the proposed algorithms.
\end{itemize}

\section{Related Works}

In this section, we enumerate several works that are most related to this work. We split the related works into two categories: distributed quantized algorithms and adaptive learning rate.

\subsection{Distributed Quantized Algorithms}

{
The quantization functions can be divided into two categories: unbiased quantization functions and biased quantization functions. For unbiased quantization functions, Wen et al. \cite{wen2017terngrad} showed that with an unbiased ternary quantization function, the distributed stochastic gradient descent algorithm can almost surely converge to a minimum point. Jiang et al. \cite{jiang2018linear} showed with an unbiased quantization function, the centralized distributed stochastic gradient descent algorithm can converge with convergence rate $\mathcal{O}(1/\sqrt{T})$. Besides, Hou et al.\cite{hou2018analysis} showed that in the stochastic convex setting, with gradient quantization solely, the algorithm they proposed will converge to the optimal solution, while with weight quantization the algorithm will converge to the point near the optimal solution which is related to the weight quantization level. However, they can only deal with the unbiased quantization function, which limits the use of both algorithms and theorems.}

{
For biased quantization functions, the main issue is to eliminate the biased error during optimization. A common technique to tackle this issue is error-feedback, where each worker stores the error of the quantization and adds the error term to the next communication before quantization.
Based on the decentralized model in Fig. \ref{decen}, Tang et al. \cite{tang2019deepsqueeze} and Koloskova et al. \cite{koloskova2019decentralized} showed that distributed stochastic gradient descent with quantized communication and error-feedback can converge to a stationary point in the nonconvex setting with convergence rate $\mathcal{O}(1/\sqrt{T})$.}
Based on the centralized model in Fig. \ref{cen}, Zhou et al. \cite{DBLP:journals/corr/ZhouNZWWZ16} and Wu et al. \cite{wu2018training} showed that few bits or integer networks can be trained empirically. {Zheng et al. \cite{zheng2019communication} showed the convergence of the algorithm with a block quantization function in the nonconvex setting.}

Among the above-mentioned algorithms,  Hou et al. \cite{hou2018analysis} is the most related work to our proposed algorithm.  However, their proposed algorithms do not adopt unbiased quantization on gradients. Moreover, they do not incorporate momentum acceleration terms into their algorithm to accelerate its piratical performance. In addition, the convergence analysis in Hou et al. \cite{hou2018analysis} is merely restricted to the stochastic convex setting, which makes their algorithm heuristic when it is applied to train deep learning models. By contrast, the convergence rates of our algorithms are established in the more difficult nonconvex setting. In this work, we first extend the error-feedback technique to adaptive stochastic gradient method (Adam) and then establish its convergence in the nonconvex setting, {and we compare the most related works in Table \ref{compare_table}.}

\begin{table}[H]
\vspace{-0.2cm}
{{
\small
    \centering
    \begin{tabular}{|c|c|c|c|c|c|}
    \hline
         Method & gradient & weight & convexity &communication & convergence \\
         &quantization&quantization&&&rate\\
         \hline
         Wen et al. \cite{wen2017terngrad}&  unbiased & no & nonconvex & centralize& $\mathcal{O}(1/\sqrt{T})$\\
         Zheng et al. \cite{zheng2019communication}&  biased & no & nonconvex& centralize&$\mathcal{O}(1/\sqrt{T})$\\
         Tang et al. \cite{tang2019deepsqueeze}&  biased & no & nonconvex& decentralize&$\mathcal{O}(1/\sqrt{T})$\\
         Koloskova et al. \cite{koloskova2019decentralized} &  biased & no & nonconvex& decentralize&$\mathcal{O}(1/\sqrt{T})$\\
         Hou et al. \cite{hou2018analysis} & unbiased & yes & convex & centralize&$\mathcal{O}(1/\sqrt{T})$\\
         Ours & biased & yes & nonconvex & centralize& $\mathcal{O}(1/\sqrt{T})$\\
         \hline
    \end{tabular}
    \caption{Comparison among different methods.}
    \label{compare_table}}}
    \vspace{-0.8cm}
\end{table}

\subsection{Adaptive Learning Rate}

Adaptive learning rate, as a popular optimization technique for training deep learning models, has attracted much attention. Numerous papers have studied the convergences of adaptive stochastic gradient methods, such as AdaGrad \cite{duchi2011adaptive,mcmahan2010adaptive}, RMSprop \cite{hinton2012neural}, Adam \cite{kingma2014adam}, and AMSGrad \cite{reddi2019convergence}. Besides the counterexample of divergence when using the Adam algorithm in the convex case in \cite{reddi2019convergence}, various works have proposed different conditions to make Adam-type methods converge to first-order stationary points. For example, \cite{ward2018adagrad,li2018convergence,zou2018weighted} establish global convergence of AdaGrad in the nonconvex setting;
Reddi et al. \cite{reddi2019convergence} check the difference between learning rates of two adjacent iterations and proposes a new variant called AMSGrad;
Chen et al. \cite{chen2018convergence} establish the convergence of AMSGrad in the nonconvex setting;
Basu et al. \cite{basu2018convergence} show that Adam converges when a full-batch gradient is used;
Zhou et al. \cite{zhou2018adashift} check the independence between gradient square and learning rate to ensure the convergence for the counterexamples in \cite{reddi2019convergence},
and Zou et al. \cite{zou2019sufficient} check the parameter setting to give a sufficient condition to guarantee the convergences of both Adam and RMSProp.
Also, Reddi et al. \cite{reddi2020adaptive} introduce distributed stochastic adaptive gradient methods in the centralized model and Nazari et al. \cite{nazari2019dadam} introduce a decentralized adaptive gradient method. In this paper, we propose a distributed variant of Adam method by incorporating quantization and error-feedback techniques. We show that the proposed method converges to a saddle point with quantized update vectors, and will be close to a saddle point when we quantize the weights of a certain network.

\section{Main Results}
Throughout this paper, we consider the following stochastic nonconvex optimization:
\begin{equation}\label{minimize}
\small
  \min_{x\in \mathbb{R}^{n}}\ f\left(x\right) = \mathbb{E}_{\xi\sim \mathbb{P}}\,\big[f\left(x,\xi\right)\big],
\end{equation}
where $\xi$ is a random variable with an unknown distribution $\small \mathbb{P}$,  and $\small f:\mathbb{R}^{n}\to \left(-\infty,+\infty\right)$ is a lower bounded nonconvex smooth function, i.e., $\small f^* = \min_{x\in R^d} f\left(x\right) >-\infty$.

Due to the absence of a probability distribution $\mathbb{P}$ of the random variable $\xi$, the access of exact gradient $\small \nabla{f}\left(x\right)$ may be impossible, which leads to constructing an unbiased noisy estimation $\small\nabla{f}\left(x_{t},\xi_{t}\right)$ for full gradient $\small\nabla{f}\left(x_{t}\right)$ at point $x_{t}$ with the given sampled sequence $\{\xi_{t}\}$. For convenience, we denote $\small g_{t}$ as the abbreviation, i.e., $\small g_{t}:=\nabla{f}\left(x_{t},\xi_{t}\right)$.  Moreover, throughout this paper, we assume that objective function $f$ is a gradient Lipschitz function. These two requirements on gradient estimation are summarized into the following assumption:
\begin{assumption}\label{ass1}
Gradient $\small\nabla{f}$ is $L$-Lipschitz continuous, i.e., $\small\|\nabla\!f\left(x\right)\!-\!\nabla\! f\left(y\right)\|\!\leq\! L\|x\!-\!y\|$. Moreover, the noisy gradient estimation $g_t$ is upper bounded and unbiased, i.e.,  $\small E[g_t] \!=\! \nabla\! f\left(x_t\right)$ and $\small \|g_t\| \!\leq\! G$.
\end{assumption}



Below, we introduce the gradient quantization operator $\small Q_g\left(\cdot\right)$ and weight quantization operator $\small Q_x\left(\cdot\right)$ that satisfy the following assumptions, respectively.
\begin{assumption}
\label{ass2}
{
Let $\small Q_g\left(\cdot\right)$ be the gradient quantization operator defined by Definition 1. We assume that there exists a constant $\small \delta_g \geq 0$ such that the inequality holds $\small\|g-Q_g\left(g\right)\| \leq (1-\delta_g) \|g\|$}.
\end{assumption}
\begin{assumption}
\label{ass3}
Let $\small Q_x\left(\cdot\right)$ be the weight quantization operator defined by Definition 1. First, we assume that the noisy gradient estimation at point $x_{t}$ is an unbiased estimation of $\small \nabla f\left(Q_x\left(x_t\right)\right)$, i.e., $\small E[g_t] = \nabla f\left(Q_x\left(x_t\right)\right)$. In addition, we assume that there exists  $\small \delta_x \geq 0$ such that $\small \|x-Q_x\left(x\right)\| \leq \delta_x$.
\end{assumption}

Assumption 1 is commonly used in analyzing adaptive stochastic type methods \cite{chen2018convergence,reddi2019convergence,duchi2011adaptive}. Especially, for the gradient quantization and weight quantization, the Lipschitz continuity conditions are used for bounding the error term introduced by quantization. For the weight quantization, the unbiased estimation condition $\small E[g_t] = \nabla\! f\!\left(Q_x\left(x_t\right)\right)$ is used, which has also been used in Hou et al. \cite{hou2018analysis}. All the detailed proof procedures are placed in Section \ref{proof_details}.

\subsection{Single-Machine Analysis}

\begin{wrapfigure}{r}{0.50\textwidth}
\vspace{-0.8cm}
\begin{minipage}{0.48\columnwidth}
\begin{algorithm}[H]
\small
\caption{\ Quantized Generic Adam}
\label{alg1}
\!\!{\bf Parameters:} Choose parameters $\{\alpha_t\}$, $\{\beta_t\}$, $\{\theta_t\}$,  and $x_1 \in \mathbb{R}^d$, quantization functions $Q_x\left(\cdot\right)$, and $Q_g\left(\cdot\right)$. Set initial values $m_0=0, v_0=0$, and $e_1=0$.
\vspace{-0.5cm}
\begin{algorithmic}[1]
\FOR{$t =1,2,\cdots,T$}
\STATE Sample a stochastic gradient of $f\left(Q_x\left(x_t\right)\right)$ as $g_t$;
\STATE $v_t = \theta_tv_{t-1} + \left(1-\theta_t\right)g_t^2$;
\STATE $m_t = \beta_tm_{t-1} + \left(1-\beta_t\right)g_t$;
\STATE $x_{t+1} = x_t - Q_g\left(\alpha_t\frac{m_t}{\sqrt{v_t+\epsilon}}+e_t\right)$;
\STATE $e_{t+1} = \alpha_t\frac{m_t}{\sqrt{v_t+\epsilon}}+e_t- Q_g\left(\alpha_t\frac{m_t}{\sqrt{v_t+\epsilon}}+e_t\right)$;
\ENDFOR
\end{algorithmic}
\end{algorithm}
\vspace{-0.7cm}
\end{minipage}
\end{wrapfigure}

In this subsection, we first present the quantized Generic Adam with weight quantization, gradient quantization, and the error-feedback technique working on a single machine. Then, {to show the influence on convergence related to gradient quantization or weight quantization,} we establish its convergence rate with either gradient quantization or weight quantization.


 Algorithm \ref{alg1} unifies weight quantization, gradient quantization, and the error-feedback technique into Adam, in which $\small Q_{x}\left(\cdot\right)$ denotes the weight quantization operator, $\small Q_{g}\left(\cdot\right)$ denotes the gradient quantization operator, and $e_{t}$ denotes the error-feedback term. In addition, to establish the convergence rate of Algorithm \ref{alg1} in the nonconvex setting, we make the following assumptions on momentum parameter $\beta_{t}$, exponential moving average parameter $\theta_{t}$, and base learning rate $\alpha_{t}$.
\begin{assumption}
\label{ass4}
 Assume that momentum parameter $\beta_{t}$, exponential moving average parameter $\theta_{t}$, and base learning rate $\alpha_{t}$ satisfy
 $\beta_{t}\!\in\! [0,\beta]$ with $0 \!<\! \beta \!<\! 1$, $\theta_t \!=\! 1\!-\!{\theta}/{t}$, and
 $\small \alpha_t \!=\! {\alpha}/{\sqrt{t}}$, respectively. Furthermore, we denote $\small \gamma \!=\! \beta/\theta'$ and $\small C_1 \!=\! \prod_{j=1}^N \!\frac{\theta_j}{\theta'}$ with $\small N \!=\! \max\{j|\theta_j\!<\!\theta'\}$ and $\theta'$ satisfies $\beta^2 \!<\! \theta' < 1$.
\end{assumption}

The above assumption on the hyperparameters is used to establish the convergence of adaptive stochastic type gradient method like Zou et al. \cite{zou2019sufficient}. In this paper, we use a simplified setting for momentum parameter $\beta_{t}$, exponential moving average parameter $\theta_{t}$, and the base learning rate $\alpha_{t}$ to simplify the convergence analysis, compared with the sufficient condition in Zou et al. \cite{zou2019sufficient}.

\subsubsection{Gradient Quantization}

Let $\small Q_x\left(x\right) = x$. The quantized generic Adam reduces to be {generic} Adam with gradient quantization and error-feedback. Below, we present the convergence rate of Algorithm \ref{alg1} in the single-machine mode.

\begin{theorem}
\label{T1}
 Let $\small \{x_{t}\}$ be the point generated by Algorithm \ref{alg1} with $\small Q_x\left(x\right) = x$.
 In addition, let $x_\tau^T$ represent random variable $x_\tau$ with $\tau$ taking from $\small \{1,2,\ldots,T\}$ with the same probability. If Assumptions \ref{ass1}, \ref{ass2}, \ref{ass4} further hold, the convergence result of Algorithm \ref{alg1} holds as follows:
\begin{equation}
\small
E\left[\|\nabla f\left(\left(x_\tau^T\right)\right)\|^2\right] \leq \frac{C+C'\sum_{t=1}^T\frac{1}{t}}{\sqrt{T}},
\end{equation}
where $\small C = \frac{2\sqrt{G^2+\epsilon d}}{\left(1-\beta\right)\alpha}\left(f\left(x_1\right)-f^*\right)$,
$\small C' = \frac{2\sqrt{G^2+\epsilon d}C_3}{\left(1-\beta\right)\alpha}$, and  $\small C_3 = \frac{1}{\sqrt{C_1}\left(1-\sqrt{\gamma}\right)}\left(\frac{L\left(2-\delta_g\right)G^2\alpha^2}{\epsilon\delta_g}+C_2\theta\right)$.
\end{theorem}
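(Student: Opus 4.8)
The plan is to carry out the standard error-feedback argument on an auxiliary \emph{virtual} iterate and then plug the clean generic-Adam analysis of \cite{zou2019sufficient} into it. Define $\tilde x_t := x_t - e_t$. Since $e_1 = 0$ we have $\tilde x_1 = x_1$, and combining the updates for $x_{t+1}$ and $e_{t+1}$ in Algorithm~\ref{alg1} gives
\[
  \tilde x_{t+1} = x_{t+1} - e_{t+1} = x_t - \big(\Delta_t + e_t\big) = \tilde x_t - \Delta_t,
\]
with $\Delta_t = \alpha_t m_t/\sqrt{v_t+\epsilon}$ as in Lemma~1. Thus the gradient quantizer $Q_g$ disappears from the trajectory of $\tilde x_t$, which now follows an \emph{unquantized} generic-Adam recursion; the only price is that $f$ is being decreased at $\tilde x_t$ rather than at $x_t$, and $\|\tilde x_t - x_t\| = \|e_t\|$ is controlled in aggregate by Lemma~1 together with Assumption~2.

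First I would apply the standard descent inequality for $L$-smooth functions along $\{\tilde x_t\}$,
\[
  f(\tilde x_{t+1}) \le f(\tilde x_t) - \langle \nabla f(\tilde x_t), \Delta_t\rangle + \tfrac{L}{2}\|\Delta_t\|^2,
\]
and then write $\nabla f(\tilde x_t) = \nabla f(x_t) + \big(\nabla f(\tilde x_t) - \nabla f(x_t)\big)$, the second piece being bounded in norm by $L\|e_t\|\,\|\Delta_t\|$ via Lipschitzness of $\nabla f$. Summing over $t = 1,\dots,T$, telescoping with $f(\tilde x_1) = f(x_1)$ and $f(\tilde x_{T+1}) \ge f^*$, and invoking Lemma~1, the two ``error'' groups $\sum_t \tfrac L2\|\Delta_t\|^2$ and $\sum_t L\|e_t\|\|\Delta_t\|$ together contribute at most $L\big(\tfrac12 + \tfrac{1-\delta_g}{\delta_g}\big)\sum_t\|\Delta_t\|^2 = \tfrac{L(2-\delta_g)}{2\delta_g}\sum_t\|\Delta_t\|^2$. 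By Assumption~1, $\|g_t\|\le G$ forces $\|m_t\|\le G$ (a convex combination of the $g_s$'s) and $v_{t,i}\le G^2$, hence $\|\Delta_t\|^2 \le \alpha_t^2\|m_t\|^2/\epsilon \le \alpha^2 G^2/(\epsilon t)$, so this block is $O\!\big(\sum_t 1/t\big)$ and is the source of the $\tfrac{L(2-\delta_g)G^2\alpha^2}{\epsilon\delta_g}$ ingredient of $C_3$. What must still be extracted from the ``good'' side is a genuine descent term from $\sum_{t=1}^T \langle \nabla f(x_t), \Delta_t\rangle$.

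The analysis of $\sum_t \mathbb{E}\langle \nabla f(x_t), \Delta_t\rangle$ is the crux, and this is where I expect the real work, since it is exactly the convergence analysis of generic Adam in the stochastic nonconvex setting. Three obstacles appear: (i) $v_t$ depends on the current noisy gradient $g_t$, so the conditional expectation of $g_t/\sqrt{v_t+\epsilon}$ is not $\nabla f(x_t)/\sqrt{v_t+\epsilon}$ — this is handled by passing to a one-step-predictable surrogate for the denominator and charging an $O(1/t)$ correction, which is why $\theta_t = 1-\theta/t$ is needed; (ii) $m_t$ is a geometrically decaying weighted sum of past stochastic gradients at different iterates, which is unrolled and re-summed by a double-sum/geometric-series estimate — here Assumption~4 enters in full, $\gamma = \beta/\theta' < 1$ guaranteeing convergence of $\sum_k(\sqrt\gamma)^k$ and $C_1$ absorbing the finitely many early indices with $\theta_j < \theta'$, which is precisely the mechanism of \cite{zou2019sufficient} producing the $\tfrac{1}{\sqrt{C_1}(1-\sqrt\gamma)}$ prefactor and a residual constant $C_2$; and (iii) $\|\sqrt{v_t+\epsilon}\|^2 = \|v_t\|_1 + \epsilon d \le G^2 + \epsilon d$, so applying Cauchy--Schwarz against the gradient still leaves a usable denominator $\sqrt{G^2+\epsilon d}$. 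Assembling these yields a bound of the form
\[
  \sum_{t=1}^T \mathbb{E}\langle \nabla f(x_t), \Delta_t\rangle \;\ge\; \frac{(1-\beta)\alpha}{2\sqrt{G^2+\epsilon d}}\sum_{t=1}^T \frac{\mathbb{E}\|\nabla f(x_t)\|^2}{\sqrt t} - R_T,
\]
where $R_T = O\!\big(\sum_t 1/t\big)$ collects lower-order terms.

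Finally I would combine everything: the telescoped descent inequality becomes
\[
  \frac{(1-\beta)\alpha}{2\sqrt{G^2+\epsilon d}}\sum_{t=1}^T \frac{\mathbb{E}\|\nabla f(x_t)\|^2}{\sqrt t} \;\le\; f(x_1)-f^* + C_3\sum_{t=1}^T\frac1t,
\]
with all residuals packaged into $C_3 = \tfrac{1}{\sqrt{C_1}(1-\sqrt\gamma)}\big(\tfrac{L(2-\delta_g)G^2\alpha^2}{\epsilon\delta_g}+C_2\theta\big)$. Since $1/\sqrt t \ge 1/\sqrt T$ for every $t\le T$ and $\tau$ is uniform on $\{1,\dots,T\}$, the left-hand side is at least $\tfrac{(1-\beta)\alpha}{2\sqrt{G^2+\epsilon d}}\sqrt T\,\mathbb{E}\|\nabla f(x_\tau^T)\|^2$; dividing through gives exactly the asserted inequality with $C = \tfrac{2\sqrt{G^2+\epsilon d}}{(1-\beta)\alpha}(f(x_1)-f^*)$ and $C' = \tfrac{2\sqrt{G^2+\epsilon d}}{(1-\beta)\alpha}C_3$. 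Note that Assumption~3 is not used here because $Q_x$ is the identity; only Assumptions~1, 2, and 4 are invoked, as claimed. The genuinely hard part is step three — faithfully importing the Adam-type estimate of \cite{zou2019sufficient} and checking that the error-feedback perturbation $e_t$ only adds an $O(1/t)$-summable term rather than interfering with the momentum/denominator bookkeeping.
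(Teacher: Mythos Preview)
Your high-level architecture matches the paper: define $\tilde x_t = x_t - e_t$, apply the descent lemma along $\{\tilde x_t\}$, split off $\langle\nabla f(\tilde x_t)-\nabla f(x_t),\Delta_t\rangle\le L\|e_t\|\|\Delta_t\|$, control $\sum_t\|e_t\|\|\Delta_t\|$ by Lemma~1, and import the Zou--Shen analysis for the Adam inner product. The step you flag as ``genuinely hard'' is, however, where your sketch has a real gap: you treat the error-feedback contribution as coming \emph{only} from the descent inequality and then propose to run the clean Adam bound on $\sum_t\langle\nabla f(x_t),\Delta_t\rangle$ separately. But the Zou--Shen recursion proceeds via
\[
\langle\nabla f(x_t),\Delta_t\rangle = \tfrac{\beta\alpha_t}{\sqrt{\theta_t}\alpha_{t-1}}\big(\langle\nabla f(x_{t-1}),\Delta_{t-1}\rangle + \langle\nabla f(x_t)-\nabla f(x_{t-1}),\Delta_{t-1}\rangle\big) + \cdots,
\]
and the cross term forces $\|x_t-x_{t-1}\| = \|Q_g(\Delta_{t-1}+e_{t-1})\|\le(2-\delta_g)(\|\Delta_{t-1}\|+\|e_{t-1}\|)$, not $\|\Delta_{t-1}\|$. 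So the quantization error re-enters \emph{inside} the momentum recursion and is therefore amplified by the geometric-series factor $\tfrac{1}{\sqrt{C_1}(1-\sqrt\gamma)}$. This is precisely why the paper's constant $C_3$ has $\tfrac{L(2-\delta_g)G^2\alpha^2}{\epsilon\delta_g}$ sitting \emph{inside} that prefactor; your decomposition (descent-error plus clean Adam residual $R_T$) would not reproduce it.

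The paper's fix is to fold the error terms into the recursive quantity from the start: set $M_t=\mathbb E[\langle\nabla f(x_t),\Delta_t\rangle+L(2-\delta_g)\|\Delta_t\|^2+L(2-\delta_g)\|e_t\|\|\Delta_t\|]$ (loosening the $\tfrac L2$ and $L$ from the descent step up to $L(2-\delta_g)$ so the coefficients match), then show $M_t\le\tfrac{\beta\alpha_t}{\sqrt{\theta_t}\alpha_{t-1}}M_{t-1}+N_t-\tfrac{1-\beta}{2}\mathbb E\|\nabla f(x_t)\|_{\hat\eta_t}^2$ with $N_t=L(2-\delta_g)\|\Delta_t\|^2+L(2-\delta_g)\|e_t\|\|\Delta_t\|+C_2(1-\theta_t)$. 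Unrolling the recursion and applying Lemma~1 to $\sum_t N_t$ then yields exactly $C_3\sum_t 1/t$. Your outline would go through once you recognize that the error-feedback terms must ride along inside $M_t$ rather than being peeled off beforehand.
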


This theoretical result shows that with gradient quantization and error-feedback the proposed algorithm can converge to the first-order stationary point in the nonconvex setting. In addition,  the convergence rate is of the same order as the original Adam in the nonconvex setting \cite{zou2019sufficient}. Besides, paying attention to the constant $C_3$, it can be seen that the constant factor appearing in the convergence rate is related to the quantized level.

\begin{corollary}
\label{C1}
For given precision $\xi$, {when we choose hyperparameters $\small \alpha_t = 1/\sqrt{T}$ and $\theta_t = 1-1/T$,} to achieve $\small \mathbb{E}[\|\nabla f\left(x_\tau^T\right)\|^2] \leq \xi$, we have $\small T=O\left(\frac{1}{\xi^2}\right)$.
\end{corollary}

\begin{remark}
This result shows Algorithm \ref{alg1} with gradient quantization and error feedback technique can convergence in the same order as some popular method such as stochastic gradient descent and vanilla Adam.
\end{remark}

\subsubsection{Weight Quantization}

In this subsection, we set $\small Q_g\left(g\right) = g$ in Algorithm \ref{alg1}. The proposed quantized generic Adam reduces to {generic} Adam with the weight quantization. In this situation, to establish the convergence rate of Algorithm \ref{alg1} are given below.

\begin{theorem}
\label{T2}
 Let $\{x_{t}\}$ be the point generated by Algorithm \ref{alg1} with $\small Q_g\left(g\right) = g$.
 In addition, let $x_\tau^T$ represent random variable $x_\tau$ with $\tau$ taking from $\small \{1,2,\ldots,T\}$ with the same probability. If Assumptions \ref{ass1}, \ref{ass3}, \ref{ass4} further hold, the convergence result of Algorithm \ref{alg1} holds as follows:
\[
\small
E\left[\|\nabla f\left(Q_x\left(x_\tau^T\right)\right)\|^2\right]\leq \frac{C_5+C_6\sum_{t=1}^T\frac{1}{t}}{\sqrt{T}} + C_7,
\]
where $\small C_5 = \frac{2\sqrt{G^2+\epsilon d}}{\left(1-\beta\right)\alpha}\left(f\left(x_1\right) - f^*\right)$,
     $\small C_6 = \frac{2\sqrt{G^2+\epsilon d}}{\left(1-\beta\right)\alpha \sqrt{C_1}\left(1-\sqrt{\gamma}\right)}\left(\frac{LG^2\alpha^2}{\epsilon}+C_2\theta\right)$,
     $\small C_7  = \frac{8\delta_x\sqrt{G^2+\epsilon d} LG}{\left(1-\beta\right)\sqrt{\epsilon C_1}\left(1-\sqrt{\gamma}\right)}$, and $\small C_2$ is defined in Theorem \ref{T1}.
\end{theorem}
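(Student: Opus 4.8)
The plan is to reduce this to the Adam analysis already used for Theorem~\ref{T1}. First observe that when $Q_g$ is the identity, the error-feedback recursion collapses: $e_{t+1}=\Delta_t+e_t-Q_g(\Delta_t+e_t)=0$, so $e_t=0$ for all $t$ and $\tilde x_t=x_t$. Hence Algorithm~\ref{alg1} is just the plain iteration $x_{t+1}=x_t-\Delta_t$ with $\Delta_t=\alpha_t m_t/\sqrt{v_t+\epsilon}$, the only nonstandard feature being that $g_t$ is a stochastic gradient of $f$ at the \emph{quantized} iterate, so by Assumption~3, $E[g_t\mid x_t]=\nabla f(Q_x(x_t))$. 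Starting from $L$-smoothness,
\[
f(x_{t+1}) \le f(x_t) - \langle \nabla f(x_t),\Delta_t\rangle + \tfrac{L}{2}\|\Delta_t\|^2,
\]
I would split $\langle\nabla f(x_t),\Delta_t\rangle = \langle\nabla f(Q_x(x_t)),\Delta_t\rangle + \langle\nabla f(x_t)-\nabla f(Q_x(x_t)),\Delta_t\rangle$, so that the first piece is driven by exactly the quantity $g_t$ estimates.

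For the term $-\langle\nabla f(Q_x(x_t)),\Delta_t\rangle + \tfrac{L}{2}\|\Delta_t\|^2$ I would rerun the argument of \cite{zou2019sufficient} with $\nabla f(Q_x(x_t))$ in the role of the true gradient: decompose the momentum $m_t$ into its constituent past gradients and control the cross-terms via $\gamma=\beta/\theta'$ and $C_1$; use $\theta_t=1-\theta/t$ to treat $\sqrt{v_t+\epsilon}$ as nearly constant between consecutive steps, at the cost of the $C_2\theta$ term; and use $\|v_t\|_1\le G^2$ together with $\beta_t^2\le\theta_t$ (valid for $t>N$) to obtain the lower bound $E\langle\nabla f(Q_x(x_t)),\Delta_t\rangle \gtrsim \frac{(1-\beta)\alpha_t}{\sqrt{G^2+\epsilon d}}\,E\|\nabla f(Q_x(x_t))\|^2$. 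Summing over $t$, telescoping $f$, bounding $\sum_t\tfrac{L}{2}\|\Delta_t\|^2\le\frac{LG^2\alpha^2}{2\epsilon}\sum_t\tfrac1t$, and taking expectations yields exactly the $C_5$ and $C_6$ part of the claimed bound.

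It remains to absorb the new cross term. By $L$-smoothness and Assumption~3, $|\langle\nabla f(x_t)-\nabla f(Q_x(x_t)),\Delta_t\rangle| \le L\|x_t-Q_x(x_t)\|\,\|\Delta_t\| \le L(1-\delta_x)D\|\Delta_t\|$, using $\|x_t\|\le D$. Since $\|m_t\|\le G$ and $\sqrt{v_{t,i}+\epsilon}\ge\sqrt\epsilon$, we have $\|\Delta_t\|\le \alpha_t G/\sqrt\epsilon$, hence $\sum_{t=1}^T\|\Delta_t\| \le \frac{\alpha G}{\sqrt\epsilon}\sum_{t=1}^T\tfrac1{\sqrt t} \le \frac{2\alpha G}{\sqrt\epsilon}\sqrt T$. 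This injects an $O(\sqrt T)$ term into the right-hand side which, after dividing through by the $\Theta(\sqrt T)$ coefficient multiplying $E\|\nabla f(Q_x(x_\tau^T))\|^2$ on the left (so that the factors of $\alpha$ and $\sqrt T$ cancel), leaves precisely the constant additive error $C_7$; the fact that it does not decay with $T$ is exactly why weight quantization only guarantees convergence to a neighbourhood whose radius scales with $1-\delta_x$.

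The main obstacle is not the new quantization term — a routine $L\|x_t-Q_x(x_t)\|\,\|\Delta_t\|$ estimate — but the faithful transplant of the Adam convergence proof: tracking the momentum bookkeeping (the $\gamma$, $C_1$ geometric sums) and the slowly-varying-$v_t$ correction ($C_2\theta$) while uniformly replacing $\nabla f(x_t)$ by $\nabla f(Q_x(x_t))$, and verifying that the key lower bound on $\langle\nabla f(Q_x(x_t)),m_t/\sqrt{v_t+\epsilon}\rangle$ survives the substitution. Since $Q_x(x_t)$ appears only \emph{inside} $\nabla f$ while every Adam-specific manipulation acts on $m_t$, $v_t$, $\Delta_t$ rather than on the evaluation point of the gradient, this substitution should go through with only cosmetic changes.
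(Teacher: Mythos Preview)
Your plan is correct and coincides with the paper's proof: split $\langle\nabla f(x_t),\Delta_t\rangle$ into $\langle\nabla f(Q_x(x_t)),\Delta_t\rangle$ plus the quantization cross term, bound the latter by $L(1-\delta_x)D\|\Delta_t\|$ and sum to $O(\sqrt T)$ via $\sum_t\|\Delta_t\|\le 2\alpha G\sqrt T/\sqrt\epsilon$, and rerun the Adam recursion on the former with $\nabla f(Q_x(x_t))$ as the ``true'' gradient. The one point you undersell as ``cosmetic'' is the recursive step itself: in the paper's Lemma~\ref{Mt2} (analogue of Lemma~\ref{Mt}) one must bound $\langle\nabla f(Q_x(x_t))-\nabla f(Q_x(x_{t-1})),\Delta_{t-1}\rangle$, and here $\|Q_x(x_t)-Q_x(x_{t-1})\|\le\|x_t-x_{t-1}\|+2(1-\delta_x)D$ rather than just $\|\Delta_{t-1}\|$, so the weight-quantization radius also enters \emph{inside} the $M_t$ recursion, not only at the initial smoothness step---this is why the paper's $C_7$ picks up the geometric factor $1/(\sqrt{C_1}(1-\sqrt\gamma))$. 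The fix is exactly the triangle inequality you already use elsewhere, so this does not change your argument, only the bookkeeping of constants.
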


This theoretical result shows that with weight quantization the algorithm will converge to the point related to the quantized level, and when we don't use quantization the proposed algorithm will converge to the first-order stationary point by setting $\delta_{x}=0$ directly. In addition, weight quantization on stochastic type gradient methods has already been considered in Khaled et al.\cite{khaled2019gradient}, in which the authors also showed that weight quantized SGD converges to a point near the global optimum. However, the analysis of weight quantized SGD in Khaled et al. \cite{khaled2019gradient} is merely restricted to the strongly convex setting.

\begin{corollary}
\label{C2}
For given precision $\xi$, {when we choose hyperparameters $\small \alpha_t = 1/\sqrt{T}$ and $\theta_t = 1-1/T$,}  to achieve $\small \mathbb{E}[\|\nabla f\left(Q_x\left(x_\tau^T\right)\right)\|^2] \leq C_7'+\xi$, we have $T=O\left(\frac{1}{\xi^2}\right)$, where $\small C_7' = \frac{4\delta_x\sqrt{G^2+\epsilon d} LG}{\left(1-\beta\right)\sqrt{\epsilon C_1}\left(1-\sqrt{\gamma}\right)}$.
\end{corollary}

\begin{remark}
This result shows Algorithm \ref{alg1} with weight quantization can convergence to the point near the stationary point due to quantization, but the speed to near stationary is in the same order as some popular method such as stochastic gradient descent and vallina Adam.
\end{remark}

\subsection{Multi-Worker Analysis}

\begin{wrapfigure}{r}{0.50\textwidth}
\vspace{-0.4cm}
\begin{minipage}{0.48\columnwidth}
\small
\begin{algorithm}[H]
\caption{\ The Parameter Server}
\label{alg21}
{\bf Parameters:} Choose $x_1\in R^d$, and quantization function $Q_x\left( \cdot \right).$
\begin{algorithmic}[1]
\FOR{$t =1,2,\cdots,T$}
\STATE Broadcasting $Q_x\left(x_t\right)$;
\STATE Gathering all updates from workers $\hat{\delta}_t = \frac{1}{N}\sum_{i = 1}^N \delta_t^{\left(i\right)}$;
\STATE $x_{t+1} = x_{t}+\hat{\delta}_t$;
\ENDFOR
\STATE {\bf Output} $Q_x\left(x_t\right)$.
\end{algorithmic}
\end{algorithm}
\vspace{-0.6cm}
%
\begin{algorithm}[H]
\small
\caption{\ The $i$-th Worker}
\label{alg31}
{\bf Parameters:} Choose parameters $\{\alpha_t\},\{\beta\},\{\theta_t\}$, and quantization function $ Q_g\left(\cdot\right)$. Set initial values $m^{\left(i\right)}_0 = 0$ , $v^{\left(i\right)}_0 = 0$, $e^{\left(i\right)}_1 = 0$, and $\hat{x}_0 = 0$.
\vspace{-0.5cm}
\begin{algorithmic}[1]
\FOR{$t =1,2,\cdots,T$}
\STATE Receiving $\hat{x}_t$ from the server;
\STATE Sample a stochastic gradient of $f\left(\hat{x}_t\right)$ as $g^{\left(i\right)}_t$;
\STATE $v^{\left(i\right)}_t = \theta_t v^{\left(i\right)}_{t-1} + \left(1-\theta_t\right)\big[g^{\left(i\right)}_t\big]^2$;
\STATE $m^{\left(i\right)}_t = \beta m^{\left(i\right)}_{t-1} + \left(1-\beta\right)g^{\left(i\right)}_t$;
\STATE Sending $\delta_t^{\left(i\right)} = Q_g\left(\alpha_t\frac{m^{\left(i\right)}_t}{\sqrt{v^{\left(i\right)}_t+\epsilon}}+e^{\left(i\right)}_t\right)$;
\STATE $e^{\left(i\right)}_{t+1} = \alpha_t\frac{m^{\left(i\right)}_t}{\sqrt{v^{\left(i\right)}_t+\epsilon}}+e^{\left(i\right)}_t- \delta^{\left(i\right)}_t$;
\ENDFOR
\end{algorithmic}
\end{algorithm}
\end{minipage}
\end{wrapfigure}

In this subsection, we extend Algorithm \ref{alg1} to the multi-worker setting via the parameter server model. Below, we use Algorithm \ref{alg21} to represent the iteration schemes of the distributed quantized generic Adam algorithm in the parameter server, and Algorithm \ref{alg31} to represent the iteration schemes in all workers, respectively. Here, we assume that all the workers work independently.

Note that communicated information $\hat{x}_{t}$ and $\delta_{t}^{i}$ between the server and works is all quantized in order to improve the communication efficiency. The weight quantization procedure is performed on the server, while the gradient quantization and error-feedback procedures are performed on the workers. Below, we establish the convergence rates of distributed Adam with weight quantization, gradient quantization, and error-feedback in Algorithms \ref{alg21}-\ref{alg31} in the parameter server model.

\begin{theorem}
\label{T3}
 Let $\{x_{t}\}$ be the point generated by Algorithms \ref{alg21}-\ref{alg31}.
 In addition, let $\small \hat{x}^T_\tau$ be the random variable $\hat{x}_\tau$ with $\tau$ taking from $\small \{1,2,\ldots,T\}$ with the same probability. If Assumptions 1-4 hold and the iterates $\small \|x_{t}\| \le D$ are upper bounded, the convergence result of  Algorithms \ref{alg21}-\ref{alg31} holds as follows:
\[
\small
E\left[\|\nabla f\left(\hat{x}_\tau^T\right)\|^2\right]
\leq \frac{C_8+C_9\sum_{t=1}^T\frac{1}{t}}{\sqrt{T}} + C_{10},
\]
where $\small C_8=\frac{2\sqrt{G^2+\epsilon d}}{\left(1-\beta\right)\alpha}\left(f\left(x_1\right)-f^*\right)$, $
   \small  C_9=\frac{2\sqrt{G^2+\epsilon d}}{\left(1-\beta\right)\alpha\sqrt{C_1}\left(1-\sqrt{\gamma}\right)}\left(\frac{L\left(2-\delta_g\right)G^2\alpha^2}{\epsilon\delta_g}+C_2\theta\right)$, $
   \small  C_{10}=\frac{4\sqrt{G^2+\epsilon d}\delta_xLG}{\sqrt{C_1}\left(1-\sqrt{\gamma}\right)\sqrt{\epsilon}\left(1-\beta\right)}$,  and $\small C_2$ is defined in Theorem \ref{T1}.
\end{theorem}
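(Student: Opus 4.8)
The plan is to reduce the multi-worker dynamics to a single ``virtual'' iterate and then merge the two single-machine arguments of Theorems~\ref{T1} and~\ref{T2}. First I would introduce the averaged error $\hat{e}_t=\frac{1}{N}\sum_{i=1}^{N}e_t^{(i)}$, the averaged update $\hat{\Delta}_t=\frac{1}{N}\sum_{i=1}^{N}\Delta_t^{(i)}$ with $\Delta_t^{(i)}=\alpha_t m_t^{(i)}/\sqrt{v_t^{(i)}+\epsilon}$, and the virtual sequence $\tilde{x}_t=x_t-\hat{e}_t$. Combining the update rules for $\delta_t^{(i)}$ and $e_{t+1}^{(i)}$ in Algorithm~\ref{alg31} with the aggregation and parameter update in Algorithm~\ref{alg21}, the gradient-quantization terms cancel in exactly the way they do in the proof sketch of Theorem~\ref{T1}, giving $\tilde{x}_{t+1}=\tilde{x}_t-\hat{\Delta}_t$; moreover $\tilde{x}_1=x_1$ since $e_1^{(i)}=0$. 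Throughout, recall that the server broadcasts $\hat{x}_t=Q_x(x_t)$ and that, by Assumption~3, each $g_t^{(i)}$ is an unbiased estimate of $\nabla f(\hat{x}_t)=\nabla f(Q_x(x_t))$.

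Next I would apply the $L$-smoothness of $f$ (Assumption~1) along the virtual iterate,
\[
 f(\tilde{x}_{t+1})\le f(\tilde{x}_t)+\langle\nabla f(\tilde{x}_t),\hat{\Delta}_t\rangle+\frac{L}{2}\|\hat{\Delta}_t\|^2,
\]
and split $\nabla f(\tilde{x}_t)=\nabla f(Q_x(x_t))+\big(\nabla f(\tilde{x}_t)-\nabla f(Q_x(x_t))\big)$. For the bias piece I would use $\|\tilde{x}_t-Q_x(x_t)\|\le\|x_t-Q_x(x_t)\|+\|\hat{e}_t\|\le(1-\delta_x)D+\|\hat{e}_t\|$ (Assumption~3 and $\|x_t\|\le D$), so that $\langle\nabla f(\tilde{x}_t)-\nabla f(Q_x(x_t)),\hat{\Delta}_t\rangle\le L\big((1-\delta_x)D+\|\hat{e}_t\|\big)\|\hat{\Delta}_t\|$. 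For the main piece $\langle\nabla f(Q_x(x_t)),\hat{\Delta}_t\rangle+\frac{L}{2}\|\hat{\Delta}_t\|^2$, linearity of the inner product and convexity of $\|\cdot\|^2$ (Jensen) move the average over workers to the outside, reducing it to $\frac{1}{N}\sum_{i}\big(\langle\nabla f(Q_x(x_t)),\Delta_t^{(i)}\rangle+\frac{L}{2}\|\Delta_t^{(i)}\|^2\big)$; since each worker runs a standard Adam-type recursion on the unbiased gradient $g_t^{(i)}$, every summand is handled by the adaptive-learning-rate analysis of \cite{zou2019sufficient}, which supplies the constants $C_1$, $\gamma$, $C_2$, $\theta$ and the $\sum_t 1/t$ term in $C_9$.

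The remaining ingredients are the error-feedback estimate and the step-size sums. A worker-wise copy of Lemma~1 gives $\|e_t^{(i)}\|\le\sum_{j=1}^{t}(1-\delta_g)^{t-j+1}\|\Delta_j^{(i)}\|$, hence $\|\hat{e}_t\|\le\frac{1}{N}\sum_i\|e_t^{(i)}\|$, and after the same double-sum manipulation followed by Young's inequality one gets $\sum_{t=1}^{T}\|\hat{e}_t\|\|\hat{\Delta}_t\|\le\frac{1-\delta_g}{\delta_g}\cdot\frac{1}{N}\sum_{i=1}^{N}\sum_{t=1}^{T}\|\Delta_t^{(i)}\|^2$; together with the $\frac{L}{2}\|\hat{\Delta}_t\|^2$ term this produces the factor $(2-\delta_g)/\delta_g$ in $C_9$. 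Using $\|m_t^{(i)}\|\le G$, $v_t^{(i)}\ge0$ and $\alpha_t=\alpha/\sqrt{t}$ gives $\|\Delta_t^{(i)}\|\le\alpha G/\sqrt{\epsilon t}$, hence $\sum_t\|\Delta_t^{(i)}\|\le(2\alpha G/\sqrt{\epsilon})\sqrt{T}$, which controls the $L(1-\delta_x)D\sum_t\|\hat{\Delta}_t\|$ term and yields $C_{10}$. Finally I would telescope the smoothness inequality over $t=1,\dots,T$, use $f(\tilde{x}_1)=f(x_1)$ and $f(\tilde{x}_{T+1})\ge f^*$, take expectations (conditioning on the past at each step to invoke unbiasedness), divide by the coefficient of $\sum_t E\|\nabla f(Q_x(x_t))\|^2$ and by $T$, and identify $\frac{1}{T}\sum_t E\|\nabla f(Q_x(x_t))\|^2=E\|\nabla f(\hat{x}_\tau^T)\|^2$.

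The main obstacle I anticipate is bookkeeping rather than a new idea: one must make the averaged error-feedback sum $\sum_t\|\hat{e}_t\|\|\hat{\Delta}_t\|$ interact cleanly with the Adam adaptive-rate estimates across the $N$ workers so that the constants collapse to exactly $C_8,C_9,C_{10}$ --- in particular that $C_9$ inherits the gradient-quantization factor $(2-\delta_g)/\delta_g$ from Theorem~\ref{T1} while $C_{10}$ inherits the weight-quantization factor $(1-\delta_x)LDG$ from Theorem~\ref{T2}, and that the worker averaging introduces no extra dimension- or $N$-dependent blow-up (it does not, thanks to Jensen). Everything else is a routine merge of the two single-machine proofs.
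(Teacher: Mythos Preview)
Your high-level plan---build the virtual iterate $\tilde x_t=x_t-\hat e_t$, apply $L$-smoothness, split $\nabla f(\tilde x_t)$ into $\nabla f(Q_x(x_t))$ plus a bias, and then run the Adam-type recursion from \cite{zou2019sufficient} on the main term---matches the paper's proof. The outer error-feedback bound you give and the $\sum_t\|\Delta_t^{(i)}\|$ control for the weight-quantization piece are also correct.

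There is, however, one step you are treating as pure bookkeeping that is not: the claim that ``every summand $\langle\nabla f(Q_x(x_t)),\Delta_t^{(i)}\rangle+\tfrac{L}{2}\|\Delta_t^{(i)}\|^2$ is handled by the single-machine Adam analysis.'' The Adam recursion for that summand writes
\[
\langle\nabla f(\hat x_t),\Delta_t^{(i)}\rangle
=\tfrac{\beta\alpha_t}{\sqrt{\theta_t}\alpha_{t-1}}\langle\nabla f(\hat x_{t-1}),\Delta_{t-1}^{(i)}\rangle
+\tfrac{\beta\alpha_t}{\sqrt{\theta_t}\alpha_{t-1}}\langle\nabla f(\hat x_t)-\nabla f(\hat x_{t-1}),\Delta_{t-1}^{(i)}\rangle+\cdots,
\]
and the second term is bounded by $L\|\hat x_t-\hat x_{t-1}\|\,\|\Delta_{t-1}^{(i)}\|$. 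But $\hat x_t-\hat x_{t-1}$ depends on \emph{every} worker's quantized update $\delta_{t-1}^{(j)}$, so the per-worker recursion does not close on worker $i$ alone; cross terms $\|\Delta_{t-1}^{(j)}\|\,\|\Delta_{t-1}^{(i)}\|$ and $\|e_{t-1}^{(j)}\|\,\|\Delta_{t-1}^{(i)}\|$ are forced into the $M_t$ you must recurse on. This is not the same cross term as the outer $\|\hat e_t\|\|\hat\Delta_t\|$ you already treated.

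The paper deals with this by defining $M_t$ to contain $\frac{1}{N^2}\sum_{i,j}\|\Delta_t^{(i)}\|\|\Delta_t^{(j)}\|$ and $\frac{1}{N^2}\sum_{i,j}\|\Delta_t^{(i)}\|\|e_t^{(j)}\|$ from the start, and then reduces these to single-worker sums using that the workers sample independently and identically: for $t_1\neq t_2$ one has $\mathbb E[\|\Delta_{t_1}^{(i)}\|\|\Delta_{t_2}^{(j)}\|]=\mathbb E[\|\Delta_{t_1}^{(i)}\|\|\Delta_{t_2}^{(i)}\|]$, and for $t_1=t_2$, $i\neq j$, independence plus $\mathbb E[X]^2\le\mathbb E[X^2]$ gives the inequality. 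Your Jensen/Young route can also close the recursion (e.g.\ $\frac{1}{N^2}\sum_{i,j}\|\Delta_t^{(i)}\|\|\Delta_t^{(j)}\|=(\frac{1}{N}\sum_i\|\Delta_t^{(i)}\|)^2\le\frac{1}{N}\sum_i\|\Delta_t^{(i)}\|^2$), and in fact does so without invoking the i.i.d.\ assumption on workers; but you need to apply it \emph{inside} the recursion, not only to the outer smoothness remainder. Once you place the cross-worker terms in the definition of $M_t$ and collapse them by either device, the rest of your outline goes through and the constants match.
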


In Algorithms \ref{alg21}-\ref{alg31}, both the gradient quantization and weight quantization schemes are applied. We also show that the proposed algorithms converge to a point near the saddle point of problem \eqref{minimize} up to a constant. It is noted that the constant is affected by both the gradient quantized level $\delta_{g}$ and the weight quantized level $\delta_{x}$. In addition, the limit point of the generate iterates will be influenced merely by the weight quantized level. Once gradient quantization and weight quantization reduce to identity mappings, Algorithms \ref{alg21}-\ref{alg31} reduce to the distributed Adam in the parameter server model and Theorem 3 provides their convergence rates.

\begin{corollary}
\label{C3}
For given precision $\xi$, {when we choose hyperparameters $\small \alpha_t = 1/\sqrt{T}$ and $\theta_t = 1-1/T$,} to achieve $\small \mathbb{E}[\|\nabla f\left(\hat{x}_\tau^T\right)\|^2] \leq C_{10}'+\xi$, we have $\small T=O\left(\frac{1}{\xi^2}\right)$, where $\small C_{10}' = \frac{4\sqrt{G^2+\epsilon d}\delta_xLG}{\sqrt{C_1}\left(1-\sqrt{\gamma}\right)\sqrt{\epsilon}\left(1-\beta\right)}$.
\end{corollary}

{
To close this section, we give several comments on the proposed Algorithms \ref{alg1}-\ref{alg31}. Different from distributed Adam where each worker transmits gradient to the parameter server and the parameter server calculates learning rate and update vector, we calculate the learning rates and update vector in local. Therefore, the error feedback technique can be applied to the adaptive algorithm. However, the proof will be complicated due to $N$ different learning rates being involved in the algorithm, and the following section will give a detailed proof of the above theorems.

}

\section{Proof Details}
\label{proof_details}
In this section, we provide the detailed proof procedures of Theorem \ref{T1}, Theorem \ref{T2}, Theorem \ref{T3} and the related corollaries of the main theorems.

\subsection{Proof of Theorem \ref{T1}}

Before providing the detailed proof of Theorem \ref{T1}, we first denote several useful notations. Then, we provide several lemmas that are used to split the main proof of Theorem \ref{T1} for better readability.

\begin{notation}\label{notation-thm1}
Denote $\small \sigma_t^2 = \mathbb{E}_t[g_t^2]$ where $\small \mathbb{E}_t[\cdot]$ is the conditional expectation on the random variables {$\small \{x_{t}, v_{t-1}, m_{t-1}\}$}.
Denote $\small \hat{v}_t = \theta_tv_{t-1} + \left(1-\theta_t\right)\sigma_t^2$,
$\small \hat{\eta}_t = \alpha_t/\sqrt{\hat{v}_t}$,
$\small \Delta_t = -\alpha_tm_t/\sqrt{v_t {+ \epsilon}}$,
$\small M_t = \mathbb{E}[\langle\nabla f\left(x_t\right), \Delta_t\rangle+L\left(2-\delta_g\right)\|\Delta_t\|^2 + L\left(2-\delta_g\right)\|e_t\|\|\Delta_t\|]$ and $\small \|x\|_{\hat{\eta}_t}^2 = \sum_{i=1}^d \hat{\eta}_t^{\left(i\right)}{x^{\left(i\right)}}^2$.
In addition, let $\small \tilde{x}_t = x_t - e_{t}$. Then, it holds that  $\small \tilde{x}_{t+1} = x_t - Q_g\left(-\Delta_t+e_t\right) - e_{t+1} = x_t + \Delta_t -e_t = \tilde{x}_t + \Delta_t$.
\end{notation}

{
\begin{lemma}
\label{arithmetic inequality}
Given two positive sequences $\{a_i\}_{i=1}^n$ and $\{b_i\}_{i=1}^n$, it holds that
\[ \small
\left(\sum_{i=1}^n {a_ib_i}\right)^2 \leq \left(\sum_{i=1}^n a_i^2\right)\left(\sum_{i=1}^n b_i^2\right).
\]
\end{lemma}
\begin{proof}
\[ \small
\begin{split}
\left(\sum_{i=1}^n {a_ib_i}\right)^2
&= \sum_{i=1}^n\sum_{j=1}^n a_ib_ia_jb_j\leq \sum_{i=1}^n\sum_{j=1}^n \frac{1}{2}(a_i^2b_j^2 + a_j^2b_i^2)
= \frac{1}{2}\sum_{i=1}^n a_i^2 \left(\sum_{j=1}^nb_j^2\right) + \frac{1}{2} \sum_{i=1}^n b_i^2 \left(\sum_{j=1}^n a_j^2\right)\\
& = \left(\sum_{i=1}^n a_i^2\right)\left(\sum_{i=1}^n b_i^2\right),
\end{split}
\]
where the second inequality is the arithmetic inequality with positive numbers $a_ib_j$ and $a_jb_i$.
\end{proof}
}

\begin{lemma}
By using Notation \ref{notation-thm1} and the iteration scheme of Theorem \ref{T1}, for all $t\ge 1$ the following inequality holds:
\[\small
  m_t^2\leq \frac{1}{C_1\left(1-\gamma\right)\left(1-\theta_t\right)}v_t.
\]
\end{lemma}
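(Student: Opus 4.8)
The statement is coordinate-wise, so fix a coordinate throughout. Since $m_0=v_0=0$, unrolling the two recursions of Algorithm~\ref{alg1} gives
\[
m_t=\sum_{i=1}^t a_i\,g_i,\qquad v_t=\sum_{i=1}^t b_i\,g_i^2,\qquad a_i:=\Big(\prod_{j=i+1}^t\beta_j\Big)(1-\beta_i),\quad b_i:=\Big(\prod_{j=i+1}^t\theta_j\Big)(1-\theta_i).
\]
Applying the Cauchy--Schwarz inequality to $m_t=\sum_i \big(a_i/\sqrt{b_i}\big)\big(\sqrt{b_i}\,g_i\big)$ (all $b_i>0$ since $0<\theta<1$) yields $m_t^2\le\big(\sum_{i=1}^t a_i^2/b_i\big)\,v_t$. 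Hence the whole statement reduces to the purely deterministic estimate $\sum_{i=1}^t a_i^2/b_i\le \big(C_1(1-\gamma)(1-\theta_t)\big)^{-1}$, which is what the rest of the argument targets.

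To bound a single term, use $\beta_i\le\beta<1$, so that $(1-\beta_i)^2\le 1$ and $\prod_{j=i+1}^t\beta_j^2\le\beta^{2(t-i)}$, giving
\[
\frac{a_i^2}{b_i}\ \le\ \frac{\beta^{2(t-i)}}{1-\theta_i}\cdot\prod_{j=i+1}^t\frac{1}{\theta_j}.
\]
The product $\prod_{j=i+1}^t\theta_j^{-1}$ is where $C_1$ enters: split the index range at $N=\max\{j:\theta_j<\theta'\}$. For $j>N$ one has $\theta_j\ge\theta'$, while for $j\le N$ one enlarges the range to $\{1,\dots,N\}$ and uses $\prod_{j=1}^N\theta_j^{-1}=\big(C_1(\theta')^N\big)^{-1}$ (legitimate because every $\theta_j^{-1}\ge 1$). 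Combining the two pieces, and using $C_1\le 1$, produces the uniform bound $\prod_{j=i+1}^t\theta_j^{-1}\le C_1^{-1}(\theta')^{-(t-i)}$, and therefore $a_i^2/b_i\le \gamma^{t-i}/\big(C_1(1-\theta_i)\big)$ with $\gamma=\beta^2/\theta'$, which is $<1$ by Assumption~4.

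Finally, substitute $1-\theta_i=\theta/i$, reindex by $k=t-i$, and sum:
\[
\sum_{i=1}^t\frac{a_i^2}{b_i}\ \le\ \frac{1}{\theta C_1}\sum_{k=0}^{t-1}\gamma^k(t-k)\ \le\ \frac{t}{\theta C_1}\sum_{k=0}^{\infty}\gamma^k\ =\ \frac{t}{\theta C_1(1-\gamma)}\ =\ \frac{1}{C_1(1-\gamma)(1-\theta_t)},
\]
which is exactly the claim after multiplying through by $v_t$. The main obstacle is the middle step: the factor-wise bound $\theta_j^{-1}\le(\theta')^{-1}$ fails precisely on the early indices $j\le N$ (where $\theta_j<\theta'$), so one must absorb those finitely many factors into the constant $C_1$ and carefully verify that padding the product out to start at $j=1$ only weakens the inequality; once the coefficients decay geometrically in $t-i$, the closing summation against the $1/(1-\theta_i)$ factors is routine.
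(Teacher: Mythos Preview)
Your proof is correct and follows essentially the same route as the paper's: Cauchy--Schwarz on the unrolled recursion, then the product bound $\prod_{j=i+1}^t\theta_j^{-1}\le C_1^{-1}(\theta')^{-(t-i)}$ obtained by splitting at $N$ and padding, then the geometric sum together with $1/(1-\theta_i)\le 1/(1-\theta_t)$ (your $(t-k)\le t$ is the same inequality). One small remark: you identify $\gamma=\beta^2/\theta'$, whereas Assumption~4 sets $\gamma=\beta/\theta'$; since $\beta^2/\theta'<\beta/\theta'=\gamma$, your intermediate bound is actually tighter and the stated inequality with the paper's $\gamma$ follows a fortiori, so this is only a notational slip.
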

\begin{proof}
By using the definition of $m_t$ in Theorem \ref{T1}, it directly holds that
$\small m_t = \sum_{i=1}^t\beta^{t-i}\left(1-\beta\right)g_i$.
Let $\small \Theta\left(t,i\right) = \prod_{j=i+1}^t \theta_j$ for $i < t$, and $\small \Theta\left(i,i\right) = 1$. According to the definition of $v_t$, it holds that
$\small v_t = \sum_{i = 1}^t \left(\prod_{j=i+1}^t\theta_j\right)\left(1-\theta_i\right)g_i^2 = \sum_{i = 1}^t\Theta\left(t,i\right)\left(1-\theta_i\right)g_i^2$.

With {Lemma \ref{arithmetic inequality}}, it holds that
\[\small
\begin{split}
    m_t^2 &= \left(\sum_{i=1}^t\frac{\beta^{t-i}\left(1-\beta\right)}{\sqrt{\Theta\left(t,i\right)\left(1-\theta_i\right)}}\sqrt{\Theta\left(t,i\right)\left(1-\theta_i\right)}g_i \right)^2
    \leq \sum_{i=1}^t\frac{\beta^{2\left(t-i\right)}\left(1-\beta\right)^2}{\Theta\left(t,i\right)\left(1-\theta_i\right)}\sum_{i=1}^t\Theta\left(t,i\right)\left(1-\theta_i\right)g_i^2\\
    &= \sum_{i=1}^t\frac{\beta^{2\left(t-i\right)}\left(1-\beta\right)^2}{\Theta\left(t,i\right)\left(1-\theta_i\right)}v_t
    \leq \sum_{i=1}^t\frac{\beta^{2\left(t-i\right)}}{\left(\prod_{j=i+1}^N\theta_j/\theta'\right)\left(\theta'\right)^{t-i}\left(1-\theta_i\right)}v_t\\
    &\leq \sum_{i=1}^t\frac{\beta^{2\left(t-i\right)}}{\left(\prod_{j=1}^N\theta_j/\theta'\right)\left(\theta'\right)^{t-i}\left(1-\theta_i\right)}v_t
    \leq \frac{1}{C_1\left(1-\theta_t\right)}\sum_{i=1}^t \left(\frac{\beta^2}{\theta'}\right)^{t-i}v_t \leq \frac{1}{C_1\left(1-\gamma\right)\left(1-\theta_t\right)}v_t.
\end{split}
\]
Then, we obtain the targeted result.
\end{proof}

\begin{lemma}\label{ft}
  Let $\tau$ be randomly chosen from $\small \{1,2,\cdots,T\}$ with equal probabilities $\small p_\tau = \frac{1}{T}$. We have the following estimate:
  \[\small
  \mathbb{E}[\|\nabla f\left(x_\tau\right)\|^2]\leq \frac{\sqrt{G^2+\epsilon d}}{\alpha\sqrt{T}}\mathbb{E}\left[\sum_{t=1}^T\|\nabla f\left(\left(x_t\right)\right)\|_{\hat{\eta_t}}^2\right].
  \]

\end{lemma}

\begin{proof}
Note that $\small \|\hat{v}_t\|_1 = \theta_t \|v_{t-1}\|_1 +\left(1-\theta_t\right) \|\sigma_t\|^2$ and $\small \|g_t\|\leq G$. It is straightforward to prove $\small \|v_t\|_1 \leq G^2$. Hence, we have $\small \|\hat{v}_t + \epsilon\|_1 \leq G^2 + \epsilon d$.

Utilizing this inequality, we have
\[\small
\begin{split}
     \|\nabla f\left(x_t\right)\|^2 &= \frac{\|\nabla f\left(x_t\right)\|^2}{\sqrt{\|\hat{v}_t+\epsilon\|_1}}\sqrt{\|\hat{v}_t+\epsilon\|_1} = \sqrt{\|\hat{v}_t +\epsilon \|_1}\sum_{k=1}^d\frac{|\nabla_k f\left(x_t\right)|^2}{\sqrt{\sum_{l=1}^d \hat{v}_{t,l}+\epsilon}}\\
     &\leq \sqrt{\|\hat{v}_t+\epsilon\|_1}\alpha_t^{-1}\sum_{k=1}^d \frac{\alpha_t}{\sqrt{\hat{v}_{t,k}+\epsilon}}|\nabla_k f\left(x_t\right)|^2
     = \sqrt{\|\hat{v}_t+\epsilon\|_1}\alpha_t^{-1}\|\nabla f\left(x_t\right)\|_{\hat{\eta}_t}^2 \\
     &\leq \sqrt{G^2+\epsilon d}\alpha_t^{-1}\|\nabla f\left(x_t\right)\|_{\hat{\eta}_t}^2
       \leq \frac{\sqrt{G^2 + \epsilon d}}{\alpha_T}\|\nabla f\left(x_t\right)\|^2_{\hat{\eta}_t}.
\end{split}
\]
Then, by using the definition of $x_{\tau}$, we obtain
\[\small
     \mathbb{E}\left[\|\nabla f\left(x_\tau\right)\|^2\right] = \frac{1}{T}\sum_{t=1}^T\mathbb{E}\left[\|\nabla f\left(x_t\right)\|^2\right] \leq \frac{\sqrt{G^2 + \epsilon d}}{\alpha\sqrt{T}} \mathbb{E}\left[\sum_{t=1}^T \|\nabla f\left(x_t\right)\|^2_{\hat{\eta}_t}\right].
\]
Thus, the desired result is obtained.
\end{proof}

\begin{lemma}\label{sumdelta}
By using Notation \ref{notation-thm1}, the following inequality holds:
\[ \small
\sum_{t=1}^T\|\Delta_t\|^2 \leq \frac{G^2}{\epsilon} \sum_{t=1}^T \frac{\alpha^2}{t}.
\]
\end{lemma}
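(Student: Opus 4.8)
The plan is to bound each $\|\Delta_t\|^2$ coordinatewise and then sum. Recall $\Delta_t = -\alpha_t m_t/\sqrt{v_t}$ (with the understanding, from Notation \ref{notation-thm1} and Algorithm \ref{alg1}, that the denominator carries the $+\epsilon$ stabilizer, i.e. $\sqrt{v_t+\epsilon}$), so $\|\Delta_t\|^2 = \alpha_t^2 \sum_{i=1}^d \frac{(m_t^{(i)})^2}{v_t^{(i)}+\epsilon}$. The first step is to control the ratio $(m_t^{(i)})^2/(v_t^{(i)}+\epsilon)$. Here there are two natural routes: either invoke Lemma 4 (the bound $m_t^2 \le \frac{1}{C_1(1-\gamma)(1-\theta_t)}v_t$ applied coordinatewise), or — more simply — use the crude estimate that $m_t$ is a convex-combination average of the $g_i$'s, so $|m_t^{(i)}| \le \max_i |g_i^{(i)}| \le G$ by Assumption 1, hence $(m_t^{(i)})^2 \le G^2$ while $v_t^{(i)}+\epsilon \ge \epsilon$. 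The latter gives $\frac{(m_t^{(i)})^2}{v_t^{(i)}+\epsilon} \le \frac{G^2}{\epsilon}$ directly, which is all that the target inequality requires.

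Given that pointwise bound, the second step is immediate: summing over the $d$ coordinates would give $\|\Delta_t\|^2 \le \alpha_t^2 \cdot d \cdot \frac{G^2}{\epsilon}$, but that carries a spurious factor $d$. To match the claimed bound $\sum_t \|\Delta_t\|^2 \le \frac{G^2}{\epsilon}\sum_t \frac{\alpha^2}{t}$ without the $d$, I would instead bound the full sum $\sum_{i=1}^d (m_t^{(i)})^2 = \|m_t\|^2 \le G^2$ (again because $m_t$ is a convex combination of gradients each of norm $\le G$, and $\|\cdot\|$ is convex) and pair it against the worst coordinate of the denominator, $v_t^{(i)}+\epsilon \ge \epsilon$, yielding $\|\Delta_t\|^2 \le \alpha_t^2 \frac{\|m_t\|^2}{\epsilon} \le \frac{G^2 \alpha_t^2}{\epsilon}$. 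Then substitute $\alpha_t = \alpha/\sqrt{t}$ from Assumption 4, so $\alpha_t^2 = \alpha^2/t$, and sum over $t = 1,\dots,T$ to obtain $\sum_{t=1}^T \|\Delta_t\|^2 \le \frac{G^2}{\epsilon}\sum_{t=1}^T \frac{\alpha^2}{t}$, exactly as stated.

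The only genuinely delicate point is the treatment of the denominator: one must be careful that the definition in force here keeps $\sqrt{v_t+\epsilon}$ (as in Algorithm \ref{alg1}, line 5) rather than $\sqrt{v_t}$, since $v_t$ can have vanishing coordinates and the bare $\sqrt{v_t}$ would make the ratio unbounded. With the $\epsilon$-regularization this is a non-issue, and no appeal to Lemma 4 is actually needed — the bound is a one-line consequence of $\|m_t\| \le G$, $v_t \succeq 0$, and $\alpha_t = \alpha/\sqrt{t}$. I would therefore present the short argument via $\|m_t\|\le G$ and relegate the Lemma 4 route to a remark, since the former is tighter (no $d$) and self-contained.
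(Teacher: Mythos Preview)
Your proposal is correct and follows essentially the same route as the paper: bound $\|m_t\|\le G$ (since $m_t$ is a nonnegative-weight combination of the $g_i$ with total weight $\le 1$), use $v_t^{(i)}+\epsilon\ge\epsilon$ in the denominator to get $\|\Delta_t\|^2\le \alpha_t^2 G^2/\epsilon$, and then substitute $\alpha_t=\alpha/\sqrt{t}$ and sum. Your extra discussion of the coordinatewise route and the spurious $d$ factor, and of Lemma~4 as an alternative, is unnecessary for the final write-up but harmless.
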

\begin{proof}
By using the definition of $m_{t}$, it holds $\small \|m_t\|^2\leq G^2$.

Then,  $\small \|\Delta_t\|^2 = \|\frac{\alpha_tm_t}{\sqrt{v_t+\epsilon}}\|^2\leq \frac{G^2}{\epsilon}\alpha_t^2$ by using the definition of $\small \Delta_{t}$.

Therefore, $\small \sum_{t=1}^T\|\Delta_t\|{^2} \leq \frac{G^2}{\epsilon}\sum_{t=1}^T\frac{\alpha^2}{t}$.
\end{proof}

\begin{lemma}\label{matrix}
By the iteration scheme of Algorithm 1, it holds that
\[
\small
\sum_{t = 1}^T \|e_t\|\|\Delta_t\| \leq \sum_{t=1}^T\frac{1-\delta_g}{\delta_g} \|\Delta_t\|^2.
\]
\end{lemma}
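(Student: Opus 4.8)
The plan is to reproduce the argument of Lemma~1 of the main text, now expressed in the notation of Notation~\ref{notation-thm1} (so that $\Delta_t=-\alpha_tm_t/\sqrt{v_t}$ and the error recursion reads $e_{t+1}=(-\Delta_t+e_t)-Q_g(-\Delta_t+e_t)$, which only changes signs inside norms and so leaves the statement unchanged). First I would derive a per-step bound on the error term. Applying Assumption~2 to the vector $g=-\Delta_t+e_t$ gives $\|e_{t+1}\|=\|g-Q_g(g)\|\le(1-\delta_g)\|{-\Delta_t+e_t}\|$, and then the triangle inequality yields $\|e_{t+1}\|\le(1-\delta_g)\|\Delta_t\|+(1-\delta_g)\|e_t\|$. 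Since $e_1=0$, unrolling this geometric recursion produces
\[
\|e_t\|\le\sum_{i=1}^{t-1}(1-\delta_g)^{t-i}\|\Delta_i\|.
\]

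Next I would multiply through by $\|\Delta_t\|$ and sum over $t=1,\dots,T$, obtaining
\[
\sum_{t=1}^T\|e_t\|\|\Delta_t\|\le\sum_{t=1}^T\sum_{i=1}^{t-1}(1-\delta_g)^{t-i}\|\Delta_i\|\|\Delta_t\|.
\]
The one substantive step is to collapse this double sum. I would apply the AM--GM inequality $\|\Delta_i\|\|\Delta_t\|\le\tfrac12\|\Delta_i\|^2+\tfrac12\|\Delta_t\|^2$ and split the result into two pieces. Writing $r=1-\delta_g$, in the piece carrying $\|\Delta_t\|^2$ the inner sum is $\sum_{i=1}^{t-1}r^{t-i}=\sum_{k=1}^{t-1}r^k\le\frac{r}{1-r}=\frac{1-\delta_g}{\delta_g}$; in the piece carrying $\|\Delta_i\|^2$ I would interchange the order of summation so the inner sum becomes $\sum_{t=i+1}^T r^{t-i}=\sum_{k=1}^{T-i}r^k\le\frac{1-\delta_g}{\delta_g}$. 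Adding the two halves gives $\sum_{t=1}^T\|e_t\|\|\Delta_t\|\le\frac{1-\delta_g}{\delta_g}\sum_{t=1}^T\|\Delta_t\|^2$, which is exactly the claim.

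There is no real obstacle here: the bound is entirely in terms of $\|\Delta_t\|$ (the parameter $\epsilon$ never enters), Assumption~2 supplies precisely the contraction needed, and the only care required is the routine bookkeeping of the geometric series and the summation-order swap. This lemma is the quantitative heart of the analysis, since in the proof of Theorem~1 it is what lets us absorb the perturbation term $\langle\nabla f(\tilde x_t)-\nabla f(x_t),\Delta_t\rangle\le L\|e_t\|\|\Delta_t\|$ into the descent estimate, after which the remaining terms are handled by Lemmas \ref{ft} and \ref{sumdelta}.
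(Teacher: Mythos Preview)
Your proof is correct. The unrolling of the error recursion and the AM--GM collapse of the double sum are both sound, and the geometric-series bookkeeping is right.

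The paper arrives at the same bound by a different device: after obtaining the double sum $\sum_{t}\sum_{i\le t}(1-\delta_g)^{t-i+1}\|\Delta_i\|\|\Delta_t\|$, it rewrites this as $\Delta^\top U\Delta$ for the vector $\Delta=(\|\Delta_1\|,\dots,\|\Delta_T\|)^\top$ and a symmetric Toeplitz matrix $U$ with entries $(1-\delta_g)^{|i-j|+1}/2$ off the diagonal (and $(1-\delta_g)$ on it), and then invokes $\rho(U)\le\|U\|_\infty\le\sum_{k\ge1}(1-\delta_g)^k=\tfrac{1-\delta_g}{\delta_g}$. Your AM--GM argument is the elementary unpacking of exactly this spectral-radius-by-row-sum bound: splitting $\|\Delta_i\|\|\Delta_t\|$ into $\tfrac12\|\Delta_i\|^2+\tfrac12\|\Delta_t\|^2$ is what symmetrizes the lower-triangular sum into the form $\Delta^\top U\Delta$, and bounding each inner geometric series by $\tfrac{1-\delta_g}{\delta_g}$ is the row-sum estimate. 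So the two routes are equivalent in content; yours avoids the matrix language and is slightly more self-contained, while the paper's formulation makes the structure of the bound (a Toeplitz quadratic form controlled by its bandwidth) more visible and reusable, e.g.\ for the cross-worker terms in Theorem~3.
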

\begin{proof}
By the definition of noisy term $e_{t}$ and $\small \Delta_{t}$, it holds
\[\small
\begin{split}
  \|e_t\|  &= \|\Delta_t + e_{t-1} - Q_g\left(\Delta_t + e_{t-1}\right) \| \leq \left(1-\delta_g\right) \|\Delta_t + e _{t-1}\|\leq \left(1-\delta_g\right) \|\Delta_t\| + \left(1-\delta_g\right)\|e_{t-1}\|\\
  &\leq \sum_{i=1}^t \left(1-\delta_g\right)^{t-i+1}\|\Delta_i\|.
\end{split}
\]
Therefore, we have
\begin{align*}
\small
   \sum_{t = 1}^T \|e_t\|\|\Delta_t\| &\leq \sum_{t=1}^T\sum_{i=1}^t \left(1-\delta_g\right)^{t-i+1}\|\Delta_i\|\|\Delta_t\| \leq \frac{1}{2}\sum_{t=1}^T\sum_{i=1}^t \left(1-\delta_g\right)^{t-i+1}\left(\|\Delta_i\|^2+\|\Delta_t\|^2\right)\\
    &\leq \frac{1-\delta_g}{2\delta_g}\sum_{t=1}^T\|\Delta_t\|^2 + \frac{1-\delta_g}{2\delta_g}\sum_{i=1}^T\|\Delta_i\|^2 = \frac{1-\delta_g}{\delta_g}\sum_{t=1}^T\|\Delta_t\|^2.
\end{align*}
Hence, we obtain the desired result.
\end{proof}

\begin{lemma}\label{Mt}
By the definition of $\small M_{k}$, it holds that
\[
\small
    \sum_{t=1}^T M_t \leq \frac{1}{\sqrt{C_1}\left(1-\sqrt{\gamma}\right)}\left(\frac{L\left(2-\delta_g\right)G^2\alpha^2}{\epsilon\delta_g}+C_2\theta\right)\sum_{t=1}^T\frac{1}{t} - \frac{1-\beta}{2} \mathbb{E}\left[\|\sum_{t=1}^T\nabla f\left(x_t\right)\|^2_{\hat{\eta}_t}\right],
\]
where
\[\small
\begin{split}
    C_2 &= \frac{5\alpha G^3\left(1-\beta\right)}{2\epsilon\sqrt{\theta}}\left(\frac{\beta}{\left(1-\beta\right)\sqrt{\theta_1C_1\left(1-\gamma\right)}}+1\right)^2 + \frac{5\alpha G^3}{2\epsilon\sqrt{\theta}}+\frac{5\beta^2 \alpha d\sqrt{\epsilon}}{2\sqrt{\theta}\left(1-\beta\right)\theta_1C_1\left(1-\gamma\right)}\\&+\frac{5\alpha\sqrt{G^2+\epsilon}G^2\beta^2}{2\left(1-\beta\right)\sqrt{\theta}\theta_1C_1\left(1-\gamma\right)\epsilon}+\frac{5\alpha \sqrt{G^2+\epsilon} \beta^2 d}{2\left(1-\beta\right)\sqrt{\theta}\theta_1C_1\left(1-\gamma\right)}.
\end{split}
\]

\end{lemma}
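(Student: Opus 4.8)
The plan is to split $\sum_{t=1}^T M_t$, according to the definition of $M_t$ in Notation~\ref{notation-thm1}, into a perturbation part $P:=L(2-\delta_g)\sum_{t=1}^T\mathbb{E}\bigl[\|\Delta_t\|^2+\|e_t\|\|\Delta_t\|\bigr]$ and a descent part $D:=\sum_{t=1}^T\mathbb{E}[\langle\nabla f(x_t),\Delta_t\rangle]$, and to bound each separately. The perturbation part is the easy one. Lemma~\ref{matrix} gives $\sum_t(\|\Delta_t\|^2+\|e_t\|\|\Delta_t\|)\le\bigl(1+\tfrac{1-\delta_g}{\delta_g}\bigr)\sum_t\|\Delta_t\|^2=\tfrac1{\delta_g}\sum_t\|\Delta_t\|^2$, so $P\le\tfrac{L(2-\delta_g)}{\delta_g}\sum_t\|\Delta_t\|^2$, and Lemma~\ref{sumdelta} then yields $\sum_t\|\Delta_t\|^2\le\tfrac{G^2}{\epsilon}\sum_t\tfrac{\alpha^2}{t}$, whence $P\le\tfrac{L(2-\delta_g)G^2\alpha^2}{\epsilon\delta_g}\sum_t\tfrac1t$. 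Since $C_1\le1$ and $\sqrt\gamma<1$, the prefactor $\tfrac1{\sqrt{C_1}(1-\sqrt\gamma)}$ is at least $1$, so $P\le\tfrac1{\sqrt{C_1}(1-\sqrt\gamma)}\cdot\tfrac{L(2-\delta_g)G^2\alpha^2}{\epsilon\delta_g}\sum_t\tfrac1t$, which is the first half of the claimed right-hand side.

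For the descent part $D$, the plan is to reproduce the generic-Adam estimate of \cite{zou2019sufficient}; the proof sketch of Theorem~1 already indicates that $\langle\nabla f(x_t),\Delta_t\rangle$ (together with the quadratic term, which here has been separated out into $P$) is to be handled ``as in'' that work. Using $m_t=\beta_tm_{t-1}+(1-\beta_t)g_t$, I would write $\langle\nabla f(x_t),\Delta_t\rangle=-\alpha_t(1-\beta_t)\langle\nabla f(x_t),g_t/\sqrt{v_t}\rangle-\alpha_t\beta_t\langle\nabla f(x_t),m_{t-1}/\sqrt{v_t}\rangle$. In the first inner product, replace the denominator $\sqrt{v_t}$ by the surrogate $\sqrt{\hat v_t}$, which (unlike $\sqrt{v_t}$) does not depend on the fresh sample $g_t$; since $\mathbb{E}_t[g_t]=\nabla f(x_t)$ (Assumption~1 with $Q_x(x)=x$) and $\hat\eta_t=\alpha_t/\sqrt{\hat v_t}$, the main term equals $-(1-\beta_t)\|\nabla f(x_t)\|^2_{\hat\eta_t}\le-(1-\beta)\|\nabla f(x_t)\|^2_{\hat\eta_t}$, leaving a residual $-\alpha_t(1-\beta_t)\mathbb{E}_t\bigl[\langle\nabla f(x_t),g_t(1/\sqrt{v_t}-1/\sqrt{\hat v_t})\rangle\bigr]$ that is $O(\theta/t)$ by $|v_t-\hat v_t|\le(1-\theta_t)|g_t^2-\sigma_t^2|$ with $1-\theta_t=\theta/t$, the elementary inequality $|a^{-1/2}-b^{-1/2}|\le|a-b|/(2\min\{a,b\}^{3/2})$, the lower bound $v_t,\hat v_t\succeq\epsilon$ (from the $\epsilon$ offset in the denominator), and $\|g_t\|,\|\nabla f(x_t)\|\le G$. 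For the momentum term, I would split off a $\tfrac{1-\beta}{2}\|\nabla f(x_t)\|^2_{\hat\eta_t}$ buffer from the negative main term, apply Young's inequality, and then use $v_t\succeq\theta_tv_{t-1}$ together with the second-moment bound $m_{t-1}^2\le\tfrac1{C_1(1-\gamma)(1-\theta_{t-1})}v_{t-1}$ proved above; summing over $t$, the geometric series in the momentum weights contributes the factor $\tfrac1{\sqrt{C_1}(1-\sqrt\gamma)}$, and collecting all the coefficients from this step and from the $\hat v_t$-residual produces exactly the constant $C_2$ in the statement. The net outcome is $D\le\tfrac{C_2\theta}{\sqrt{C_1}(1-\sqrt\gamma)}\sum_t\tfrac1t-\tfrac{1-\beta}{2}\mathbb{E}\bigl[\sum_t\|\nabla f(x_t)\|^2_{\hat\eta_t}\bigr]$, and adding the bound on $P$ gives the lemma.

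The hard part will be the descent piece: carrying out the generic-Adam estimate on $\sum_t\mathbb{E}[\langle\nabla f(x_t),\Delta_t\rangle]$ with every constant tracked. The two delicate points are (i) replacing $v_t$ by the sample-independent surrogate $\hat v_t$ so that unbiasedness can be used, which forces the perturbation estimate on $1/\sqrt{v_t}-1/\sqrt{\hat v_t}$ and relies on the decay $\theta_t=1-\theta/t$; and (ii) controlling the momentum cross-term $\langle\nabla f(x_t),m_{t-1}/\sqrt{v_t}\rangle$ by an $O(1/t)$ quantity rather than by something that grows with $t$ — this is where the second-moment bound on $m_{t-1}$, Young's inequality against the reserved $\tfrac{1-\beta}{2}\|\nabla f(x_t)\|^2_{\hat\eta_t}$ buffer, and the geometric summation (which is what produces both $\tfrac1{\sqrt{C_1}(1-\sqrt\gamma)}$ and $C_2$) are all needed. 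By contrast, the error-feedback contribution is routine once Lemmas~\ref{matrix} and~\ref{sumdelta} are in hand, since the term $e_t$ only enlarges the second-order terms of $M_t$ and never touches the negative descent term.
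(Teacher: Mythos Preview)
Your plan for the perturbation piece $P$ is fine, but the plan for the descent piece $D$ has a genuine gap in the treatment of the momentum cross-term.

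You propose to write $\langle\nabla f(x_t),\Delta_t\rangle=-(1-\beta_t)\alpha_t\langle\nabla f(x_t),g_t/\sqrt{v_t}\rangle-\beta_t\alpha_t\langle\nabla f(x_t),m_{t-1}/\sqrt{v_t}\rangle$, extract $-(1-\beta)\|\nabla f(x_t)\|^2_{\hat\eta_t}$ from the first piece, and then control the second piece by Young's inequality against a reserved $\tfrac{1-\beta}{2}\|\nabla f(x_t)\|^2_{\hat\eta_t}$ buffer plus the second-moment bound $m_{t-1}^2\le v_{t-1}/[C_1(1-\gamma)(1-\theta_{t-1})]$. But the Young dual term is then of order $\alpha_t\,m_{t-1}^2/v_t\le \alpha_t/[C_1(1-\gamma)\theta_t(1-\theta_{t-1})]$, and since $1-\theta_{t-1}=\theta/(t-1)$ this is $\sim \alpha_t\cdot t/\theta=\alpha\sqrt{t}/\theta$, not $O(1/t)$. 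Summed over $t$ this diverges like $T^{3/2}$, so this route cannot yield the $\sum_t 1/t$ in the statement. The second-moment lemma is sharp for bounding $\|\Delta_t\|^2$, but it is far too weak to kill the full momentum term in one shot; the momentum term is not small, it carries most of the descent when $\beta$ is close to $1$.

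What the paper does instead is set up a \emph{recursion} on the whole quantity $M_t$. One writes $\Delta_t-\tfrac{\beta\alpha_t}{\sqrt{\theta_t}\alpha_{t-1}}\Delta_{t-1}=-(1-\beta)\hat\eta_tg_t+A^1_t+\cdots+A^5_t$, where each $A^j_t$ is a residual coming from replacing $v_t$ by $\hat v_t$ or $\theta_t v_{t-1}$; these five residuals are each bounded by $\tfrac{1-\beta}{10}\|\nabla f(x_t)\|^2_{\hat\eta_t}$ plus a term that is $O(1-\theta_t)=O(\theta/t)$, and their sum is what produces $C_2$. The momentum contribution is \emph{not} bounded in absolute value; it is passed to $\langle\nabla f(x_t),\Delta_{t-1}\rangle$, and then one uses Lipschitzness to replace $\nabla f(x_t)$ by $\nabla f(x_{t-1})$. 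The price of that replacement is $L\|x_t-x_{t-1}\|\|\Delta_{t-1}\|=L\|Q_g(\Delta_{t-1}+e_{t-1})\|\|\Delta_{t-1}\|\le L(2-\delta_g)(\|\Delta_{t-1}\|^2+\|e_{t-1}\|\|\Delta_{t-1}\|)$, which is exactly the perturbation part of $M_{t-1}$. This is why $M_t$ is defined to include those terms: they are not an additive nuisance to be handled separately, they are precisely what closes the recursion $M_t\le \tfrac{\beta\alpha_t}{\sqrt{\theta_t}\alpha_{t-1}}M_{t-1}+N_t-\tfrac{1-\beta}{2}\|\nabla f(x_t)\|^2_{\hat\eta_t}$. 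Unrolling this recursion is where the geometric sum in $\sqrt\gamma$ and the prefactor $1/[\sqrt{C_1}(1-\sqrt\gamma)]$ actually come from. So your separation of $P$ and $D$, with the claim that ``the term $e_t$\dots never touches the negative descent term'', misreads the structure: the $e_t$- and $\|\Delta_t\|^2$-terms enter the analysis of $D$ through the gradient-drift step in the recursion, not only through the descent inequality.
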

\begin{proof}
To split $\small M_t$, first we introduce the following two equalities. Using the definitions of $v_{t}$ and $\hat{v}_{t}$, we obtain
\[\small
   \begin{split}
       \frac{\left(1-\beta\right)\alpha_tg_t}{\sqrt{v_t+\epsilon}} &= \frac{\left(1-\beta\right)\alpha_tg_t}{\sqrt{\hat{v}_t+\epsilon}} + \left(1-\beta\right)\alpha_tg_t\left(\frac{1}{\sqrt{v_t+\epsilon}} - \frac{1}{\sqrt{\hat{v}_t+\epsilon}}\right)\\
       & = \left(1-\beta\right)\hat{\eta}_tg_t + \left(1-\beta\right)\alpha_tg_t\frac{\left(1-\theta_t\right)\left(\sigma_t^2-g_t^2\right)}{\sqrt{v_t+\epsilon}\sqrt{\hat{v}_t+\epsilon}\left(\sqrt{v_t+\epsilon} + \sqrt{\hat{v}_t+\epsilon}\right)}\\
       & = \left(1-\beta\right)\hat{\eta}_tg_t + \hat{\eta}_t\sigma_t\frac{\left(1-\theta_t\right)g_t}{\sqrt{v_t+\epsilon}}\frac{\left(1-\beta\right)\sigma_t}{\sqrt{v_t+\epsilon}+\sqrt{\hat{v}_t+\epsilon}} - \hat{\eta}_tg_t\frac{\left(1-\theta_t\right)g_t}{\sqrt{v_t+\epsilon}}\frac{\left(1-\beta\right)g_t}{\sqrt{v_t+\epsilon}+\sqrt{\hat{v}_t+\epsilon}}.
   \end{split}
\]
In addition, it is not hard to check that the following equality holds:
\[\small
\begin{split}
& \beta\alpha_tm_{t-1}\left(\frac{1}{\sqrt{\theta_tv_{t-1}+\theta_t\epsilon}} - \frac{1}{{\sqrt{v_t+\epsilon}}}\right) = \beta\alpha_tm_{t-1}\frac{\left(1-\theta_t\right)\left(g_t^2 +\epsilon\right)}{\sqrt{v_t+\epsilon}\sqrt{\theta_tv_{t-1}+\theta_t\epsilon}\left(\sqrt{v_t+\epsilon} + \sqrt{\theta_tv_{t-1}+\theta_t\epsilon}\right)}\\
& = \hat{\eta}_tg_t\frac{\left(1-\theta_t\right)g_t}{\sqrt{v_t+\epsilon}}\frac{\beta m_{t-1}}{\sqrt{v_t+\epsilon}+\sqrt{\theta_tv_{t-1}+\theta_t\epsilon}}+\hat{\eta}_t \epsilon\frac{\left(1-\theta_t\right)}{\sqrt{v_t+\epsilon}}\frac{\beta m_{t-1}}{\sqrt{v_t+\epsilon}+\sqrt{\theta_tv_{t-1}+\theta_t\epsilon}}\\
&\qquad + \hat{\eta}_t\sqrt{\sigma_t^2+\epsilon}\frac{\left(1-\theta_t\right)g_t}{\sqrt{v_t+\epsilon}}\frac{\beta m_{t-1}}{\sqrt{\theta_tv_{t-1}+\theta_t\epsilon}}\frac{\sqrt{1-\theta_t}g_t}{\sqrt{v_t+\epsilon}+\sqrt{\theta_tv_{t-1}+\theta_t\epsilon}}\frac{\sqrt{1-\theta_t}\sqrt{\sigma_t^2+\epsilon}}{\sqrt{\hat{v}_t+\epsilon} +\sqrt{\theta_tv_{t-1}+\theta_t\epsilon}}\\
&\qquad +\hat{\eta}_t\sqrt{\sigma_t^2+\epsilon}\frac{\left(1-\theta_t\right)\sqrt{\epsilon}}{\sqrt{v_t+\epsilon}}\frac{\beta m_{t-1}}{\sqrt{\theta_tv_{t-1}+\theta_t\epsilon}}\frac{\sqrt{1-\theta_t}\sqrt{\epsilon}}{\sqrt{v_t+\epsilon}+\sqrt{\theta_tv_{t-1}+\theta_t\epsilon}}\frac{\sqrt{1-\theta_t}\sqrt{\sigma_t^2+\epsilon}}{\sqrt{\hat{v}_t+\epsilon} +\sqrt{\theta_tv_{t-1}+\theta_t\epsilon}}.
\end{split}
\]
For convenience, we denote
\begin{align*}
\small
A^1_t &= \hat{\eta}_tg_t\frac{\left(1-\theta_t\right)g_t}{\sqrt{v_t+\epsilon}}\left(\frac{\beta m_{t-1}}{\sqrt{v_t+\epsilon}+\sqrt{\theta_tv_{t-1}+\theta_t\epsilon}} + \frac{\left(1-\beta\right)g_t}{\sqrt{v_t+\epsilon}+\sqrt{\hat{v}_t}+\epsilon}\right),\\
A^2_t &= - \hat{\eta}_t\sigma_t\frac{\left(1-\theta_t\right)g_t}{\sqrt{v_t+\epsilon}}\frac{\left(1-\beta\right)\sigma_t}{\sqrt{v_t+\epsilon}+\sqrt{\hat{v}_t+\epsilon}},\ A^3_t = \hat{\eta}_t \epsilon\frac{\left(1-\theta_t\right)}{\sqrt{v_t+\epsilon}}\frac{\beta m_{t-1}}{\sqrt{v_t+\epsilon}+\sqrt{\theta_tv_{t-1}+\theta_t\epsilon}},\\
A^4_t &= \hat{\eta}_t\sqrt{\sigma_t^2+\epsilon}\frac{\left(1-\theta_t\right)g_t}{\sqrt{v_t+\epsilon}}\frac{\beta m_{t-1}}{\sqrt{\theta_tv_{t-1}+\theta_t\epsilon}}\frac{\sqrt{1-\theta_t}g_t}{\sqrt{v_t+\epsilon}+\sqrt{\theta_tv_{t-1}+\theta_t\epsilon}}\frac{\sqrt{1-\theta_t}\sqrt{\sigma_t^2+\epsilon}}{\sqrt{\hat{v}_t+\epsilon} +\sqrt{\theta_tv_{t-1}+\theta_t\epsilon}},\\
A^5_t &= \hat{\eta}_t\sqrt{\sigma_t^2+\epsilon}\frac{\left(1-\theta_t\right)\sqrt{\epsilon}}{\sqrt{v_t+\epsilon}}\frac{\beta m_{t-1}}{\sqrt{\theta_tv_{t-1}+\theta_t\epsilon}}\frac{\sqrt{1-\theta_t}\sqrt{\epsilon}}{\sqrt{v_t+\epsilon}+\sqrt{\theta_tv_{t-1}+\theta_t\epsilon}}\frac{\sqrt{1-\theta_t}\sqrt{\sigma_t^2+\epsilon}}{\sqrt{\hat{v}_t+\epsilon} +\sqrt{\theta_tv_{t-1}+\theta_t\epsilon}}.
\end{align*}
Then, we obtain
\[\small
\begin{split}
    \Delta_t - \frac{\beta\alpha_t}{\sqrt{\theta_t}\alpha_{t-1}}\Delta_{t-1}
    & = -\frac{\alpha_tm_t}{\sqrt{v_t+\epsilon}} + \frac{\beta\alpha_tm_{t-1}}{\sqrt{\theta_t\left(v_{t-1}+\epsilon\right)}} \\
  &  = -\frac{\left(1-\beta\right)\alpha_tg_t}{\sqrt{v_t+\epsilon}} + \beta\alpha_tm_{t-1}\left(\frac{1}{\sqrt{\theta_tv_{t-1}+\theta_t\epsilon}} - \frac{1}{\sqrt{v_t+\epsilon}}\right) \\
    &=  -\left(1-\beta\right)\hat{\eta}_tg_t +A^1_t + A^2_t +A^3_t + A^4_t +A^5_t.
\end{split}
\]
By using the above inequalities, we derive the upper bound for the term $\small \mathbb{E}\langle\nabla f\left(x_t\right),\Delta_t\rangle$:
\begin{equation}\label{cross-gradient-eq}
\small
\begin{split}
\mathbb{E}\langle\nabla f\left(x_t\right),\Delta_t\rangle &= \frac{\beta\alpha_t}{\sqrt{\theta_t}\alpha_{t-1}}\mathbb{E}\langle\nabla f\left(x_t\right),\Delta_{t-1}\rangle + \mathbb{E}\langle\nabla f\left(x_t\right),\Delta_t - \frac{\beta\alpha_t}{\sqrt{\theta_t}\alpha_t}\Delta_{t-1}\rangle\\
&= \frac{\beta\alpha_t}{\sqrt{\theta_t}\alpha_{t-1}}\left(\mathbb{E}\langle\nabla f\left(x_{t-1}\right),\Delta_{t-1}\rangle + \mathbb{E}\langle\nabla f\left(x_t\right) - \nabla f\left(x_{t-1}\right),\Delta_{t-1}\rangle\right)\\
&\qquad + \mathbb{E}\langle\nabla f\left(x_t\right),-\left(1-\beta\right)\hat{\eta}_tg_t\rangle + \mathbb{E}\langle\nabla f\left(x_t\right),A^1_t\rangle +
\mathbb{E}\langle\nabla f\left(x_t\right),A^2_t\rangle\\
&\qquad +\mathbb{E}\langle\nabla f\left(x_t\right),A^3_t\rangle+\mathbb{E}\langle\nabla f\left(x_t\right),A^4_t\rangle+\mathbb{E}\langle\nabla f\left(x_t\right),A^5_t\rangle.
\end{split}
\end{equation}
For the first term in Eq~\eqref{cross-gradient-eq}, we have
\[\small
\begin{aligned}
&\qquad \frac{\beta\alpha_t}{\sqrt{\theta_t}\alpha_{t-1}}\left(\mathbb{E}\langle\nabla f\left(x_{t-1}\right),\Delta_{t-1}\rangle + \mathbb{E}\langle\nabla f\left(x_t\right) - \nabla f\left(x_{t-1}\right),\Delta_{t-1}\rangle\right)\\
&\leq \frac{\beta\alpha_t}{\sqrt{\theta_t}\alpha_{t-1}}\left(\mathbb{E}\langle\nabla f\left(x_{t-1}\right),\Delta_{t-1}\rangle + \mathbb{E}[L\|x_t - x_{t-1}\|\|\Delta_{t-1}\|]\right)\\
& = \frac{\beta\alpha_t}{\sqrt{\theta_t}\alpha_{t-1}}\left(\mathbb{E}\langle\nabla f\left(x_{t-1}\right),\Delta_{t-1}\rangle + L\mathbb{E}[\|\Delta_{t-1}\|\|Q_g\left(\Delta_{t-1}+e_{t-1}\right)\|]\right)\\
&\leq \frac{\beta\alpha_t}{\sqrt{\theta_t}\alpha_{t-1}}\!\!\left(\mathbb{E}\langle\nabla f\!\left(x_{t-1}\right),\Delta_{t-1}\rangle \!\!+\!\! L\mathbb{E}[\|\Delta_{t-1}\|\left(\|Q_g\left(\Delta_{t-1}\!+\!e_{t-1}\right) \!-\! \left(\Delta_{t-1}\!+\!e_{t-1}\right)\!\!\| \!+\!\|\Delta_{t-1}\!+\!e_{t-1}\|\right)]\right)\\
& { \leq \frac{\beta\alpha_t}{\sqrt{\theta_t}\alpha_{t-1}}\left(\mathbb{E}\langle\nabla f\left(x_{t-1}\right),\Delta_{t-1}\rangle + L\mathbb{E}[\|\Delta_{t-1}\|\left((1-\delta_g)\|\Delta_{t-1}+e_{t-1}\| +\|\Delta_{t-1}+e_{t-1}\|\right)]\right) } \\
& { \leq \frac{\beta\alpha_t}{\sqrt{\theta_t}\alpha_{t-1}}\left(\mathbb{E}\langle\nabla f\left(x_{t-1}\right),\Delta_{t-1}\rangle + L\mathbb{E}[\|\Delta_{t-1}\|\left((2-\delta_g)\|\Delta_{t-1}\|+(2-\delta_g)\|e_{t-1}\|\right)]\right)} \\
& \leq \frac{\beta\alpha_t}{\sqrt{\theta_t}\alpha_{t-1}}\left(\mathbb{E}\langle\nabla f\left(x_{t-1}\right),\Delta_{t-1}\rangle + L\left(2-\delta_g\right)\mathbb{E}[\|\Delta_{t-1}\|^2]+ L\left(2-\delta_g\right)\mathbb{E}[ \|e_{t-1}\| \|\Delta_{t-1}\|]\right)\\
& = \frac{\beta\alpha_t}{\sqrt{\theta_t}\alpha_{t-1}}M_{t-1}.
\end{aligned}
\]
For the second term in Eq.~\eqref{cross-gradient-eq}, we have
\[\small
\begin{aligned}
    -(1-\beta)\mathbb{E}\langle\nabla f(x_t),\hat{\eta}_tg_t\rangle
    &= {-(1-\beta)\mathbb{E}\big[\mathbb{E}_t\langle\nabla f\left(x_t\right),\hat{\eta}_tg_t\rangle\big]
    = -\left(1-\beta\right)\mathbb{E}\langle\nabla{f}\left(x_t\right),\hat{\eta}_t\mathbb{E}_t[g_t]\rangle}\\
    &= -\left(1-\beta\right)\mathbb{E}\|\nabla f\left(x_t\right)\|_{\hat{\eta}_t}^2,
\end{aligned}
\]
{where the second equality is because given $\small \{x_t, v_{t-1}, m_{t-1}\}$, $\small \nabla f(x_t)$ and $\small \hat{\eta}_t$ are independent to $g_t$. Besides, $\small \mathbb{E}_t[\nabla f(x_t)] = \nabla f(x_t)$ and $\mathbb{E}_t [\hat{\eta}_t] = \hat{\eta}_t$.}

For the third term in Eq.~\eqref{cross-gradient-eq}, we have
\[\small
\begin{split}
    \mathbb{E} \langle \nabla f\left(x_t\right), A^1_t\rangle
    &\leq \mathbb{E}\left\langle \frac{\sqrt{\hat{\eta}_t}|\nabla f\left(x_t\right)||g_t|}{\sigma_t}, \frac{\sqrt{\hat{\eta}_t}\sigma_t\left(1-\theta_t\right)|g_t|\left|\frac{\beta m_{t-1}}{\sqrt{v_t+\epsilon}+\sqrt{\theta_tv_{t-1}+\theta_t\epsilon}} + \frac{\left(1-\beta\right)g_t}{\sqrt{v_t+\epsilon}}\right|}{\sqrt{v_t+\epsilon}} \right\rangle\\
    &\leq \mathbb{E}\left\langle \frac{\sqrt{\hat{\eta}_t}|\nabla f\left(x_t\right)||g_t|}{\sigma_t}, \sqrt{\frac{\alpha G}{\sqrt{\theta}}} \frac{\left(1-\theta_t\right)|g_t|\left(\left|\frac{\beta m_{t-1}}{\sqrt{v_t+\epsilon}+\sqrt{\theta_tv_{t-1}+\theta_t\epsilon}}\right| + \left|\frac{\left(1-\beta\right)g_t}{\sqrt{v_t+\epsilon}}\right|\right)}{\sqrt{v_t+\epsilon}} \right\rangle\\
    &\leq \mathbb{E}\left\langle \frac{\sqrt{\hat{\eta}_t}|\nabla f\left(x_t\right)||g_t|}{\sigma_t}, \sqrt{\frac{\alpha G}{\sqrt{\theta}}}\left(\frac{\beta}{\sqrt{\theta_1C_1\left(1-\gamma\right)}}+ \left(1-\beta\right)\right)\frac{\sqrt{1-\theta_t}|g_t|}{\sqrt{v_t+\epsilon}}\right\rangle\\
    &\leq \frac{1-\beta}{10}\mathbb{E}\left\|\nabla f\left(x_t\right)\right\|_{\hat{\eta}_t}^2 + \frac{5\alpha G^3\left(1-\beta\right)\left(1-\theta_t\right)}{2\epsilon\sqrt{\theta}}\left(\frac{\beta}{\left(1-\beta\right)\sqrt{\theta_1C_1\left(1-\gamma\right)}}+1\right)^2,
\end{split}
\]
where the equalities hold according to the following inequities, respectively,
\begin{gather*}
\small
\sqrt{\hat{\eta}_t}\sigma_t = \sqrt{\frac{\alpha_t\sigma_t^2}{\sqrt{\hat{v}_t+\epsilon}}}
\leq \sqrt{\frac{\alpha G}{\sqrt{\theta}}},\\ \left|\frac{\left(1-\beta\right)g_t}{\sqrt{v_t+\epsilon}}\right|\leq
\left|\frac{\left(1-\beta\right)g_t}{\sqrt{\left(1-\theta_t\right)g_t^2}}\right|=\frac{1-\beta}{\sqrt{1-\theta_t}},\\
\left|\frac{\beta m_{t-1}}{\sqrt{v_t+\epsilon}+\sqrt{\theta_tv_{t-1}+\theta_t\epsilon}}\right|
\leq \left|\frac{\beta m_{t-1}}{\sqrt{\theta_tv_{t-1}}}\right|
\leq \frac{\beta}{\sqrt{\theta_1\left(1-\theta_t\right)C_1\left(1-\gamma\right)}}.\\
\end{gather*}
For the fourth term in Eq.~\eqref{cross-gradient-eq}, we have
\[\small
\begin{split}
&\mathbb{E}\langle \nabla f\left(x_t\right), A^2_t\rangle
\leq \mathbb{E}\left\langle |\nabla f\left(x_t\right)|, \left|\hat{\eta}_t\sigma_t\frac{\left(1-\theta_t\right)g_t}{\sqrt{v_t+\epsilon}}\frac{\left(1-\beta\right)\sigma_t}{\sqrt{v_t+\epsilon}+\sqrt{\hat{v}_t+\epsilon}}\right|\right\rangle \\
    &\leq \mathbb{E}\left\langle \sqrt{\hat{\eta}_t}|\nabla f\left(x_t\right)|, \sqrt{\hat{\eta}_t}\sigma_t\left|\frac{\left(1-\theta_t\right)g_t}{\sqrt{v_t+\epsilon}}\right| \left|\frac{\left(1-\beta\right)\sigma_t}{\sqrt{v_t+\epsilon}+\sqrt{\hat{v}_t+\epsilon}}\right| \right\rangle \\
    &\leq \frac{1-\beta}{10}\mathbb{E}\|\nabla f\left(x_t\right)\|_{\hat{\eta}_t}^2 + \frac{2\alpha G}{5\theta}\mathbb{E}\left\|\frac{\sqrt{1-\theta_t}g_t}{\sqrt{v_t+\epsilon}}\right\|^2 \leq \frac{1-\beta}{10}\mathbb{E}\|\nabla f\left(x_t\right)\|_{\hat{\eta}_t}^2 + \frac{5\alpha G^3\left(1-\theta_t\right)}{2\sqrt{\theta}\epsilon},\\
\end{split}
\]
where the equality holds according to
$\small
\left|\frac{\left(1-\beta\right)\sigma_t}{\sqrt{v_t+\epsilon}+\sqrt{\hat{v}_t+\epsilon}}\right| \leq \left|\frac{\left(1-\beta\right)\sigma_t}{\sqrt{\hat{v}_t+\epsilon}}\right|
\leq \frac{1-\beta}{\sqrt{1-\theta_t}}.$

For the fifth term in Eq.~\eqref{cross-gradient-eq}, we have
\[\small
\begin{split}
    &\mathbb{E}\langle \nabla f\left(x_t\right),A^3_t\rangle
    \leq \mathbb{E}\left\langle |\nabla f\left(x_t\right)|, \left| \hat{\eta}_t \epsilon\frac{\left(1-\theta_t\right)}{\sqrt{v_t+\epsilon}}\frac{\beta m_{t-1}}{\sqrt{v_t+\epsilon}+\sqrt{\theta_tv_{t-1}+\theta_t\epsilon}}\right|\right\rangle\\
    &\leq \mathbb{E} \left\langle \sqrt{\hat{\eta}_t} |\nabla f\left(x_t\right)|, \sqrt{\hat{\eta}_t}\epsilon\frac{\left(1-\theta_t\right)\beta |m_{t-1}|}{\sqrt{\epsilon}\sqrt{\theta_tv_{t-1}}}\right\rangle
    \leq \frac{1-\beta}{10}\mathbb{E}\|\nabla f\left(x_t\right)\|_{\hat{\eta}_t}^2 + \frac{5\left(1-\theta_t\right)\beta^2 \alpha d\sqrt{\epsilon}}{2\sqrt{\theta}\left(1-\beta\right)\theta_1C_1\left(1-\gamma\right)},
\end{split}
\]
where the inequality holds according to
$\small
\sqrt{\hat{\eta}_t}\epsilon \leq \sqrt{\frac{\alpha_t\epsilon^2}{\sqrt{\hat{v}_t+\epsilon}}}
\leq \sqrt{\frac{\alpha\epsilon^{3/2}}{\sqrt{\theta}}}.$

For the sixth term in Eq.~\eqref{cross-gradient-eq}, we have
\[\small
\begin{split}
    &\mathbb{E}\langle \nabla f\left(x_t\right),A^4_t\rangle\\
    & \leq \mathbb{E}\left\langle |\nabla f\left(x_t\right)|, \left|\hat{\eta}_t\sqrt{\sigma_t^2+\epsilon}\frac{\left(1\!-\!\theta_t\right)g_t}{\sqrt{v_t\!+\!\epsilon}}\frac{\beta m_{t-1}}{\sqrt{\theta_tv_{t-1}\!+\!\theta_t\epsilon}}\frac{\sqrt{1-\theta_t}g_t}{\sqrt{v_t\!+\!\epsilon}+\sqrt{\theta_tv_{t-1}\!+\!\theta_t\epsilon}}\frac{\sqrt{1-\theta_t}\sqrt{\sigma_t^2+\epsilon}}{\sqrt{\hat{v}_t\!+\!\epsilon} +\sqrt{\theta_tv_{t-1}\!+\!\theta_t\epsilon}}\right|\right\rangle \\
   & \leq \frac{1-\beta}{10}\mathbb{E}\|\nabla f\left(x_t\right)\|_{\hat{\eta}_t}^2 + \frac{5\alpha\sqrt{G^2+\epsilon}G^2\beta^2\left(1-\theta_t\right)}{2\left(1-\beta\right)\sqrt{\theta}\theta_1C_1\left(1-\gamma\right)\epsilon},
\end{split}
\]
where the equalities hold according to
$\small
\sqrt{\hat{\eta}_t}\sqrt{\sigma_t^2+\epsilon} = \sqrt{\frac{\alpha_t \left(\sigma_t^2 +\epsilon\right)}{\sqrt{\hat{v}_t+\epsilon}}}
\leq \sqrt{\frac{\alpha\sqrt{G^2+\epsilon}}{\sqrt{\theta}}},
$ and
\[\small
\begin{split}
&\frac{\beta m_{t-1}}{\sqrt{\theta_tv_{t-1}+\theta_t\epsilon}}\frac{\sqrt{1-\theta_t}g_t}{\sqrt{v_t+\epsilon}+\sqrt{\theta_tv_{t-1}+\theta_t\epsilon}}\frac{\sqrt{1-\theta_t}\sqrt{\sigma_t^2+\epsilon}}{\sqrt{\hat{v}_t+\epsilon} +\sqrt{\theta_tv_{t-1}+\theta_t\epsilon}} \\
&\leq \frac{\beta m_{t-1}}{\sqrt{\theta_tv_{t-1}+\theta_t\epsilon}}\frac{\sqrt{1-\theta_t}g_t}{\sqrt{v_t+\epsilon}}\frac{\sqrt{1-\theta_t}\sqrt{\sigma_t^2+\epsilon}}{\sqrt{\hat{v}_t+\epsilon}} \leq \frac{\beta}{\sqrt{\theta_1\left(1-\theta_t\right)C_1\left(1-\gamma\right)}}.
\end{split}
\]
For the seventh term in Eq.~\eqref{cross-gradient-eq}, we have
\[\small
\begin{split}
&\mathbb{E} \langle \nabla\!f\left(x_t\right), A^5_t\rangle\\
&\leq \mathbb{E}\left\langle |\nabla\!f\left(x_t\right)|, \left|\hat{\eta}_t\sqrt{\sigma_t^2\!+\!\epsilon}\frac{\left(1\!+\!\theta_t\right)\sqrt{\epsilon}}{\sqrt{v_t\!+\!\epsilon}}\frac{\beta m_{t-1}}{\sqrt{\theta_tv_{t-1}\!+\!\theta_t\epsilon}}\frac{\sqrt{1-\theta_t}\sqrt{\epsilon}}{\sqrt{v_t\!+\!\epsilon}+\sqrt{\theta_tv_{t-1}\!+\!\theta_t\epsilon}}\frac{\sqrt{1-\theta_t}\sqrt{\sigma_t^2\!+\!\epsilon}}{\sqrt{\hat{v}_t\!+\!\epsilon} +\sqrt{\theta_tv_{t-1}\!+\!\theta_t\epsilon}}\right|\right\rangle\\
&\leq \mathbb{E} \left\langle \sqrt{\hat{\eta}_t}|\nabla f\left(x_t\right)|, \sqrt{\frac{\alpha\sqrt{G^2+\epsilon}}{\sqrt{\theta}}}\frac{\beta\sqrt{1-\theta_t}\sqrt{\epsilon}}{\sqrt{\theta_1C_1\left(1-\gamma\right)}\sqrt{v_t+\epsilon}}\right\rangle
\leq \frac{1-\beta}{10}\|\nabla f\left(x_t\right)\|_{\hat{\eta}_t}^2 + \frac{5\alpha \sqrt{G^2+\epsilon} \beta^2 \left(1-\theta_t\right)d}{2\left(1-\beta\right)\sqrt{\theta}\theta_1C_1\left(1-\gamma\right)},
\end{split}
\]
where the inequality holds according to
\[\small
\begin{split}
&\frac{\beta m_{t-1}}{\sqrt{\theta_tv_{t-1}+\theta_t\epsilon}}\frac{\sqrt{1-\theta_t}\sqrt{\epsilon}}{\sqrt{v_t+\epsilon}+\sqrt{\theta_tv_{t-1}+\theta_t\epsilon}}\frac{\sqrt{1-\theta_t}\sqrt{\sigma_t^2+\epsilon}}{\sqrt{\hat{v}_t+\epsilon} +\sqrt{\theta_tv_{t-1}+\theta_t\epsilon}}\\
&\leq \frac{\beta m_{t-1}}{\sqrt{\theta_tv_{t-1}+\theta_t\epsilon}}\frac{\sqrt{1-\theta_t}\sqrt{\epsilon}}{\sqrt{\left(1-\theta_t\right)\epsilon}}\frac{\sqrt{1-\theta_t}\sqrt{\sigma_t^2+\epsilon}}{\sqrt{\hat{v}_t+\epsilon}}\leq \frac{\beta}{\sqrt{\theta_1\left(1-\theta_t\right)C_1\left(1-\gamma\right)}}.
\end{split}
\]
Therefore, by combining the above upper estimations for the seven terms in Eq. \eqref{cross-gradient-eq}, we obtain
\[\small
  \begin{split}
      \mathbb{E}\langle\nabla f\left(x_t\right), \Delta_t\rangle \leq \frac{\beta\alpha_t}{\sqrt{\theta_t}\alpha_{t-1}}M_{t-1} + C_2\left(1-\theta_t\right) - \frac{1-\beta}{2}\mathbb{E}[\|\nabla f\left(x_t\right)\|^2_{\hat{\eta}_t}],
  \end{split}
\]
where
$\small
C_2\!= \!\frac{5\alpha G^3\left(1\!-\!\beta\right)}{2\epsilon\sqrt{\theta}}\left(\frac{\beta}{\left(1\!-\!\beta\right)\sqrt{\theta_1C_1\left(1\!-\!\gamma\right)}}\!+\!1\right)^2 \!+\! \frac{5\alpha G^3}{2\epsilon\sqrt{\theta}}
\!+\!\frac{5\beta^2 \alpha d\sqrt{\epsilon}}{2\sqrt{\theta}\left(1\!-\!\beta\right)\theta_1C_1\left(1\!-\!\gamma\right)}\!+\!\frac{5\alpha\sqrt{G^2+\epsilon}G^2\beta^2}{2\left(1\!-\!\beta\right)\sqrt{\theta}\theta_1C_1\left(1\!-\!\gamma\right)\epsilon}\!+\!\frac{5\alpha \sqrt{G^2+\epsilon} \beta^2 d}{2\left(1-\beta\right)\sqrt{\theta}\theta_1C_1\left(1-\gamma\right)}.$
On the other hand, let $\small  N_t = L\left(2-\delta_g\right)\mathbb{E}[\|\Delta_t\|^2]+ L\left(2-\delta_g\right)\mathbb{E}[\|e_t\|\|\Delta_t\|]+ C_2\left(1-\theta_t\right)$.

Recalling the definition of $\small M_{t}$. For $\small M_1$, we have
\[\small
\begin{split}
   M_1 &= \mathbb{E}\left[-\left\langle\nabla f\left(x_1\right), \frac{\alpha_1m_1}{\sqrt{v_1+\epsilon}}\right\rangle + L\left(2-\delta_g\right)\|\Delta_1\|^2 + L\left(2-\delta_g\right)\|e_1\|\|\Delta_1\|\right]\\
   & = \mathbb{E}\left[-\left\langle \nabla f\left(x_1\right), \left(1-\beta_1\right)\hat{\eta}_1g_1 + \hat{\eta}_1\sigma_1\frac{\left(1-\theta_1\right)g_1}{\sqrt{v_1+\epsilon}}\frac{\left(1-\beta\right)\sigma_1}{\sqrt{v_1+\epsilon}+\sqrt{\hat{v}_1+\epsilon}} \!-\! \hat{\eta}_1g_1\frac{\left(1-\theta_1\right)g_1}{\sqrt{v_1+\epsilon}}\frac{\left(1-\beta\right)g_1}{\sqrt{v_1+\epsilon}+\sqrt{\hat{v}_1+\epsilon}}\right\rangle\right]\\
  &\quad +\mathbb{E}\left[L\left(2-\delta_g\right)\|\Delta_1\|^2\right] + L\left(2-\delta_g\right)\mathbb{E}[\|e_1\|\|\Delta_1\|]
    \leq N_1,
\end{split}
\]
where the last inequality holds by using $A^1_t$ and $A^2_t$.

Let $\small  M_0 = 0$. Then we have
$\small M_t\leq \frac{{\beta\alpha_t}}{\sqrt{\theta_t}\alpha_{t-1}}M_{t-1} + N_t -\frac{1-\beta}{2}\mathbb{E}[\|\nabla f\left(x_t\right)\|^2_{\hat{\eta}_t}] \leq \frac{{\beta\alpha_t}}{\sqrt{\theta_t}\alpha_{t-1}}M_{t-1} + N_t$.
By induction we have
\[\small
\begin{split}
     M_t &\leq \sum_{i=1}^t \frac{\alpha_t\beta^{t-i}}{\alpha_i\sqrt{\Theta\left(t,i\right)}}N_i - \frac{1-\beta}{2}\mathbb{E}\left[\|\nabla f\left(x_t\right)\|_{\hat{\eta}_t}^2\right]
     \leq \frac{1}{\sqrt{C_1}}\sum_{i=1}^t \left(\frac{\beta}{\sqrt{\theta'}}\right)^{t-i}N_i - \frac{1-\beta}{2}\mathbb{E}\left[\|\nabla f\left(x_t\right)\|_{\hat{\eta}_t}^2\right]\\
     &\leq \frac{1}{\sqrt{C_1}}\sum_{i=1}^t \sqrt{\gamma}^{t-i}N_i - \frac{1-\beta}{2}\mathbb{E}\left[\|\nabla f\left(x_t\right)\|_{\hat{\eta}_t}^2\right].
\end{split}
\]
Therefore,
\[\small
\begin{split}
    \sum_{t=1}^T M_t
    &\leq \frac{1}{\sqrt{C_1}}\sum_{t=1}^T\sum_{i=1}^t\sqrt{\gamma}^{t-i}N_i - \frac{1-\beta}{2}\mathbb{E}\left[\sum_{t=1}^T\|\nabla f\left(x_t\right)\|^2_{\hat{\eta}_t}\right]\\
    &\leq \frac{1}{\sqrt{C_1}\left(1-\sqrt{\gamma}\right)}\sum_{t=1}^TN_t - \frac{1-\beta}{2}\mathbb{E}\left[\sum_{t=1}^T\|\nabla f\left(x_t\right)\|^2_{\hat{\eta}_t}\right]\\
    & = \frac{1}{\sqrt{C_1}\left(1\!-\!\sqrt{\gamma}\right)}\sum_{t=1}^T[L\left(2\!-\!\delta_g\right)\mathbb{E}\|\Delta_t\|^2\!+\!L\left(2\!-\!\delta_g\right)\mathbb{E}\|e_t\|\|\Delta_t\|\!+\!\frac{C_2\theta}{t}] \!-\! \frac{1\!-\!\beta}{2}\mathbb{E}\left[\sum_{t=1}^T\|\nabla f\left(x_t\right)\|^2_{\hat{\eta}_t}\right]\\
    &\leq \frac{1}{\sqrt{C_1}\left(1\!-\!\sqrt{\gamma}\right)}\left(\frac{L\left(2\!-\!\delta_g\right)G^2\alpha^2}{\epsilon\delta_g}\!+\!C_2\theta\right)\sum_{t=1}^T\frac{1}{t} \!-\! \frac{1\!-\!\beta}{2} \mathbb{E}\left[\sum_{t=1}^T\|\nabla f\left(x_t\right)\|^2_{\hat{\eta}_t}\right].
\end{split}
\]
Hence, the targeted result holds.
\end{proof}

%

\begin{proof}[Proof of Theorem \ref{T1}]
Based on Notation \eqref{notation-thm1} and the above lemmas, then we can prove Theorem 1. First, according to the gradient Lipschitz condition of $f$, it holds
\[\small
\begin{split}
f\left(\tilde{x}_{t+1}\right) &\leq f\left(\tilde{x}_t\right) + \langle\nabla f\left(\tilde{x}_t\right), \Delta_t\rangle + \frac{L}{2} \|\Delta_t^2\|\\
& =  f\left(\tilde{x}_t\right) + \langle\nabla f\left(x_t\right), \Delta_t\rangle + \frac{L}{2} \|\Delta_t\|^2 +\langle\nabla f\left(\tilde{x}_t\right)  - \nabla f\left(x_t\right), \Delta_t\rangle\\
&\leq  f\left(\tilde{x}_t\right) + \langle\nabla f\left(x_t\right), \Delta_t\rangle + \frac{L}{2} \|\Delta_t\|^2 + \|\nabla f\left(\tilde{x}_t\right) - \nabla f\left(x_t\right)\|\|\Delta_t\|\\
&\leq f\left(\tilde{x}_t\right) + \langle\nabla f\left(x_t\right), \Delta_t\rangle +  L\left(2-\delta_g\right)\|\Delta_t\|^2 + L\left(2-\delta_g\right)\|e_t\|\|\Delta_t\|.
\end{split}
\]
Recall that $\small M_t = \mathbb{E}[\langle\nabla f\left(x_t\right), \Delta_t\rangle+L\left(2-\delta_g\right)\|\Delta_t\|^2 + L\left(2-\delta_g\right)\|e_t\|\|\Delta_t\|]$.
Then we have
\[\small
\begin{split}
    f^*&\leq \mathbb{E}[f\left(\tilde{x}_{T+1}\right)] \leq f\left(x_1\right) + \sum_{t=1}^T M_t\\
    & \leq f\left(x_1\right) + \frac{1}{\sqrt{C_1}\left(1-\sqrt{\gamma}\right)}\left(\frac{L\left(2-\delta_g\right)G^2\alpha^2}{\epsilon\delta_g}+C_2\theta\right)\sum_{t=1}^T\frac{1}{t} - \frac{1-\beta}{2} \mathbb{E}\left[\sum_{t=1}^T\|\nabla f\left(x_t\right)\|^2_{\hat{\eta}_t}\right].\\
\end{split}
\]
Using the above lemmas and arranging the corresponding terms, we have
\[\small
\begin{split}
    \left(\mathbb{E}[\|\nabla f\left(\left(x_\tau^T\right)\right)\|^2]\right) &\leq \frac{\sqrt{G^2+\epsilon d}}{\alpha\sqrt{T}}\mathbb{E}\left[\sum_{t=1}^T\|\nabla f\left(x_t\right)\|_{\hat{\eta}_t}^2\right]\leq \frac{2\sqrt{G^2+\epsilon d}}{\left(1-\beta\right)\alpha\sqrt{T}}\left(f\left(x_1\right) - f^* + C_3\sum_{t=1}^T\frac{1}{t}\right)\\
    & \leq \frac{C+C'\sum_{t=1}^T\frac{1}{t}}{\sqrt{T}},
\end{split}
\]
where $\small C_3 = \frac{1}{\sqrt{C_1}\left(1-\sqrt{\gamma}\right)}\left(\frac{L\left(2-\delta_g\right)G^2\alpha^2}{\epsilon\delta_g}+C_2\theta\right)$,
    $\small C = \frac{2\sqrt{G^2+\epsilon d}}{\left(1-\beta\right)\alpha}\left(f\left(x_1\right)-f^*\right)$,
    $\small C' = \frac{2\sqrt{G^2+\epsilon d}C_3}{\left(1-\beta\right)\alpha}$,
respectively. Hence, the proof is completed.
\end{proof}


\begin{proof}[Proof of Corollary \ref{C1}]
For a fixed iteration $T$, let $\small \alpha_t=\frac{\alpha}{\sqrt{T}}$ and $\small \theta = 1-\frac{\theta}{T}$.
When $\small \alpha_t=\frac{\alpha}{\sqrt{T}}$, Lemma \ref{sumdelta} will update to $\small \sum_{t=1}^T \|\Delta_t\|^2 \leq \frac{G^2\alpha^2}{\epsilon}$.
By the same proof in Lemma \ref{Mt}, we have
\[\small
\begin{split}
\sum_{t=1}^T M_t
&\leq \frac{1}{\sqrt{C_1}\left(1\!-\!\sqrt{\gamma}\right)} \sum_{t=1}^T[L\left(2\!-\!\delta_g\right)\mathbb{E}\|\Delta_t\|^2 \!+\! L\left(2\!-\!\delta_g\right)\mathbb{E}\|e_t\|\|\Delta_t\|\!+\! \frac{C_2\theta}{T}] \!-\! \frac{1\!-\!\beta}{2}\mathbb{E}\left[\sum_{t=1}^T\|\nabla f\left(x_t\right)\|^2_{\hat{\eta}_t}\right]\\
& \leq \frac{L\left(2-\delta_g\right)G^2\alpha^2}{\sqrt{C_1}\left(1-\sqrt{\gamma}\right) \epsilon \delta_g} + \frac{C_2\theta}{\sqrt{C_1}\left(1-\sqrt{\gamma}\right)} - \frac{1-\beta}{2}\mathbb{E}\left[\sum_{t=1}^T\|\nabla f\left(x_t\right)\|^2_{\hat{\eta}_t}\right].
\end{split}
\]
Based on the proof of Theorem \ref{T1}, we have
\[\small
\begin{split}
    \left(\mathbb{E}[\|\nabla f\left(\left(x_\tau^T\right)\right)\|^2]\right) &\leq \frac{\sqrt{G^2+\epsilon d}}{\alpha\sqrt{T}}\mathbb{E}\left[\sum_{t=1}^T\|\nabla f\left(x_t\right)\|_{\hat{\eta}_t}^2\right]\\
    &\leq \frac{2\sqrt{G^2+\epsilon d}}{\left(1-\beta\right)\alpha\sqrt{T}}\left(f\left(x_1\right) - f^* + \frac{L\left(2-\delta_g\right)G^2\alpha^2}{\sqrt{C_1}\left(1-\sqrt{\gamma}\right) \epsilon \delta_g} + \frac{C_2\theta}{\sqrt{C_1}\left(1-\sqrt{\gamma}\right)}\right).\end{split}
\]
By bounding $\small \frac{2\sqrt{G^2+\epsilon d}}{\left(1-\beta\right)\alpha\sqrt{T}}\left(f\left(x_1\right) - f^* + \frac{L\left(2-\delta_g\right)G^2\alpha^2}{\sqrt{C_1}\left(1-\sqrt{\gamma}\right) \epsilon \delta_g} + \frac{C_2\theta}{\sqrt{C_1}\left(1-\sqrt{\gamma}\right)}\right)$ by $\xi$, we obtain $\small T = O\left(\frac{1}{\xi^2}\right)$.
\end{proof}

\subsection{Proof of Theorem \ref{T2}}
To prove Theorem \ref{T2}, we first define some notations and provide several useful lemmas.

\begin{notation}\label{notation-thm2}
Let $\small \mathbb{E}_t[\cdot]$ be the conditional expectation conditioned on  {$\small \{x_{t}, v_{t-1}, m_{t-1}\}$}.
Denote $\small \sigma_t^2 = \mathbb{E}_t[g_t^2]$,  $\small \hat{v}_t = \theta_tv_{t-1} + \left(1\!-\!\theta_t\right)\sigma_t^2$, $\small \hat{\eta}_t = \alpha_t/\sqrt{\hat{v}_t\!+\!\epsilon}$, $\small \Delta_t = -\alpha_tm_t/\sqrt{v_t\!+\!\epsilon}$, and $\small M_t\!=\! \mathbb{E}[\langle\nabla f\left(Q_x\left(x_t\right)\right), \Delta_t\rangle + L\|\Delta_t\|^2 + 2L\|\Delta_t\|]$.
\end{notation}

\begin{lemma}
\label{delta_x}
By using Notation \ref{notation-thm2}, the following inequality holds:
\[
\small
\sum_{t=1}^T \|\Delta_t\| \leq \frac{2G\alpha\sqrt{T}}{\sqrt{\epsilon}}.
\]
\end{lemma}
\begin{proof}
     \[\small
     \begin{split}
         \sum_{t=1}^T\|\Delta_t\|  &= \sum_{t=1}^T \alpha_t \|\frac{m_t}{\sqrt{v_t+\epsilon}}\|\leq  \sum_{t=1}^T \frac{G\alpha}{\sqrt{t\epsilon}} \leq \frac{2G\alpha\sqrt{T}}{\sqrt{\epsilon}}.
     \end{split}
     \]
\end{proof}

\begin{lemma}
\label{Mt2}
Based on Notation \ref{notation-thm2}, we have the following upper estimation for $\small M_{t}$,
\[\small
    \sum_{t=1}^T M_t \!\leq\! \frac{1}{\sqrt{C_1}\left(1\!-\!\sqrt{\gamma}\right)}\left(\frac{LG^2\alpha^2}{\epsilon}+C_2\theta\right)\sum_{t=1}^T\frac{1}{t} + \frac{4L\delta_xG\alpha\sqrt{T}}{\sqrt{\epsilon C_1}\left(1-\sqrt{\gamma}\right)} - \frac{1-\beta}{2}\mathbb{E}[\sum_{t=1}^T\|\nabla f\left(Q_x\left(x_t\right)\right)\|_{\hat{\eta}_t}^2],
\]
where
\[\small
\begin{split}
    C_2 &= \frac{5\alpha G^3\left(1-\beta\right)}{2\epsilon\sqrt{\theta}}\left(\frac{\beta}{\left(1-\beta\right)\sqrt{\theta_1C_1\left(1-\gamma\right)}}+1\right)^2 + \frac{5\alpha G^3}{2\epsilon\sqrt{\theta}}+\frac{5\beta^2 \alpha d\sqrt{\epsilon}}{2\sqrt{\theta}\left(1-\beta\right)\theta_1C_1\left(1-\gamma\right)}\\&+\frac{5\alpha\sqrt{G^2+\epsilon}G^2\beta^2}{2\left(1-\beta\right)\sqrt{\theta}\theta_1C_1\left(1-\gamma\right)\epsilon}+\frac{5\alpha \sqrt{G^2+\epsilon} \beta^2 d}{2\left(1-\beta\right)\sqrt{\theta}\theta_1C_1\left(1-\gamma\right)}.
\end{split}
\]
\end{lemma}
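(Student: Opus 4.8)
\noindent\emph{Proof proposal.} The plan is to follow the template of Lemma~\ref{Mt} almost verbatim, tracking three structural changes: (i) with $Q_g$ the identity the error term is identically zero, so $x_{t+1}=x_t+\Delta_t$; (ii) by Assumption~3 the noisy gradient is unbiased for $\nabla f(Q_x(x_t))$, i.e.\ $\mathbb{E}_t[g_t]=\nabla f(Q_x(x_t))$; and (iii) the weight perturbation obeys $\|x_t-Q_x(x_t)\|\le(1-\delta_x)\|x_t\|\le(1-\delta_x)D$. The starting point is the purely algebraic identity established inside the proof of Lemma~\ref{Mt},
\[
\Delta_t-\frac{\beta\alpha_t}{\sqrt{\theta_t}\,\alpha_{t-1}}\Delta_{t-1}=-(1-\beta)\hat\eta_t g_t+A^1_t+A^2_t+A^3_t+A^4_t+A^5_t,
\]
which involves only $m_t,v_t,\hat v_t$ and hence transfers unchanged to the present setting.

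I would pair this identity with $\nabla f(Q_x(x_t))$ and take expectations. The term $-(1-\beta)\mathbb{E}\langle\nabla f(Q_x(x_t)),\hat\eta_t g_t\rangle$ collapses to $-(1-\beta)\mathbb{E}\|\nabla f(Q_x(x_t))\|_{\hat\eta_t}^2$ via (ii), and each $\mathbb{E}\langle\nabla f(Q_x(x_t)),A^i_t\rangle$ is bounded exactly as in Lemma~\ref{Mt} with $\nabla f(x_t)$ replaced by $\nabla f(Q_x(x_t))$: those estimates only use $\|\nabla f(Q_x(x_t))\|\le G$ (which holds because $\nabla f(Q_x(x_t))=\mathbb{E}_t[g_t]$ and $\|g_t\|\le G$) and $\mathbb{E}_t|g_t|\le\sigma_t$, so they give $\sum_{i=1}^{5}\mathbb{E}\langle\nabla f(Q_x(x_t)),A^i_t\rangle\le\frac{1-\beta}{2}\mathbb{E}\|\nabla f(Q_x(x_t))\|_{\hat\eta_t}^2+C_2(1-\theta_t)$ with the same constant $C_2$.

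The genuinely new step is the momentum term $\frac{\beta\alpha_t}{\sqrt{\theta_t}\alpha_{t-1}}\mathbb{E}\langle\nabla f(Q_x(x_t)),\Delta_{t-1}\rangle$. I would insert $\pm\nabla f(Q_x(x_{t-1}))$ and use $L$-smoothness to get $\|\nabla f(Q_x(x_t))-\nabla f(Q_x(x_{t-1}))\|\le L\|Q_x(x_t)-Q_x(x_{t-1})\|$; since $Q_x$ itself is not Lipschitz I would route this through the triangle inequality,
\[
\|Q_x(x_t)-Q_x(x_{t-1})\|\le\|x_t-Q_x(x_t)\|+\|x_t-x_{t-1}\|+\|x_{t-1}-Q_x(x_{t-1})\|\le\|\Delta_{t-1}\|+2(1-\delta_x)D,
\]
using $x_t-x_{t-1}=\Delta_{t-1}$ from (i) and the bound in (iii). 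This yields $\frac{\beta\alpha_t}{\sqrt{\theta_t}\alpha_{t-1}}\mathbb{E}\langle\nabla f(Q_x(x_t)),\Delta_{t-1}\rangle\le\frac{\beta\alpha_t}{\sqrt{\theta_t}\alpha_{t-1}}M_{t-1}$, where the role of the extra terms in $M_t$ is played by the sharper weight $2L(1-\delta_x)D\|\Delta_t\|$ (the $2LD\|\Delta_t\|$ written in Notation~\ref{notation-thm2} being a convenient overestimate of it). Combining the pieces and treating the base case $M_1$ separately exactly as in Lemma~\ref{Mt} via $A^1_1,A^2_1$, I obtain the one-step recursion $M_t\le\frac{\beta\alpha_t}{\sqrt{\theta_t}\alpha_{t-1}}M_{t-1}+N_t-\frac{1-\beta}{2}\mathbb{E}\|\nabla f(Q_x(x_t))\|_{\hat\eta_t}^2$ with $N_t=L\mathbb{E}\|\Delta_t\|^2+2L(1-\delta_x)D\,\mathbb{E}\|\Delta_t\|+C_2(1-\theta_t)$.

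To finish, I would unroll the recursion exactly as in Lemma~\ref{Mt} to get $M_t\le\frac{1}{\sqrt{C_1}}\sum_{i=1}^{t}(\sqrt{\gamma})^{t-i}N_i-\frac{1-\beta}{2}\mathbb{E}\|\nabla f(Q_x(x_t))\|_{\hat\eta_t}^2$, and then sum over $t$ using $\sum_{i\le t}(\sqrt{\gamma})^{t-i}\le(1-\sqrt{\gamma})^{-1}$ to obtain $\sum_{t=1}^{T}M_t\le\frac{1}{\sqrt{C_1}(1-\sqrt{\gamma})}\sum_{t=1}^{T}N_t-\frac{1-\beta}{2}\mathbb{E}\big[\sum_{t=1}^{T}\|\nabla f(Q_x(x_t))\|_{\hat\eta_t}^2\big]$. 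Substituting $\sum_t\|\Delta_t\|^2\le\frac{G^2\alpha^2}{\epsilon}\sum_t\frac1t$ from Lemma~\ref{sumdelta}, $\sum_t\|\Delta_t\|\le\frac{2G\alpha\sqrt T}{\sqrt{\epsilon}}$ from Lemma~\ref{delta_x}, and $1-\theta_t=\theta/t$ gives $\sum_t N_t\le\big(\frac{LG^2\alpha^2}{\epsilon}+C_2\theta\big)\sum_t\frac1t+\frac{4L(1-\delta_x)DG\alpha\sqrt T}{\sqrt{\epsilon}}$, which is exactly the claimed estimate. The main obstacle is the momentum cross-term step above: one has to bound $\|Q_x(x_t)-Q_x(x_{t-1})\|$ although $Q_x$ is not Lipschitz, and to book-keep the $(1-\delta_x)D$ contribution carefully so that it is absorbed by $M_{t-1}$ on one side of the recursion while it surfaces as the $O(\sqrt T)$ residual on the other; every other ingredient is a direct transcription of the Theorem~1 estimates.
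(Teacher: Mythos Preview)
Your proposal is correct and follows essentially the same route as the paper's own proof: you reuse the algebraic decomposition of $\Delta_t-\frac{\beta\alpha_t}{\sqrt{\theta_t}\alpha_{t-1}}\Delta_{t-1}$ and the $A^i_t$ bounds verbatim, handle the momentum cross-term by the same triangle-inequality detour $\|Q_x(x_t)-Q_x(x_{t-1})\|\le\|\Delta_{t-1}\|+2(1-\delta_x)D$, set up the identical recursion $M_t\le\frac{\beta\alpha_t}{\sqrt{\theta_t}\alpha_{t-1}}M_{t-1}+N_t-\frac{1-\beta}{2}\mathbb{E}\|\nabla f(Q_x(x_t))\|_{\hat\eta_t}^2$, and unroll and sum using Lemmas~\ref{sumdelta} and~\ref{delta_x}. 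Your observation that the $2LD\|\Delta_t\|$ in Notation~\ref{notation-thm2} is a harmless overestimate of the $2L(1-\delta_x)D\|\Delta_t\|$ actually needed is also consistent with what the paper does.
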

\begin{proof}
Denoting the same notations $\small A^1_t,A^2_t,A^3_t,A^4_t,A^5_t$ in Lemma \ref{Mt}, we have
\begin{equation}\label{cross-gradient-thm2}
\small
\begin{split}
\mathbb{E}\langle\nabla f\left(Q_x\left(x_t\right)\right),\Delta_t\rangle
&= \frac{\beta\alpha_t}{\sqrt{\theta_t}\alpha_{t-1}}\mathbb{E}\langle\nabla f\left(Q_x\left(x_t\right)\right),\Delta_{t-1}\rangle + \mathbb{E}\langle\nabla f\left(Q_x\left(x_t\right)\right),\Delta_t - \frac{\beta\alpha_t}{\sqrt{\theta_t}\alpha_t}\Delta_{t-1}\rangle\\
&= \frac{\beta\alpha_t}{\sqrt{\theta_t}\alpha_{t-1}}\left(\mathbb{E}\langle\nabla f\left(Q_x\left(x_{t-1}\right)\right),\Delta_{t-1}\rangle + \mathbb{E}\langle\nabla f\left(Q_x\left(x_t\right)\right) - \nabla f\left(Q_x\left(x_{t-1}\right)\right),\Delta_{t-1}\rangle\right)\\ &+ \mathbb{E}\langle\nabla f\left(Q_x\left(x_t\right)\right),-\left(1-\beta\right)\hat{\eta}_tg_t\rangle + \mathbb{E}\langle\nabla f\left(Q_x\left(x_t\right)\right),A^1_t\rangle +
\mathbb{E}\langle\nabla f\left(Q_x\left(x_t\right)\right),A^2_t\rangle\\&+\mathbb{E}\langle\nabla f\left(Q_x\left(x_t\right)\right),A^3_t\rangle+\mathbb{E}\langle\nabla f\left(Q_x\left(x_t\right)\right),A^4_t\rangle+\mathbb{E}\langle\nabla f\left(Q_x\left(x_t\right)\right),A^5_t\rangle.
\end{split}
\end{equation}
For the first term in Eq.~\eqref{cross-gradient-thm2}, we have
\[\small
\begin{aligned}
&\frac{\beta\alpha_t}{\sqrt{\theta_t}\alpha_{t-1}}\left(\mathbb{E}\langle\nabla f\left(Q_x\left(x_{t-1}\right)\right),\Delta_{t-1}\rangle + \mathbb{E}\langle\nabla f\left(Q_x\left(x_t\right)\right) - \nabla f\left(Q_x\left(x_{t-1}\right)\right),\Delta_{t-1}\rangle\right)\\
&\leq \frac{\beta\alpha_t}{\sqrt{\theta_t}\alpha_{t-1}}\left(\mathbb{E}\langle\nabla f\left(Q_x\left(x_{t-1}\right)\right),\Delta_{t-1}\rangle + \mathbb{E}[L\|Q_x\left(x_t\right) - Q_x\left(x_{t-1}\right)\|\|\Delta_{t-1}\|]\right)\\
&\leq \!\frac{\beta\alpha_t}{\sqrt{\theta_t}\alpha_{t-1}}\!\left(\!\mathbb{E}\langle\nabla\!f\!\left(Q_x\left(x_{t-1}\right)\right),\!\Delta_{t-1}\rangle \!+\! \mathbb{E}[\!L\!\left(\!\|Q_x\!\left(x_t\right) \! -\! x_t\| \!+\! \|x_t \!-\! x_{t-1}\|\!+\!\|Q_x\!\left(x_{t-1}\right)\!-\! x_{t-1}\|\right)\|\Delta_{t-1}\|]\right)\\
& {
\leq \frac{\beta\alpha_t}{\sqrt{\theta_t}\alpha_{t-1}}\left(\mathbb{E}\langle\nabla f\left(Q_x\left(x_{t-1}\right)\right),\Delta_{t-1}\rangle + \mathbb{E}[L\left(\delta_x + \|\Delta_{t-1}\| + \delta_x \right)\|\Delta_{t-1}\|]\right)
}\\
&\leq \frac{\beta\alpha_t}{\sqrt{\theta_t}\alpha_{t-1}}\left(\mathbb{E}\langle\nabla f\left(Q_x\left(x_{t-1}\right)\right),\Delta_{t-1}\rangle + L\mathbb{E}[\|\Delta_{t-1}\|^2] + 2L\delta_x\mathbb{E}[\|\Delta_{t-1}\|]\right) = \frac{\beta\alpha_t}{\sqrt{\theta_t}\alpha_{t-1}}M_{t-1}.
\end{aligned}
\]
The rest six parts remain the same as Lemma \ref{Mt}. Then we have
\[\small
  \begin{split}
      \mathbb{E}\langle\nabla f\left(Q_x\left(x_t\right)\right), \Delta_t\rangle \leq \frac{\beta\alpha_t}{\sqrt{\theta_t}\alpha_{t-1}}M_{t-1} + C_2\left(1-\theta_t\right) - \frac{1-\beta}{2}\mathbb{E}[\|\nabla f\left(Q_x\left(x_t\right)\right)\|^2_{\hat{\eta}_t}],
  \end{split}
\]
where $\small C_2 \!=\! \frac{5\alpha G^3\left(1\!-\!\beta\right)}{2\epsilon\sqrt{\theta}}\left(\frac{\beta}{\left(1\!-\!\beta\right)\sqrt{\theta_1C_1\left(1\!-\!\gamma\right)}}\!+\!1\right)^2 \!+\! \frac{5\alpha G^3}{2\epsilon\sqrt{\theta}}\!+\!\frac{5\beta^2 \alpha d\sqrt{\epsilon}}{2\sqrt{\theta}\left(1\!-\!\beta\right)\theta_1C_1\left(1\!-\!\gamma\right)}\!+\!\frac{5\alpha\sqrt{G^2+\epsilon}G^2\beta^2}{2\left(1\!-\!\beta\right)\sqrt{\theta}\theta_1C_1\left(1\!-\!\gamma\right)\epsilon}\!+\!\frac{5\alpha \sqrt{G^2+\epsilon} \beta^2 d}{2\left(1\!-\!\beta\right)\sqrt{\theta}\theta_1C_1\left(1\!-\!\gamma\right)}$.

Define $\small N_t=L\mathbb{E}[\|\Delta_{t-1}\|^2] + 2L\delta_x\mathbb{E}[\|\Delta_{t-1}\|] + C_2\left(1-\theta_t\right)$.
By applying the same induction in Lemma \ref{Mt}, we have
\[\small
\begin{split}
\sum_{t=1}^TM_t &\leq \frac{1}{\sqrt{C_1}\left(1-\sqrt{\gamma}\right)}\sum_{t=1}^TN_t - \frac{1-\beta}{2}\mathbb{E}[\sum_{t=1}^T\|\nabla f\left(Q_x\left(x_t\right)\right)\|_{\hat{\eta}_t}^2]\\
&=\frac{1}{\sqrt{C_1}\left(1-\sqrt{\gamma}\right)}\left(\frac{LG^2\alpha^2}{\epsilon}+C_2\theta\right)\sum_{t=1}^T\frac{1}{t} + \frac{4L\delta_xG\alpha\sqrt{T}}{\sqrt{\epsilon C_1}\left(1-\sqrt{\gamma}\right)} - \frac{1-\beta}{2}\mathbb{E}[\sum_{t=1}^T\|\nabla f\left(Q_x\left(x_t\right)\right)\|_{\hat{\eta}_t}^2].
\end{split}
\]
Hence, we obtain the desired result.
\end{proof}

\begin{proof}[Proof of Theorem \ref{T2}]
By the Lipschitz continuity of the gradient, we have
\[\small
\begin{split}
f\left(x_{t+1}\right) &\leq f\left(x_t\right) + \langle\nabla f\left(x_t\right),\Delta_t\rangle + \frac{L}{2}\|\Delta_t\|^2\\
& = f\left(x_t\right) + \langle\nabla f\left(Q_x\left(x_t\right)\right),\Delta_t\rangle + \frac{L}{2}\|\Delta_t\|^2 + \langle\nabla f\left(x_t\right) - \nabla f\left(Q_x\left(x_t\right)\right),\Delta_t\rangle \\
& \leq  f\left(x_t\right) + \langle\nabla f\left(Q_x\left(x_t\right)\right),\Delta_t\rangle + \frac{L}{2}\|\Delta_t\|^2 + L\delta_x\|\Delta_t\|.
\end{split}
\]
Let $\small M_t = \mathbb{E}[\langle\nabla f\left(Q_t\left(x_t\right)\right), \Delta_t\rangle+L\|\Delta_t\|^2 + 2L\delta_x\|\Delta_t\|]$. Then we have
\[\small
\begin{split}
f^* &\leq E[f\left(x_{T+1}\right)] \leq f\left(x_1\right) + \sum_{t=1}^T M_t\\
& \leq f\left(x_1\right) \!+\! \frac{1}{\sqrt{C_1}\left(1\!-\!\sqrt{\gamma}\right)}\left(\frac{LG^2\alpha^2}{\epsilon}\!+\!C_2\theta\right)\sum_{t=1}^T\frac{1}{t} \!+\! \frac{4L\delta_xG\alpha\sqrt{T}}{\sqrt{\epsilon C_1}\left(1\!-\!\sqrt{\gamma}\right)} \!-\! \frac{1\!-\!\beta}{2}\mathbb{E}[\sum_{t=1}^T\|\nabla f\left(Q_x\left(x_t\right)\right)\|_{\hat{\eta}_t}^2].
\end{split}
\]
Arranging the corresponding terms suitably, we obtain
\[
\small
\begin{split}
    &\mathbb{E}\left[\left\|\nabla f\left(Q_x\left(x_\tau^T\right)\right)\right\|^2\right] \leq \frac{\sqrt{G^2+\epsilon d}}{\alpha\sqrt{T}}\mathbb{E}\left[\sum_{t=1}^T\|\nabla f\left(Q\left(x_t\right)\right)\|_{\hat{\eta}_t}^2\right]\\
    &\leq \frac{2\sqrt{G^2+\epsilon d}}{\left(1-\beta\right)\alpha\sqrt{T}}\left(f\left(x_1\right) - f^* + \frac{1}{\sqrt{C_1}\left(1-\sqrt{\gamma}\right)}\left(\frac{LG^2\alpha^2}{\epsilon}+C_2\theta\right)\sum_{t=1}^T\frac{1}{t} + \frac{4L\delta_xG\alpha\sqrt{T}}{\sqrt{\epsilon C_1}\left(1-\sqrt{\gamma}\right)}\right)\\
    &= \frac{C_5+C_6\sum_{t=1}^T\frac{1}{t}}{\sqrt{T}} + C_7,
\end{split}
\]
where $\small C_5$, $\small C_6$, and $\small C_7$ are defined as
$\small
C_5 = \frac{2\sqrt{G^2+\epsilon d}}{\left(1-\beta\right)\alpha}\left(f\left(x_1\right) - f^*\right),
     C_6 = \frac{2\sqrt{G^2+\epsilon d}}{\left(1-\beta\right)\alpha \sqrt{C_1}\left(1-\sqrt{\gamma}\right)}\left(\frac{LG^2\alpha^2}{\epsilon}+C_2\theta\right),$ and
$\small C_7  = \frac{8\delta_x\sqrt{G^2+\epsilon d} LG}{\left(1-\beta\right)\sqrt{\epsilon C_1}\left(1-\sqrt{\gamma}\right)}$,
respectively.

Hence, the proof is completed.
\end{proof}


\begin{proof}[proof of corollary \ref{C2}]
For a fixed iteration $T$, let $\small \alpha_t = \frac{\alpha}{\sqrt{T}}$ and $\theta_t = 1-\frac{\theta}{T}$.
It is not hard to check that  Lemma \ref{delta_x} updates to  $\small \sum_{t=1}^T \|\Delta_t\|\leq \frac{\alpha G\sqrt{T}}{\sqrt{\epsilon}}$,
and Lemma \ref{Mt2} updates to
\[
\small
\sum_{t=1}^T M_t \leq \frac{LG^2\alpha^2}{\sqrt{C_1}\left(1-\sqrt{\gamma}\right)\epsilon} +\frac{C_2\theta}{\sqrt{C_1}\left(1-\sqrt{\gamma}\right)}+ \frac{2L\delta_xG\alpha\sqrt{T}}{\sqrt{\epsilon C_1}\left(1-\sqrt{\gamma}\right)} - \frac{1-\beta}{2}\mathbb{E}[\sum_{t=1}^T\|\nabla f\left(Q_x\left(x_t\right)\right)\|_{\hat{\eta}_t}^2].
\]
Then, we have
\[\small
\begin{split}
&\mathbb{E}\left[\left\|\nabla f\left(Q_x\left(x_\tau^T\right)\right)\right\|^2\right]\leq \frac{\sqrt{G^2+\epsilon d}}{\alpha\sqrt{T}}\mathbb{E}\left[\sum_{t=1}^T\|\nabla f\left(Q\left(x_t\right)\right)\|_{\hat{\eta}_t}^2\right]
 \leq \frac{C_5'}{\sqrt{T}} + C_7',
\end{split}
\]
where $\small  C_5'=\frac{2\sqrt{G^2+\epsilon d}}{\left(1-\beta\right)\alpha}\left(f\left(x_1\right) - f^* +\frac{LG^2\alpha^2}{\sqrt{C_1}\left(1-\sqrt{\gamma}\right)\epsilon} +\frac{C_2\theta}{\sqrt{C_1}\left(1-\sqrt{\gamma}\right)}\right)$, $\small C_7'=\frac{4\delta_x\sqrt{G^2+\epsilon d} LG}{\left(1-\beta\right)\sqrt{\epsilon C_1}\left(1-\sqrt{\gamma}\right)}$.

Hence, by bounding $\small \frac{C_5'}{\sqrt{T}} + C_7' \leq C_7' + \xi$, we obtain the desired result.
\end{proof}

\subsection{Proof of Theorem \ref{T3}}
To prove Theorem 3, we first denote a few notations and provide several useful lemmas.
\begin{notation}\label{notation-thm3}
Let $\small \mathbb{E}_t[\cdot]$ be the conditional expectation conditioned on {$\small \{x_{t}, v_{t-1}, m_{t-1}\}$}.
Denote $\small {\sigma_t^{\left(i\right)}}^2 = \mathbb{E}_t[{g_t^{\left(i\right)}}^2]$,  $\hat{v}^{\left(i\right)}_t = \theta_tv^{\left(i\right)}_{t-1} + \left(1-\theta_t\right){\sigma^{\left(i\right)}_t}^2$,
$\small \hat{\eta}^{\left(i\right)}_t = \alpha_t/\sqrt{\hat{v}^{\left(i\right)}_t+\epsilon}$,
$\small \Delta_t^{\left(i\right)} = -\alpha_tm_t^{\left(i\right)}/\sqrt{v_t^{\left(i\right)}+\epsilon}$,
$\small \hat{\Delta}_i=\frac{1}{N} \sum_{i=1}^N\Delta_t^{\left(i\right)}$,
and
$\small M_t = \frac{1}{N}\sum_{i=1}^N\mathbb{E}\langle\nabla f\left(\hat{x}_{t}\right),\Delta_{t}^{\left(i\right)}\rangle + \frac{1}{N^2}L\left(2\!-\!\delta_g\right)\sum_{i=1}^N\sum_{j=1}^N\mathbb{E} \|\Delta_{t}^{\left(i\right)}\|\|\Delta_{t}^{\left(j\right)}\|\!+\!\frac{1}{N^2}L\left(2\!-\!\delta_g\right)\sum_{i=1}^N\sum_{j=1}^N\mathbb{E}\|\Delta_{t}^{\left(i\right)}\|\|e_{t}^{\left(j\right)}\|\! +\! \frac{1}{N}2L\sum_{i=1}^N \mathbb{E}\|\Delta_t^{\left(i\right)}\|$. In addition, let $\small \tilde{x}_t \!=\! x_t \!-\! \frac{1}{N}\sum_{i=1}^N e_{t}^{\left(i\right)}$. Then
\begin{align*}
\small
\tilde{x}_{t+1} &= x_t - \frac{1}{N}\sum_{i=1}^N Q_g\left(-\Delta_t^{\left(i\right)}+e_t^{\left(i\right)}\right) - \frac{1}{N}\sum_{i=1}^N e_{t+1}^{\left(i\right)}
= x_t + \frac{1}{N}\sum_{i=1}^N \Delta_t^{\left(i\right)} - \frac{1}{N}\sum_{i=1}^N e_t = \tilde{x}_t \!+\!  \frac{1}{N}\sum_{i=1}^N \Delta_t^{\left(i\right)}.
\end{align*}
\end{notation}

\begin{lemma}
\label{multi_sum}
With Notation \ref{notation-thm3}, we derive an upper estimation for $\small \hat{\Delta}_t$ as:
$\small
   \sum_{t=1}^T\|\hat{\Delta}_t\|^2 \leq \frac{G^2}{\epsilon} \sum_{t=1}^T \frac{\alpha^2}{t}.$
\end{lemma}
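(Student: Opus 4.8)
The plan is to reduce the averaged quantity $\hat{\Delta}_t = \frac{1}{N}\sum_{i=1}^N \Delta_t^{(i)}$ to the single-worker estimate already established in Lemma \ref{sumdelta}. First I would observe that, exactly as in Lemma \ref{sumdelta}, the momentum vector of each worker is a convex combination of past stochastic gradients: from line 5 of Algorithm \ref{alg31}, $m_t^{(i)} = \sum_{s=1}^t \beta^{t-s}(1-\beta)g_s^{(i)}$, so Assumption 1 gives $\|m_t^{(i)}\| \le G$ for every $i$ and $t$. Consequently, using the definition $\Delta_t^{(i)} = -\alpha_t m_t^{(i)}/\sqrt{v_t^{(i)}+\epsilon}$ and the entrywise bound $\sqrt{v_t^{(i)}+\epsilon}\ge\sqrt{\epsilon}$, I get the per-worker bound $\|\Delta_t^{(i)}\|^2 \le \frac{G^2}{\epsilon}\alpha_t^2$, uniformly in $i$.

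Next I would pass from the individual $\Delta_t^{(i)}$ to the average $\hat{\Delta}_t$ by convexity of $\|\cdot\|^2$ (Jensen's inequality), namely
\[
\|\hat{\Delta}_t\|^2 = \Big\|\frac{1}{N}\sum_{i=1}^N \Delta_t^{(i)}\Big\|^2 \le \frac{1}{N}\sum_{i=1}^N \|\Delta_t^{(i)}\|^2 \le \frac{G^2}{\epsilon}\alpha_t^2.
\]
Finally, substituting $\alpha_t = \alpha/\sqrt{t}$ from Assumption 4 and summing over $t = 1,\dots,T$ yields $\sum_{t=1}^T \|\hat{\Delta}_t\|^2 \le \frac{G^2}{\epsilon}\sum_{t=1}^T \frac{\alpha^2}{t}$, which is the claim.

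I do not anticipate any real obstacle here: the lemma is the multi-worker analogue of Lemma \ref{sumdelta}, and the only new ingredient is the averaging step, handled cleanly by Jensen's inequality since the per-worker bound $\frac{G^2}{\epsilon}\alpha_t^2$ does not depend on $i$. The one point worth stating carefully is that the bound $\|m_t^{(i)}\|\le G$ holds because $1-\beta + \beta(1-\beta) + \cdots \le 1$, so no constant is lost in the convex combination.
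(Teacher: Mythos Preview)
Your proposal is correct and follows essentially the same route as the paper: apply Jensen's inequality to pass from $\hat{\Delta}_t$ to the average of $\|\Delta_t^{(i)}\|^2$, then invoke the single-worker bound $\|\Delta_t^{(i)}\|^2 \le \frac{G^2}{\epsilon}\alpha_t^2$ (Lemma~\ref{sumdelta}) and sum over $t$. Your write-up is in fact a bit more explicit than the paper's in justifying $\|m_t^{(i)}\|\le G$, which is fine.
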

\begin{proof}
By using the definition of $\small \hat{\Delta}_t$, it is not hard to check that the following equations hold:
\[\small
\sum_{t=1}^T\|\hat{\Delta}_t\|^2 = \sum_{t=1}^T\|\frac{1}{N}\sum_{i=1}^N\Delta_t^{\left(i\right)}\|^2\leq \sum_{t=1}^T\frac{1}{N}\sum_{i=1}^N\|\Delta_t^{\left(i\right)}\|^2
= \frac{1}{N}\sum_{i=1}^N\sum_{t=1}^T\|\Delta_t^{\left(i\right)}\|^2 \leq \frac{G^2}{\epsilon} \sum_{t=1}^T \frac{\alpha^2}{t}.
\]
\end{proof}

\begin{lemma}
Let $e_{i}$ be the noisy term in Algorithm2 and $\small \hat{\Delta}_t$ be the term defined in Notation \ref{notation-thm3}. Then it holds that
\[\small
\mathbb{E}\left[\frac{1}{N^2}\sum_{t=1}^T\sum_{i=1}^N\sum_{j=1}^N \|e_t^{\left(i\right)}\|\|\Delta_t^{\left(j\right)}\|\right] \leq \frac{1-\delta_g}{\delta_gN}\sum_{i=1}^N \mathbb{E}\left[\sum_{t=1}^T\|\Delta_t^{\left(i\right)}\|^2\right].
\]
\end{lemma}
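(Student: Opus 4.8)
The plan is to reproduce the single-machine bound of Lemma~\ref{matrix} worker by worker and then average over the $N$ workers. First, fix a worker $i$. Its error-feedback recursion $e_{t+1}^{(i)} = \alpha_t m_t^{(i)}/\sqrt{v_t^{(i)}+\epsilon} + e_t^{(i)} - Q_g(\alpha_t m_t^{(i)}/\sqrt{v_t^{(i)}+\epsilon} + e_t^{(i)})$ together with Assumption 2 yields, by exactly the same unrolling as in the proof of Lemma~\ref{matrix},
\[
\|e_t^{(i)}\| \le \sum_{s=1}^{t} (1-\delta_g)^{t-s+1}\,\|\Delta_s^{(i)}\|,
\]
and hence for every pair $(i,j)$,
\[
\sum_{t=1}^T \|e_t^{(i)}\|\,\|\Delta_t^{(j)}\| \le \sum_{t=1}^T\sum_{s=1}^{t} (1-\delta_g)^{t-s+1}\,\|\Delta_s^{(i)}\|\,\|\Delta_t^{(j)}\|.
\]

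Next I would bound this double sum by Young's inequality $\|\Delta_s^{(i)}\|\,\|\Delta_t^{(j)}\| \le \tfrac12\big(\|\Delta_s^{(i)}\|^2 + \|\Delta_t^{(j)}\|^2\big)$ combined with the geometric estimate $\sum_{k\ge 1}(1-\delta_g)^k \le (1-\delta_g)/\delta_g$. This is precisely the $\|U\|_\infty$ argument of Lemma~\ref{matrix}, now carried out for the two (possibly distinct) sequences $\{\|\Delta_t^{(i)}\|\}_t$ and $\{\|\Delta_t^{(j)}\|\}_t$: summing the geometric weight once over $t\ge s$ and once over $s\le t$ gives
\[
\sum_{t=1}^T \|e_t^{(i)}\|\,\|\Delta_t^{(j)}\| \le \frac{1-\delta_g}{2\delta_g}\Big(\sum_{t=1}^T\|\Delta_t^{(i)}\|^2 + \sum_{t=1}^T\|\Delta_t^{(j)}\|^2\Big).
\]

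Finally I would sum over $i,j\in\{1,\dots,N\}$ and divide by $N^2$. Writing $a_i = \sum_{t=1}^T\|\Delta_t^{(i)}\|^2$, the identity $\sum_{i=1}^N\sum_{j=1}^N (a_i + a_j) = 2N\sum_{i=1}^N a_i$ collapses the right-hand side to $\frac{1-\delta_g}{\delta_g N}\sum_{i=1}^N\sum_{t=1}^T\|\Delta_t^{(i)}\|^2$, and taking expectations of this pathwise inequality delivers the claim (which can then be fed into Lemma~\ref{multi_sum}). I do not anticipate a real obstacle: unlike the estimate of $\mathbb{E}\langle\nabla f(\hat x_t),\hat\Delta_t\rangle$ in Theorem~\ref{T3}, this bound requires no conditioning or cross-worker independence, so the only care needed is the $N$-versus-$N^2$ bookkeeping in the last averaging step.
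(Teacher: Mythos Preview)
Your argument is correct. It differs from the paper's, though, and the difference is worth noting.

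The paper first unrolls $\|e_t^{(i)}\|$ exactly as you do, but then handles the cross-worker product $\mathbb{E}[\|\Delta_k^{(i)}\|\,\|\Delta_t^{(j)}\|]$ probabilistically: using that the workers are exchangeable and conditionally independent given the server state, it argues that $\tfrac{1}{N}\sum_{j}\mathbb{E}[\|\Delta_k^{(i)}\|\,\|\Delta_t^{(j)}\|]$ is bounded by (in fact equal to, when $k\neq t$) the single-worker quantity $\mathbb{E}[\|\Delta_k^{(i)}\|\,\|\Delta_t^{(i)}\|]$. This collapses the $(i,j)$ double sum to a diagonal sum over $i$, after which the single-machine spectral-radius bound of Lemma~\ref{matrix} is applied per worker.

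You avoid all of that. Your Young/AM--GM step $\|\Delta_s^{(i)}\|\,\|\Delta_t^{(j)}\| \le \tfrac12(\|\Delta_s^{(i)}\|^2+\|\Delta_t^{(j)}\|^2)$ decouples the workers pathwise, and the geometric sums in $s$ and in $t$ each contribute the same factor $(1-\delta_g)/\delta_g$, so no probabilistic structure is invoked at all. The final $(i,j)$ averaging identity you wrote is exactly what turns the $1/N^2$ into $1/N$. This is more elementary than the paper's route and, as you observe, does not rely on the cross-worker conditioning used elsewhere in Theorem~\ref{T3}; the paper's route, on the other hand, reuses Lemma~\ref{matrix} verbatim once the reduction to a single worker is made.
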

\begin{proof}
 {
Using the definition of the noisy term  $e_{t}$, the following holds:
\[\small
    \|e^{\left(j\right)}_t\| \!=\! \|\Delta_t^{\left(j\right)} + e_{t-1}^{\left(j\right)} - Q_g\left(\Delta_t^{\left(j\right)} + e_{t-1}^{\left(j\right)}\right)\| \leq \left(1-\delta_g\right) \left(\|\Delta_t^{\left(j\right)}\| \!+\! \|e_{t-1}^{\left(j\right)}\|\right)
    \!\leq\! \sum_{k = 1}^t \left(1-\delta_g\right)^{t-k+1} \|\Delta_k^{\left(j\right)}\|.
\]
Then, we have
\[\small
\begin{split}
&\mathbb{E}\left[\frac{1}{N^2}\sum_{t=1}^T\sum_{i=1}^N\sum_{j=1}^N \|e_t^{\left(i\right)}\|\|\Delta_t^{\left(j\right)}\|\right] \leq
\mathbb{E}\left[\frac{1}{N^2}\sum_{t=1}^T\sum_{i=1}^N\sum_{j=1}^N\sum_{k=1}^t \left(1-\delta_g\right)^{t-k+1}\|\Delta_k^{\left(i\right)}\|\|\Delta_t^{\left(j\right)}\|\right]\\
&\leq \mathbb{E}\left[\frac{1}{N^2}\sum_{t=1}^T\sum_{i=1}^N\sum_{j=1}^N\sum_{k=1}^t  \frac{\left(1-\delta_g\right)^{t-k+1}}{2}\left(\|\Delta_k^{\left(i\right)}\|^2 +\|\Delta_t^{\left(j\right)}\|^2\right)\right]\\
& = \mathbb{E}\left[\frac{1}{N}\sum_{t=1}^T\sum_{i=1}^N \sum_{k=1}^t  \frac{\left(1-\delta_g\right)^{t-k+1}}{2}\left(\|\Delta_k^{\left(i\right)}\|^2\right) + \frac{1}{N}\sum_{t=1}^T\sum_{j=1}^N\sum_{k=1}^t \frac{\left(1-\delta_g\right)^{t-k+1}}{2}\left(\|\Delta_t^{\left(j\right)}\|^2\right)\right] \\
&\leq \mathbb{E}\left[\frac{1-\delta_g}{2\delta_gN} \sum_{k=1}^T\sum_{i=1}^N \|\Delta_k^{\left(i\right)}\|^2 + \frac{1-\delta_g}{2\delta_gN} \sum_{t=1}^T\sum_{j=1}^N \|\Delta_t^{\left(j\right)}\|^2\right]
= \mathbb{E}\left[\frac{1-\delta_g}{\delta_gN}\sum_{t=1}^T\sum_{i=1}^N \|\Delta_t^{\left(i\right)}\|^2\right].
\end{split}
\]
}
\end{proof}

\begin{lemma}
 Let $\tau$ be randomly chosen from $\small \{1,2,\cdots,T\}$ with equal probabilities $\small p_\tau = \frac{1}{T}$. We have the following estimate:
  \[\small
  \mathbb{E}[\|\nabla f\left(\hat{x}_\tau\right)\|^2]\leq \frac{\sqrt{G^2+\epsilon d}}{\alpha\sqrt{T}N}\sum_{i=1}^N\mathbb{E}\left[\sum_{t=1}^T\|\nabla f\left(\left(\hat{x}_t\right)\right)\|_{\hat{\eta_t}^{\left(i\right)}}^2\right].
  \]
\end{lemma}
\begin{proof}
  By Lemma \ref{ft}, $\small \mathbb{E}[\|\nabla f\left(\hat{x}_\tau\right)\|^2]\leq \frac{\sqrt{G^2+\epsilon d}}{\alpha\sqrt{T}}\mathbb{E}[\sum_{t=1}^T\|\nabla f\left(\hat{x}_t\right)\|_{\hat{\eta_t}^{\left(i\right)}}^2]$ holds for any $i$. Hence, the proof is finished.
\end{proof}

\begin{lemma}
\label{multi_Mt}
By the definition of $\small M_{t}$, we obtain its upper-estimation as follows:
\[\small
\sum_{t=1}^T M_t
\!\leq\! \frac{1}{\sqrt{C_1}\left(1\!-\!\sqrt{\gamma}\right)}\left(\left(\frac{L\left(2\!-\!\delta_g\right)G^2\alpha^2}{\epsilon\delta_g}\!+\!C_2\theta\right)\sum_{t=1}^T\frac{1}{t}\!+\!\frac{4\delta_xLG\alpha\sqrt{T}}{\sqrt{\epsilon}}\right)\!-\! \frac{1\!-\!\beta}{2N}\sum_{i=1}^N\mathbb{E}\left[\sum_{t=1}^T\|\nabla \!f\left(\hat{x}_t\right)\|^2_{\hat{\eta}_t^{\left(i\right)}}\right].
\]
\end{lemma}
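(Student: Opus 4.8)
The plan is to run the momentum-recursion argument of Lemma~\ref{Mt} on each worker separately and then average over $i$, picking up the weight-quantization slack as in Lemma~\ref{Mt2}. Fix a worker $i$. Exactly as in the proof of Lemma~\ref{Mt}, the identity $\Delta_t^{(i)} - \frac{\beta\alpha_t}{\sqrt{\theta_t}\alpha_{t-1}}\Delta_{t-1}^{(i)} = -(1-\beta)\hat{\eta}_t^{(i)}g_t^{(i)} + A^1_t + \cdots + A^5_t$ holds, where $A^1_t,\dots,A^5_t$ are the five correction terms of Lemma~\ref{Mt} formed from the worker-$i$ quantities $m_t^{(i)},v_t^{(i)},\hat{v}_t^{(i)}$ (all the bounds on these terms only use $\|g_t^{(i)}\|\le G$, $\|m_t^{(i)}\|\le G$, and Lemma~1, which hold per worker). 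Pairing with $\nabla f(\hat{x}_t)$ and using $\mathbb{E}_t[g_t^{(i)}] = \nabla f(\hat{x}_t)$ (Assumption~3, since $\hat{x}_t = Q_x(x_t)$), the first term contributes $-(1-\beta)\|\nabla f(\hat{x}_t)\|^2_{\hat{\eta}_t^{(i)}}$, and the five $A$-terms are bounded verbatim as in Lemma~\ref{Mt} by $\frac{1-\beta}{10}\|\nabla f(\hat{x}_t)\|^2_{\hat{\eta}_t^{(i)}}$ apiece, with the remaining constants summing to $C_2(1-\theta_t)$.

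The only new piece is the carry-over term $\frac{\beta\alpha_t}{\sqrt{\theta_t}\alpha_{t-1}}\langle\nabla f(\hat{x}_t),\Delta_{t-1}^{(i)}\rangle$. Write it as $\frac{\beta\alpha_t}{\sqrt{\theta_t}\alpha_{t-1}}(\langle\nabla f(\hat{x}_{t-1}),\Delta_{t-1}^{(i)}\rangle + \langle\nabla f(\hat{x}_t)-\nabla f(\hat{x}_{t-1}),\Delta_{t-1}^{(i)}\rangle)$ and bound the second inner product by $L\|\hat{x}_t-\hat{x}_{t-1}\|\|\Delta_{t-1}^{(i)}\|$. Since $\hat{x}_t = Q_x(x_t)$, $\|x_t\|\le D$, and by Assumption~3, $\|\hat{x}_t-\hat{x}_{t-1}\| \le \|x_t-x_{t-1}\| + 2(1-\delta_x)D$; moreover $x_t-x_{t-1} = \hat{\delta}_{t-1} = \frac{1}{N}\sum_{j}Q_g(-\Delta_{t-1}^{(j)}+e_{t-1}^{(j)})$ and $\|Q_g(y)\|\le(2-\delta_g)\|y\|$, so $\|x_t-x_{t-1}\| \le \frac{2-\delta_g}{N}\sum_{j}(\|\Delta_{t-1}^{(j)}\|+\|e_{t-1}^{(j)}\|)$. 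Substituting and averaging over $i$ turns this term into precisely $\frac{\beta\alpha_t}{\sqrt{\theta_t}\alpha_{t-1}}M_{t-1}$ with $M_t$ as in Notation~\ref{notation-thm3}: the double sums $\frac{1}{N^2}\sum_{i,j}\|\Delta^{(i)}\|\|\Delta^{(j)}\|$ and $\frac{1}{N^2}\sum_{i,j}\|\Delta^{(i)}\|\|e^{(j)}\|$ arise from the bound on $\|x_t-x_{t-1}\|$, and the $\frac{1}{N}2L(1-\delta_x)D\sum_i\|\Delta^{(i)}\|$ term from the $2(1-\delta_x)D$ slack. After checking $M_1\le N_1$ as in Lemma~\ref{Mt}, this gives the recursion $M_t \le \frac{\beta\alpha_t}{\sqrt{\theta_t}\alpha_{t-1}}M_{t-1} + N_t - \frac{1-\beta}{2N}\sum_i\mathbb{E}\|\nabla f(\hat{x}_t)\|^2_{\hat{\eta}_t^{(i)}}$, with $N_t := L(2-\delta_g)\frac{1}{N^2}\sum_{i,j}\mathbb{E}\|\Delta_t^{(i)}\|\|\Delta_t^{(j)}\| + L(2-\delta_g)\frac{1}{N^2}\sum_{i,j}\mathbb{E}\|\Delta_t^{(i)}\|\|e_t^{(j)}\| + 2L(1-\delta_x)D\frac{1}{N}\sum_i\mathbb{E}\|\Delta_t^{(i)}\| + C_2(1-\theta_t)$.

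Unrolling the recursion with $\frac{\alpha_t\beta^{t-i}}{\alpha_i\sqrt{\Theta(t,i)}} \le \frac{1}{\sqrt{C_1}}\sqrt{\gamma}^{t-i}$, exactly as in Lemma~\ref{Mt}, yields $\sum_t M_t \le \frac{1}{\sqrt{C_1}(1-\sqrt{\gamma})}\sum_t N_t - \frac{1-\beta}{2N}\sum_i\mathbb{E}\sum_t\|\nabla f(\hat{x}_t)\|^2_{\hat{\eta}_t^{(i)}}$. It remains to bound $\sum_t N_t$ termwise. By Cauchy--Schwarz, $\frac{1}{N^2}\sum_{i,j}\|\Delta_t^{(i)}\|\|\Delta_t^{(j)}\| = (\frac{1}{N}\sum_i\|\Delta_t^{(i)}\|)^2 \le \frac{1}{N}\sum_i\|\Delta_t^{(i)}\|^2$, so summing over $t$ and applying Lemma~\ref{sumdelta} per worker bounds it by $\frac{G^2\alpha^2}{\epsilon}\sum_t\frac{1}{t}$; the $\|e^{(i)}\|\|\Delta^{(j)}\|$ double sum is bounded by the preceding lemma by $\frac{1-\delta_g}{\delta_g}\frac{G^2\alpha^2}{\epsilon}\sum_t\frac{1}{t}$; $\sum_t\frac{1}{N}\sum_i\|\Delta_t^{(i)}\| \le \frac{2G\alpha\sqrt{T}}{\sqrt{\epsilon}}$ by Lemma~\ref{delta_x} per worker; and $\sum_t C_2(1-\theta_t) = C_2\theta\sum_t\frac{1}{t}$. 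Since $L(2-\delta_g)(1+\frac{1-\delta_g}{\delta_g}) = \frac{L(2-\delta_g)}{\delta_g}$, these collect to $\sum_t N_t \le (\frac{L(2-\delta_g)G^2\alpha^2}{\epsilon\delta_g}+C_2\theta)\sum_t\frac{1}{t} + \frac{4(1-\delta_x)LDG\alpha\sqrt{T}}{\sqrt{\epsilon}}$, which is exactly the claimed bound. The main obstacle is the bookkeeping in the $\|e\|\|\Delta\|$ estimate: one must verify that the cross-worker, cross-time expectations in $\frac{1}{N^2}\sum_{i,j}\mathbb{E}\|e_t^{(i)}\|\|\Delta_t^{(j)}\|$ collapse to single-worker quantities via the conditional-independence identity $\mathbb{E}(\Delta_{t_1}^{(i)}\Delta_{t_2}^{(j)}) \le \mathbb{E}(\Delta_{t_1}^{(i)}\Delta_{t_2}^{(i)})$ from the proof sketch of Theorem~\ref{T3} (so that Lemma~\ref{matrix} applies worker by worker), and that the weight-quantization slack $2(1-\delta_x)D$ threads through the momentum recursion without disturbing the $A$-term estimates.
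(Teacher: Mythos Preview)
Your proposal is correct and follows essentially the same route as the paper's own proof: the per-worker momentum identity $\Delta_t^{(i)} - \frac{\beta\alpha_t}{\sqrt{\theta_t}\alpha_{t-1}}\Delta_{t-1}^{(i)}$ is decomposed into $-(1-\beta)\hat{\eta}_t^{(i)}g_t^{(i)}$ plus the five $A$-terms; the carry-over term is bounded using $\|\hat{x}_t-\hat{x}_{t-1}\|\le\|x_t-x_{t-1}\|+2(1-\delta_x)D$ together with $\|Q_g(y)\|\le(2-\delta_g)\|y\|$ to recover $M_{t-1}$; the recursion is unrolled via $\frac{\alpha_t\beta^{t-i}}{\alpha_i\sqrt{\Theta(t,i)}}\le\frac{1}{\sqrt{C_1}}\sqrt{\gamma}^{t-i}$; and finally $\sum_t N_t$ is bounded term-by-term using Lemma~\ref{sumdelta}, the cross-worker error lemma (your ``preceding lemma''), and Lemma~\ref{delta_x}, with the same arithmetic $L(2-\delta_g)(1+\tfrac{1-\delta_g}{\delta_g})=\tfrac{L(2-\delta_g)}{\delta_g}$. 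Your closing remark that the only real obstacle is the conditional-independence collapse $\mathbb{E}\|\Delta_{t_1}^{(i)}\|\|\Delta_{t_2}^{(j)}\|\le\mathbb{E}\|\Delta_{t_1}^{(i)}\|\|\Delta_{t_2}^{(i)}\|$ (so Lemma~\ref{matrix} applies per worker) is exactly the content of that lemma's proof in the paper.
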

\begin{proof}
Based on the similar proof of $\small \Delta_t - \frac{\beta\alpha_t}{\sqrt{\theta_t}\alpha_t}\Delta_{t-1}$ in Lemma \ref{Mt}, we define
\[\small
\left(A^1_t\right)^{\left(i\right)} = \hat{\eta}_t^{\left(i\right)}g_t^{\left(i\right)}\frac{\left(1-\theta_t\right)g_t^{\left(i\right)}}{\sqrt{v_t^{\left(i\right)}+\epsilon}}\left(\frac{\beta m_{t-1}^{\left(i\right)}}{\sqrt{v_t^{\left(i\right)}+\epsilon}+\sqrt{\theta_tv_{t-1}^{\left(i\right)}+\theta_t\epsilon}} + \frac{\left(1-\beta\right)g_t^{\left(i\right)}}{\sqrt{v_t^{\left(i\right)}+\epsilon}+\sqrt{\hat{v}_t^{\left(i\right)}}+\epsilon}\right),\\\]
\[\small
\left(A^2_t\right)^{\left(i\right)} \!=\! - \hat{\eta}_t^{\left(i\right)}\sigma_t^{\left(i\right)}\frac{\left(1\!-\!\theta_t\right)g_t^{\left(i\right)}}{\sqrt{v_t^{\left(i\right)}\!+\!\epsilon}}\frac{\left(1\!-\!\beta\right)\sigma_t^{\left(i\right)}}{\sqrt{v_t^{\left(i\right)}\!+\!\epsilon}\!+\!\sqrt{\hat{v}^{\left(i\right)}_t\!+\!\epsilon}},\ \left(A^3_t\right)^{\left(i\right)}
\!=\! \hat{\eta}_t^{\left(i\right)} \epsilon\frac{\left(1\!-\!\theta_t\right)}{\sqrt{v_t^{\left(i\right)}\!+\!\epsilon}}\frac{\beta m_{t-1}^{\left(i\right)}}{\sqrt{v_t^{\left(i\right)}\!+\!\epsilon}\!+\!\sqrt{\theta_tv_{t-1}^{\left(i\right)}\!+\!\theta_t\epsilon}},\\\]
\[\small
\left(A^4_t\right)^{\left(i\right)} \!=\! \hat{\eta}_t^{\left(i\right)}\sqrt{{\sigma_t^{\left(i\right)}}^2\!+\!\epsilon}\frac{\left(1\!-\!\theta_t\right)g_t^{\left(i\right)}}{\sqrt{v_t^{\left(i\right)}\!+\!\epsilon}}\frac{\beta m_{t-1}^{\left(i\right)}}{\sqrt{\theta_tv_{t-1}^{\left(i\right)}\!+\!\theta_t\epsilon}}\frac{\sqrt{1\!-\!\theta_t}g_t^{\left(i\right)}}{\sqrt{v_t^{\left(i\right)}\!+\!\epsilon}\!+\!\sqrt{\theta_tv_{t-1}^{\left(i\right)}\!+\!\theta_t\epsilon}}\frac{\sqrt{1\!-\!\theta_t}\sqrt{{\sigma_t^{\left(i\right)}}^2\!+\!\epsilon}}{\sqrt{\hat{v}_t^{\left(i\right)}\!+\!\epsilon} \!+\!\sqrt{\theta_tv_{t-1}^{\left(i\right)}\!+\!\theta_t\epsilon}},\\\]
\[\small
\left(A^5_t\right)^{\left(i\right)} \!=\! \hat{\eta}_t^{\left(i\right)}\sqrt{{\sigma_t^{\left(i\right)}}^2\!+\!\epsilon}\frac{\left(1\!-\!\theta_t\right)\sqrt{\epsilon}}{\sqrt{v_t^{\left(i\right)}\!+\!\epsilon}}\frac{\beta m_{t-1}}{\sqrt{\theta_tv_{t-1}^{\left(i\right)}\!+\!\theta_t\epsilon}}\frac{\sqrt{1\!-\!\theta_t}\sqrt{\epsilon}}{\sqrt{v_t^{\left(i\right)}\!+\!\epsilon}\!+\!\sqrt{\theta_tv_{t-1}^{\left(i\right)}\!+\!\theta_t\epsilon}}\frac{\sqrt{1\!-\!\theta_t}\sqrt{{\sigma_t^{\left(i\right)}}^2\!+\!\epsilon}}{\sqrt{\hat{v}_t^{\left(i\right)}\!+\!\epsilon} \!+\!\sqrt{\theta_tv_{t-1}^{\left(i\right)}\!+\!\theta_t\epsilon}}.
\]
Then, via the same proof as Lemma \ref{Mt}, the following equation holds
\begin{equation}\label{cross-gradient-them2}
\small
\begin{split}
\mathbb{E}\langle\nabla f\left(\hat{x}_t\right),\hat{\Delta}_t\rangle
&=\! \frac{1}{N}\!\sum_{i=1}^N \mathbb{E}\langle\nabla f\left(\hat{x}_t\right),\Delta_t^{\left(i\right)} \rangle \!
=\! \frac{1}{N}\!\sum_{i=1}^N \frac{\beta\alpha_t}{\sqrt{\theta_t}\alpha_{t-1}}\mathbb{E}\langle\nabla f\left(\hat{x}_t\right),\Delta_{t-1}^{\left(i\right)}\rangle\! +\! \mathbb{E}\langle\nabla f\left(\hat{x}_t\right),\Delta_t^{\left(i\right)} \!-\! \frac{\beta\alpha_t}{\sqrt{\theta_t}\alpha_t}\Delta_{t-1}^{\left(i\right)}\rangle\\
&= \frac{1}{N}\sum_{i=1}^N \frac{\beta\alpha_t}{\sqrt{\theta_t}\alpha_{t-1}}\left(\mathbb{E}\langle\nabla f\left(\hat{x}_{t-1}\right),\Delta_{t-1}^{\left(i\right)}\rangle + \mathbb{E}\langle\nabla f\left(\hat{x}_t\right) - \nabla f\left(\hat{x}_{t-1}\right),\Delta_{t-1}^{\left(i\right)}\rangle\right)\\
&\qquad + \mathbb{E}\langle\nabla f\left(\hat{x}_t\right),-\left(1-\beta\right)\hat{\eta}_t^{\left(i\right)}g_t^{\left(i\right)}\rangle + \mathbb{E}\langle\nabla f\left(\hat{x}_t^{\left(i\right)}\right),\left(A^1_t\right)^{\left(i\right)}\rangle +
\mathbb{E}\langle\nabla f\left(\hat{x}_t^{\left(i\right)}\right),\left(A^2_t\right)^{\left(i\right)}\rangle\\
&\qquad +\mathbb{E}\langle\nabla f\left(\hat{x}_t^{\left(i\right)}\right),\left(A^3_t\right)^{\left(i\right)}\rangle+\mathbb{E}\langle\nabla f\left(\hat{x}_t^{\left(i\right)}\right),\left(A^4_t\right)^{\left(i\right)}\rangle+\mathbb{E}\langle\nabla f\left(\hat{x}_t^{\left(i\right)}\right),\left(A^5_t\right)^{\left(i\right)}\rangle.
\end{split}
\end{equation}
For the first term in Eq.~\eqref{cross-gradient-them2}, it holds that
\begin{align*}
\small
&\frac{1}{N}\sum_{i=1}^N\frac{\beta\alpha_t}{\sqrt{\theta_t}\alpha_{t-1}}\left(\mathbb{E}\langle\nabla f\left(\hat{x}_{t-1}\right),\Delta_{t-1}^{\left(i\right)}\rangle + \mathbb{E}\langle\nabla f\left(\hat{x}_t\right) - \nabla f\left(\hat{x}_{t-1}\right),\Delta_{t-1}^{\left(i\right)}\rangle\right)\\
&\leq \frac{1}{N}\sum_{i=1}^N\frac{\beta\alpha_t}{\sqrt{\theta_t}\alpha_{t-1}}\left(\mathbb{E}\langle\nabla f\left(\hat{x}_{t-1}\right),\Delta_{t-1}^{\left(i\right)}\rangle + L\mathbb{E} \|\hat{x}_t -\hat{x}_{t-1}\| \| \Delta_{t-1}^{\left(i\right)}\|\right)
=\frac{\beta\alpha_t}{\sqrt{\theta_t}\alpha_{t-1}}M_{t-1}.
\end{align*}
The remain 6 terms in Eq.~\eqref{cross-gradient-them2} are the same as Lemma \ref{Mt}. Thus, we have
\[\small
  \begin{split}
      \mathbb{E}\langle\nabla f\left(\hat{x}_t\right), \hat{\Delta_t}\rangle \leq \frac{\beta\alpha_t}{\sqrt{\theta_t}\alpha_{t-1}}M_{t-1} + C_2\left(1-\theta_t\right) - \frac{1}{N}\sum_{i=1}^N \frac{1-\beta}{2}\mathbb{E}[\|\nabla f\left(\hat{x}_t\right)\|^2_{\hat{\eta}_t^{\left(i\right)}}].
  \end{split}
\]
Let
$\small N_t \!=\! \frac{L\left(2-\delta_g\right)}{N^2}\sum\limits_{i=1}^N\sum\limits_{j=1}^N\mathbb{E} \|\Delta_{t}^{\left(i\right)}\|\|\Delta_{t}^{\left(j\right)}\|
\!+\!\frac{L\left(2-\delta_g\right)}{N^2}\sum\limits_{i=1}^N\sum\limits_{j=1}^N\mathbb{E}\|\Delta_{t}^{\left(i\right)}\|\|e_{t}^{\left(j\right)}\|\!+\!\frac{2L}{N}\sum\limits_{i=1}^N\mathbb{E}\|\Delta_t^{\left(i\right)}\| \!+\!C_2\left(1\!-\!\theta_t\right)$.
Similarly, we can acquire
\[\small
\begin{split}
\sum_{t=1}^T M_t
&\leq \frac{1}{\sqrt{C_1}\left(1-\sqrt{\gamma}\right)}\sum_{t=1}^TN_t - \frac{1-\beta}{2N}\mathbb{E}[\sum_{t=1}^T\sum_{i=1}^N\|\nabla f\left(\hat{x}_t\right)\|^2_{\hat{\eta}_t^{\left(i\right)}}]\\
&\small \!\leq\! \frac{1}{\sqrt{C_1}\left(1\!-\!\sqrt{\gamma}\right)}\left(\!\left(\!\frac{L\left(2\!-\!\delta_g\right)G^2\alpha^2}{\epsilon\delta_g}\!+\!C_2\theta\!\right)\!\sum_{t=1}^T\frac{1}{t}\!+\!\frac{4\delta_xLG\alpha\sqrt{T}}{\sqrt{\epsilon}}\!\right)\!\!-\! \frac{1\!-\!\beta}{2N}\sum_{i=1}^N\mathbb{E}\!\left[\sum_{t=1}^T\|\nabla f\left(\hat{x}_t\right)\|^2_{\hat{\eta}_t^{\left(i\right)}}\right].
\end{split}
\]
Hence, the proof is completed.
\end{proof}

\begin{proof}[Proof of Theorem \ref{T3}]
By using the gradient Lipschitz continuity of $f$, it holds that
\[\small
\begin{split}
&\mathbb{E} f\left(\tilde{x}_{t+1}\right) \leq \mathbb{E} \left(f\left(\tilde{x}_t\right) + \langle\nabla f\left(\tilde{x}_t\right),\hat{\Delta}_t\rangle + \frac{L}{2}\|\hat{\Delta}_t\|^2\right)\\
& = \mathbb{E} \left[ f\left(\tilde{x}_t\right) + \langle\nabla f\left(\hat{x}_t\right),\hat{\Delta}_t\rangle + \frac{L}{2}\|\hat{\Delta}_t\|^2 + \langle\nabla f\left(x_t\right) - \nabla f\left(\hat{x}_t\right),\hat{\Delta}_t\rangle + \langle \nabla f\left(\tilde{x}_t\right) - \nabla f\left(x_t\right),\hat{\Delta}_t\rangle \right]\\
&\leq \mathbb{E} f\left(\tilde{x}_t\right)+ M_t.
\end{split}
\]
Taking summation over both sides of the above inequality, it holds that
\[\small
\begin{split}
    &f^* \leq \mathbb{E}f\left(\tilde{x}_{T+1}\right) \leq f\left(x_1\right)+ \sum_{t=1}^T M_t \\
    &\leq \frac{1}{\sqrt{C_1}\left(1\!-\!\sqrt{\gamma}\right)}\left(\left(\frac{L\left(2\!-\!\delta_g\right)G^2\alpha^2}{\epsilon\delta_g}\!+\!C_2\theta\right)\sum_{t=1}^T\frac{1}{t}\!+\!\frac{4\delta_xLG\alpha\sqrt{T}}{\sqrt{\epsilon}}\right)\!-\! \frac{1\!-\!\beta}{2N}\sum_{i=1}^N\mathbb{E}\left[\sum_{t=1}^T\|\nabla\! f\left(\hat{x}_t\right)\|^2_{\hat{\eta}_t^{\left(i\right)}}\right].
\end{split}
\]
Arranging the terms, it holds that
\[\small
\begin{split}
&\mathbb{E}[\|\nabla f\left(\hat{x}_\tau\right)\|^2] \leq \frac{\sqrt{G^2+\epsilon d}}{\alpha\sqrt{T}N}\sum_{i=1}^N\mathbb{E}\left[\sum_{t=1}^T\|\nabla f\left(\left(\hat{x}_t\right)\right)\|_{\hat{\eta_t}^{\left(i\right)}}^2\right],\\
&\leq \frac{2\sqrt{G^2\!+\!\epsilon d}}{\left(1\!-\!\beta\right)\alpha\sqrt{T}}\left\{f\left(x_1\right)\!-\!f^* \!+\! \frac{1}{\sqrt{C_1}\left(1\!-\!\sqrt{\gamma}\right)}\left(\left(\frac{L\left(2\!-\!\delta_g\right)G^2\alpha^2}{\epsilon\delta_g}\!+\!C_2\theta\right)\sum_{t=1}^T\frac{1}{t}\!+\!\frac{4\delta_xLG\alpha\sqrt{T}}{\sqrt{\epsilon}}\right)\right\}\\
& = \frac{C_8+C_9\sum_{t=1}^T\frac{1}{t}}{\sqrt{T}} + C_{10},
\end{split}
\]
where $\small C_8\!=\!\frac{2\sqrt{G^2+\epsilon d}}{\left(1\!-\!\beta\right)\alpha}\left(f\left(x_1\right)\!-\!f^*\right)$,$\small C_9\!=\!\frac{2\sqrt{G^2+\epsilon d}}{\left(1\!-\!\beta\right)\alpha\sqrt{C_1}\left(1\!-\!\sqrt{\gamma}\right)}\left(\frac{L\left(2\!-\!\delta_g\right)G^2\alpha^2}{\epsilon\delta_g}\!+\!C_2\theta\right)$,
$\small C_{10}\!=\!\frac{8\sqrt{G^2+\epsilon d}\delta_xLG}{\sqrt{C_1}\left(1\!-\!\sqrt{\gamma}\right)\sqrt{\epsilon}\left(1\!-\!\beta\right)}$,
respectively.

Hence, the proof is completed.
\end{proof}

\begin{proof}[proof of corollary \ref{C3}]
Given iteration $T$, let $\small \alpha_t \!=\! \frac{\alpha}{\sqrt{T}}$ and $\small \theta_t \!=\! 1 \!-\! \frac{\theta}{T}$.
Lemma \ref{multi_Mt} updates to
\[\small
\sum_{t=1}^T M_t \leq  \frac{1}{\sqrt{C_1}\left(1-\sqrt{\gamma}\right)}\left(\frac{L\left(2-\delta_g\right)G^2\alpha^2}{\epsilon\delta_g}+C_2\theta +\frac{2\delta_xLG\alpha\sqrt{T}}{\sqrt{\epsilon}}\right)- \frac{1-\beta}{2N}\sum_{i=1}^N\mathbb{E}\left[\sum_{t=1}^T\|\nabla f\left(\hat{x}_t\right)\|^2_{\hat{\eta}_t^{\left(i\right)}}\right].
\]
Then, we have
\[\small
\begin{split}
&\mathbb{E}[\|\nabla f\left(\hat{x}_\tau\right)\|^2] \leq \frac{\sqrt{G^2+\epsilon d}}{\alpha\sqrt{T}N}\sum_{i=1}^N\mathbb{E}\left[\sum_{t=1}^T\|\nabla f\left(\left(\hat{x}_t\right)\right)\|_{\hat{\eta_t}^{\left(i\right)}}^2\right]
 \leq \frac{C_8'}{\sqrt{T}} + C_{10}',
\end{split}
\]
where $\small C_8'=\frac{2\sqrt{G^2+\epsilon d}}{\left(1-\beta\right)\alpha}\left(f\left(x_1\right)-f^* +\frac{1}{\sqrt{C_1}\left(1-\sqrt{\gamma}\right)}\left(\frac{L\left(2-\delta_g\right)G^2\alpha^2}{\epsilon\delta_g}+C_2\theta\right)\right)$,
$\small C_{10}'=\frac{4\sqrt{G^2+\epsilon d}\delta_xLG}{\sqrt{C_1}\left(1-\sqrt{\gamma}\right)\sqrt{\epsilon}\left(1-\beta\right)}$.

Hence, by bounding $\small \frac{C_8'}{\sqrt{T}} + C_{10}' \leq C_{10}' + \xi$, we obtain the desired result.
\end{proof}

\section{Experiments}
\label{experiments}

In this section, we apply the proposed algorithms to train deep neural networks including VGG16 network \cite{simonyan2014very} and ResNet-101 \cite{he2016deep} on CIFAR10 \cite{krizhevsky2009learning}  and CIFAR100 \cite{krizhevsky2009learning} datasets, respectively. Below, we first describe the implementation details and then show the experimental results.

\subsection{Implementation Details}

We evaluate the effectiveness of the proposed algorithms via training ResNet-101 \cite{he2016deep} on the CIFAR100 dataset. The CIFAR100 dataset contains 100 classes. In each class, there are $600$ color images with the size of $32\times 32$, among which 500 images serve as training images and the rest are testing images. We train ResNet-101 with 8 workers, each worker will take a batch gradient with batch size 16, so the batch size for one optimization step is 128. We set the total number of training epochs as $200$. In addition, we add $\ell_2$  regularization with a scalar $5e-4$ as the weight decay term. Input images are all down-scaled to 1/8 of their original sizes after 100 convolutional layers and then fed into a fully-connected layer for the 100-class classifications. The output channel numbers of 1-11 conv layers, 12-23 conv layers, 24-92 conv layers, and 93-100 conv layers are 64, 128, 256, and 512, respectively.
Also, we train VGG16 network \cite{simonyan2014very} on the CIFAR10 dataset. CIFAR10 contains 10 classes of $32\times 32$ images, including $50,000$ training images and $10,000$ testing images. We train VGG with 8 workers and each worker takes 16 samples to estimate gradient for $78,200$ iterations with $\ell_2$ regularization which is the same as the previous regularization term. The VGG16 contains 13 convolutional layers, and 3 fully-connected layers with 4096,4096 and 10 neurons per layer. The output channel numbers of 1-2 conv layers, 3-4 conv layers, 5-7 layers, 8-13 layers are 64, 128, 256, 512, respectively.

\begin{wrapfigure}{r}{0.5\textwidth}
\vspace{-1.2cm}
\begin{minipage}{1\linewidth}
\small
\begin{table}[H]
    \centering
    \begin{tabular}{c|>{ }c|c|c}
        Method&Test Acc&Comm&Size\\
        \hline
         QADAM& $77.94\pm0.44\%$ & 162.9 & 162.9\\
         QADAM& $77.44\pm 0.40 \%$ & 15.27 & 162.9\\
         QADAM& $78.36 \pm 0.37\% $ & 10.18 & 162.9\\
         TernGrad\cite{wen2017terngrad} & $76.69 \pm 0.16 \%$ & 162.9 & 162.9\\
         TernGrad\cite{wen2017terngrad} & $76.62\pm 0.38\%$ & 15.27 & 162.9\\
         TernGrad\cite{wen2017terngrad} & $76.00\pm 0.15\%$ & 10.18 & 162.9\\
         \cite{zheng2019communication} & $76.28 \pm 0.24 \%$ & 162.9 & 162.9\\
         \cite{zheng2019communication} & $76.20 \pm 0.47\%$ & 15.27 & 162.9\\
         \cite{zheng2019communication} & $76.06 \pm 0.38\%$ & 10.18 & 162.9\\

         QADAM& $78.10 \pm 0.33\%$ & 162.9 & 81.44\\
         QADAM& $78.08 \pm 0.45\%$ & 162.9 & 40.72\\
         WQuan& $77.00 \pm 0.57 \%$ & 162.9 & 81.44\\
         WQuan& $76.68 \pm 0.42\%$ & 162.9 & 40.72\\

         QADAM& $78.18 \pm 0.43\%$ & 15.27 & 81.44\\
         QADAM& $78.32 \pm 0.47\%$ & 10.18 & 81.44\\
         QADAM& $78.19 \pm 0.28\%$ & 15.27 & 40.72\\
         QADAM& $78.04 \pm 0.39\%$ & 10.18 & 40.72\\

    \end{tabular}
    \caption{Test Accuracy for CIFAR100. "Comm" stands for communication cost and unit is MB per iteration. "Size" stands for model size and unit is MB. WQuan means quantizing weight after training.}
    \label{table1}
\end{table}
\end{minipage}
\vspace{-0.8cm}
\end{wrapfigure}
Besides, in all experiments we choose $\beta$ as 0.99, $\theta$ as 0.999, and $\epsilon$ as $1e-5$. Since it is natural to choose the exponential decay strategy on learning rate, we choose to reduce $\alpha$ by half every 50 epochs instead of using ${\alpha}/{\sqrt{t}}$ as \cite{reddi2019convergence}. We choose the starting learning rate as 0.001. The value is chosen based on grid search on the set $\{0.01, 0.001, 0.0001\}$ based on the accuracy of the full precision setting.
For gradient quantization, we use function $\small Q_g(g) = \|g\|_\infty \arg\min_{\hat{g}\in \mathcal{G}^d} \|g/\|g\|_\infty-\hat{g}\|$, where $\small \mathcal{G} = \{-1, \cdots, -2^{-k_g}, 0, 2^{-k_g}, 2^{-k_g+1},\cdots, 1\}$. For weight quantization, we use function $\small Q_x(x) = 0.5\times\arg\min_{\hat{x}\in \mathcal{X}} \|2x-\hat{x}\|$, where $\small \mathcal{X} = \left\{-1,\cdots, -\frac{1}{2^{k_x}},0,\frac{1}{2^{k_x}}, \frac{2}{2^{k_x}},\cdots, 1\right\}$.

We compared our method with TernGrad \cite{wen2017terngrad} and Zheng et. al\cite{zheng2019communication} for gradient quantization, where the learning rate in these two methods is 0.1  {which generated by grid search in $\{0.1,0.05,0.01\}$}. For weight quantization, we compare with the result which quantizes the final model directly {named WQuan in tables}.

\subsection{Results Illustration}

 {In this section, we will illustrate the results of training ResNet-101 on the CIFAR100 dataset and training VGG16 on the CIFAR10 dataset, respectively. Two tables show the test accuracy after 200 epochs of training, where the first column represents training methods, the second column represents the test accuracy, the third column represents bits required for gradient communication, and the last column represents the bits to save a model. As for the same method, we can set different $k_x$ and $k_g$, and we can get a different number of bits needed for gradients and weights.}

\subsubsection{Results of Training ResNet-101 on the CIFAR100}

 {
Figure \ref{res1}  and Table \ref{table1} show the result of training ResNet-101 on the CIFAR100 dataset. When the algorithm involves gradient quantization, we compare our algorithm with Zheng et al.\cite{zheng2019communication} and TernGrad \cite{wen2017terngrad} with a different number of communication bits, which has been shown in the left figure of Figure \ref{res1} and the first 9 lines in Table \ref{table1}. It can be shown that even with gradient quantization, our algorithm can outperform TernGrad\cite{wen2017terngrad} and Zheng et al. \cite{zheng2019communication}. The middle figure in Figure \ref{res1} shows the result of using weight quantization only. Row 10-12 in Table \ref{table1} shows the comparison results between quantizing weight during training and after training, which shows when quantizing weight during the training process can achieve higher test accuracy. The right figure in Figure \ref{res1} and the last 4 rows demonstrate the results of combining gradient quantization and weight quantization. It shows even though we shrink model size into 1/4 of its original size and gradient size into 1/16 of its original size, respectively, it can still give comparable results.
}

\medskip

\subsubsection{Results of Training VGG16 on the CIFAR10}

\begin{wrapfigure}{r}{0.5\textwidth}
\vspace{-1.2cm}
\begin{minipage}{1\linewidth}
\small
\begin{table}[H]
    \centering
    \begin{tabular}{c|>{ }c|c|c}
        Method&Test Acc&Comm&Size\\
        \hline
         QADAM& $91.21 \pm 0.24\%$ & 512.3 & 512.3\\
         QADAM& $91.29 \pm 0.26\%$ & 48.03 & 512.3\\
         QADAM& $90.64 \pm 0.18\%$ & 32.02 & 512.3\\
         TernGrad\cite{wen2017terngrad} & $90.60 \pm 0.13$\% & 512.3 & 512.3\\
         TernGrad\cite{wen2017terngrad} & $90.60 \pm 0.32\%$ & 48.03 & 512.3\\
         TernGrad\cite{wen2017terngrad} & $89.91 \pm 0.28\%$ & 32.02 & 512.3\\
         \cite{zheng2019communication} & $91.51 \pm 0.22\%$ & 512.3 & 512.3\\
         \cite{zheng2019communication} & $91.38 \pm 0.36\%$ & 48.03 & 512.3\\
         \cite{zheng2019communication} & $90.93 \pm 0.10\%$ & 32.02 & 512.3\\

         QADAM& $91.06 \pm 0.31 \% $& 512.3 & 256.2\\
         QADAM& $91.21 \pm 0.41\%$ & 512.3 & 128.1\\
         WQuan& $91.21 \pm 0.24\%$ & 512.3 & 256.2\\
         WQuan& $90.18 \pm 0.39\%$ & 512.3 & 128.1\\

         QADAM& $90.87 \pm 0.27\%$ & 48.03 & 256.2\\
         QADAM& $91.20 \pm 0.22\%$ & 32.02 & 256.2\\
         QADAM& $91.16 \pm 0.26\%$ & 48.03 & 128.1\\
         QADAM& $90.48 \pm 0.36\%$ & 32.02 & 128.1\\

    \end{tabular}
    \caption{Test Accuracy for CIFAR10. "Comm" stands for communication cost and unit is MB per iteration. "Size" stands for model size and unit is MB.WQuan means quantizing weight with the first model.}
    \label{table2}
\end{table}
\end{minipage}
\vspace{-0.3cm}
\end{wrapfigure}
Figure \ref{res2} and Table \ref{table2} show the result of training VGG16 on the CIFAR10 dataset. The left figure in Figure \ref{res2} and the first 9 rows in Table ref{table2} show results of gradient quantization comparison among our algorithm, TernGrad\cite{wen2017terngrad} and Zheng et al.\cite{zheng2019communication}. Our results and Zheng et al.\cite{zheng2019communication} can achieve similar performance, but TernGrad \cite{wen2017terngrad} gets worse performance due to noise to ensure unbiasedness. When considering weight quantization, shown in the middle figure of Figure \ref{res2} and row 10-13 in Table \ref{table2}, performance is similar between quantizing during training or after training. Further, as it has shown in the last 4 rows with different sizes of the model and different gradient quantization, our method can still achieve high accuracy compared to the full precision version (first row in Table \ref{table2}).

\begin{figure}[htb!]
\small
\begin{center}
\begin{tabular}{ccc}
\!\!\!\!\!\!\includegraphics[width=4.8cm]{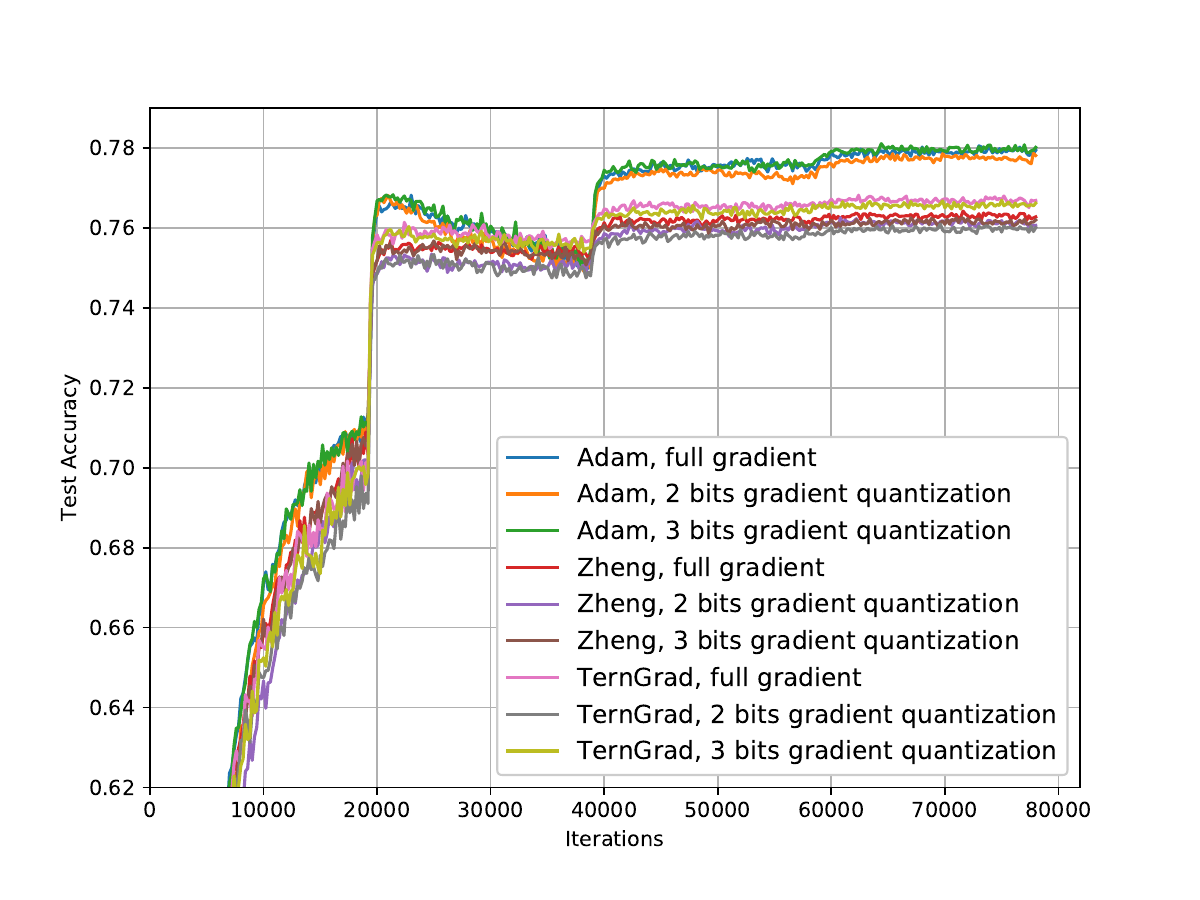} \!\!\!\!\!\!\!\!\!\!\!\!\!\!\!\!
& \includegraphics[width=4.8cm]{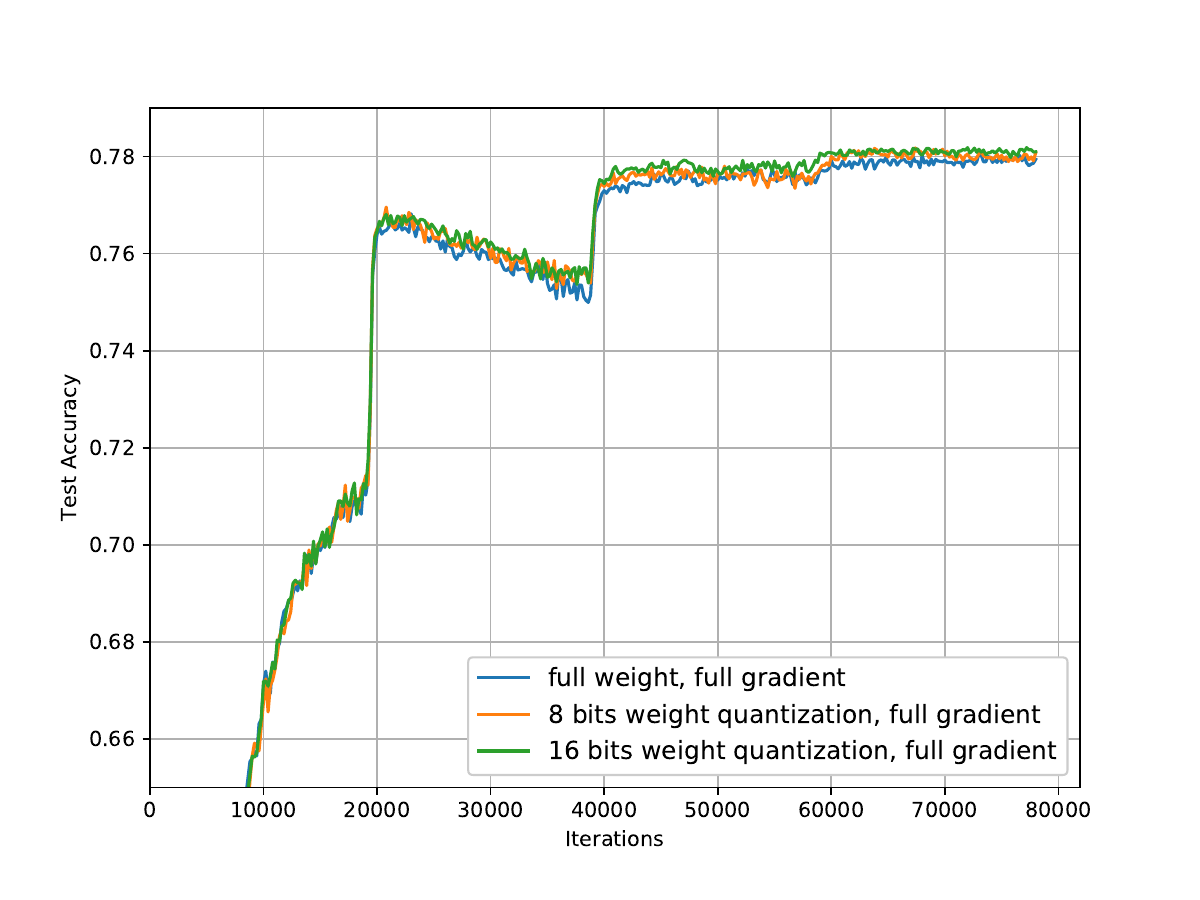} \!\!\!\!\!\!\!\!\!\!\!\!\!\!\!\!
&\includegraphics[width=4.8cm]{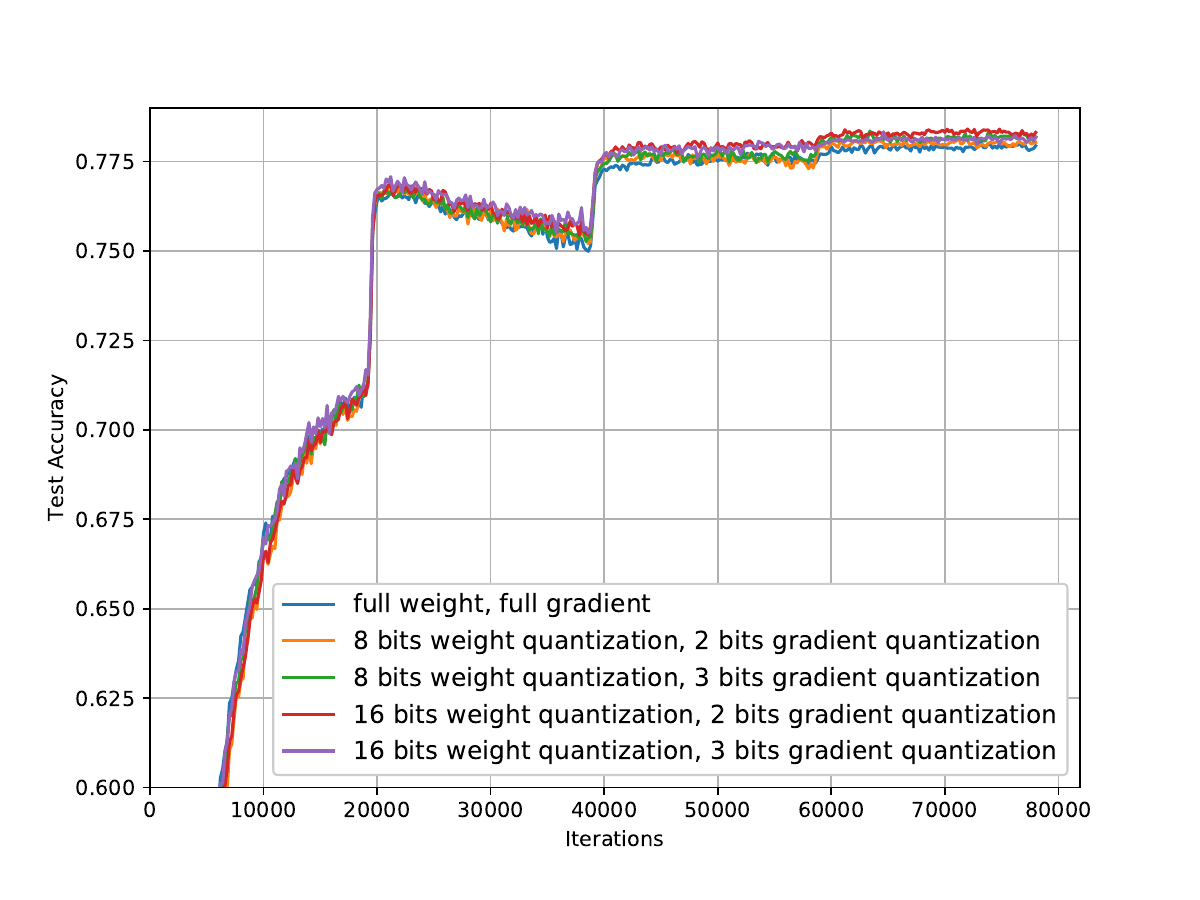}
\end{tabular}
\vspace{-0.5cm}
\caption{ {Results for Training ResNet-101 on CIFAR100.}}
\label{res1}
\end{center}
\vspace{-0.4cm}
\begin{center}
\begin{tabular}{ccc}
\!\!\!\!\!\!\includegraphics[width=4.8cm]{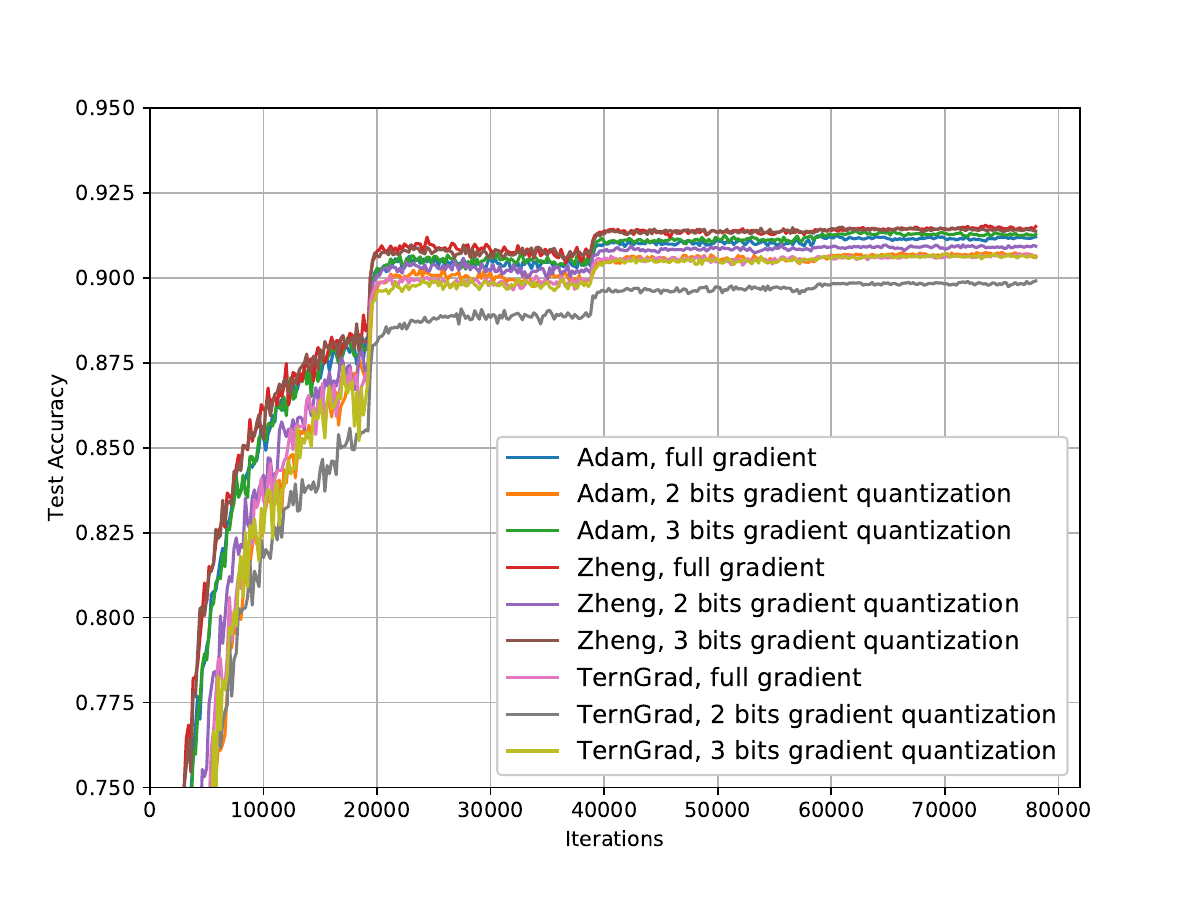}\!\!\!\!\!\!\!\!\!\!\!
& \includegraphics[width=4.8cm]{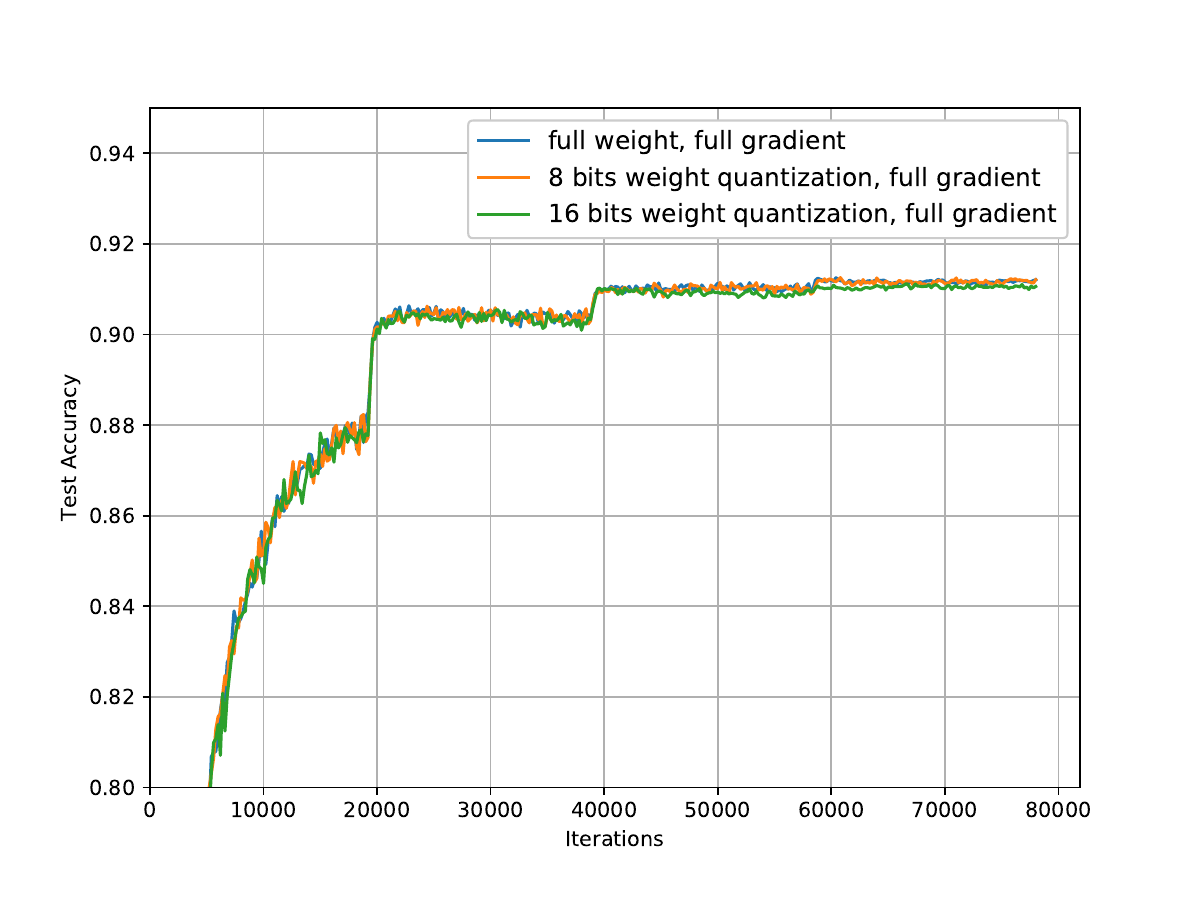} \!\!\!\!\!\!\!\!\!\!\!\!
& \includegraphics[width=4.8cm]{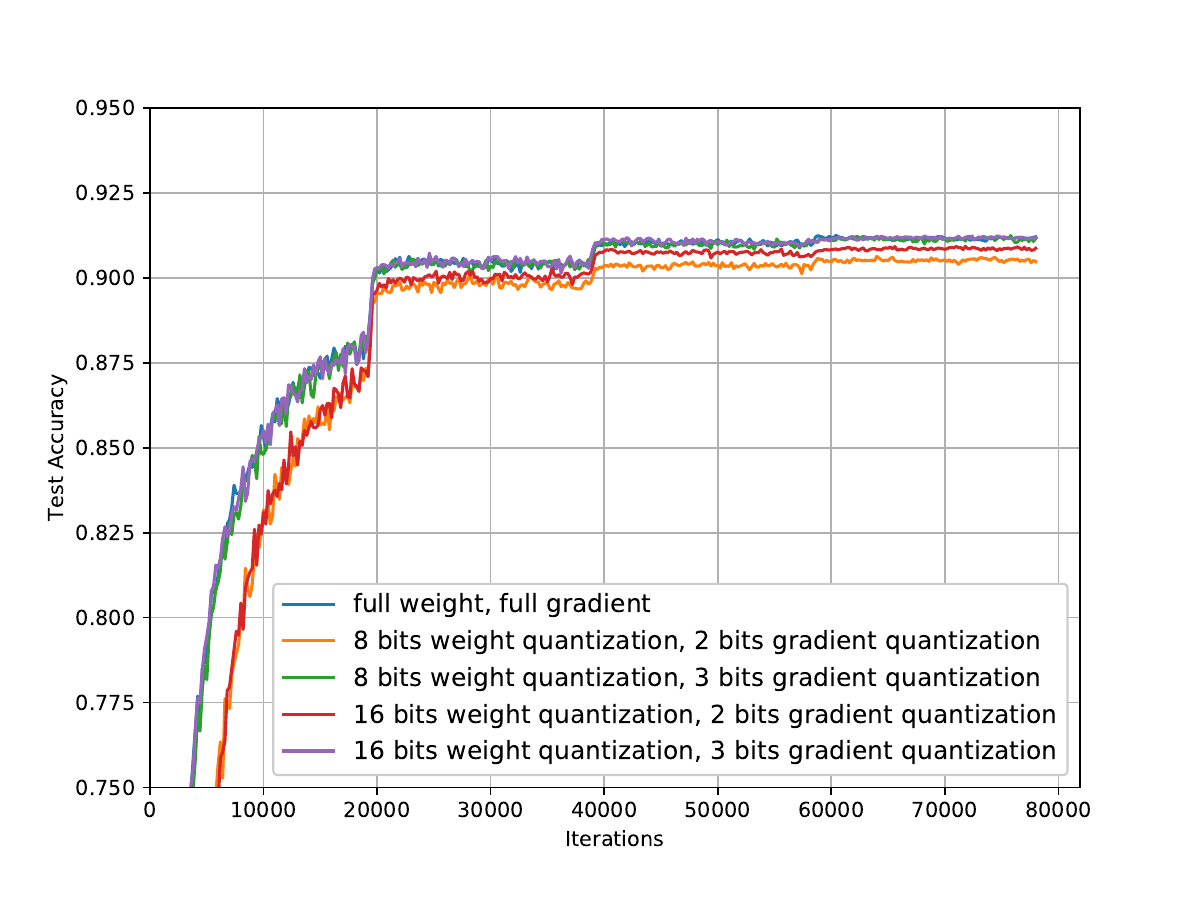}\\
\end{tabular}
\vspace{-0.5cm}
\caption{ {Results for Training VGG16 on CIFAR10.}}
\label{res2}
\end{center}
\vspace{-0.5cm}
\end{figure}

\section{Conclusions}

To accelerate the training process of deep learning models, we proposed distributed Adam with weight quantization, gradient quantization, and the error-feedback technique in the parameter-server model. Through capitalizing on the two schemes of weight quantization and gradient quantization, the communication cost between the server and works can be significantly alleviated. In addition, the proposed error-feedback technique can suppress the bias caused by the gradient quantization step, thereby making the proposed algorithms more efficient. We further established the convergence rates of the proposed algorithms in the nonconvex stochastic setting and showed that quantized Adam with the error-feedback technique converges to the neighborhood of a stationary point under both the single-worker and multi-worker modes. Moreover, we applied the proposed algorithms to train VGG16 on the CIFAR10 dataset and ResNet-101 on the CIFAR100 dataset, respectively. The experiments demonstrate the efficacy of the proposed algorithms.

\bibliographystyle{ACM-Reference-Format}
\bibliography{sample-base}

\end{document}